\documentclass[12pt]{article}

\usepackage{url}
\usepackage{hyperref}

\usepackage{comment}

\usepackage{appendix}
\usepackage{cancel} 
\usepackage{subcaption}
\usepackage[normalem]{ulem}
\usepackage{amsmath}
\usepackage{amsthm}
\usepackage{graphicx,psfrag,epsf, hyperref}
\usepackage{enumerate}
\usepackage{natbib}
\usepackage{booktabs}
\usepackage{rotating}
\usepackage[table]{xcolor}

\usepackage{fontenc}
\usepackage[T1]{fontenc}
\usepackage{xcolor}
\usepackage{textcomp}
\usepackage{algorithm}
\usepackage{algpseudocode}
\usepackage{amsmath, amssymb, bm}
\usepackage{relsize}
\usepackage{enumitem}
	
\usepackage{amsthm}
\theoremstyle{definition}
\newtheorem{theorem}{Theorem}[section]

\newtheorem{lemma}[theorem]{Lemma}

\theoremstyle{remark}
\newtheorem{remark}[theorem]{Remark}
\newtheorem{assumption}{Assumption}
\newcommand{\blind}{0}

\begin{document}

\def\spacingset#1{\renewcommand{\baselinestretch}%
{#1}\small\normalsize} \spacingset{1}


\if0\blind
{
  \title{\bf Automatic knot selection in smooth additive models}
  \author{Nicolás Carrizosa\thanks{Corresponding author: \href{mailto:carrizosanicolas@uniovi.es}{carrizosanicolas@uniovi.es}}\\
    { \small Department of Statistics, Operations Research and Mathematics Education, Universidad de Oviedo} \\
    \vspace*{0.1cm}
    Vanesa Guerrero\\
    {\small Department of Statistics, Universidad Carlos III de Madrid} \\
    \vspace*{0.1cm}
    María Durbán\\
    {\small Department of Statistics, Universidad Carlos III de Madrid}
    }
  \maketitle
} \fi

\if1\blind
{
  \bigskip
  \bigskip
  \bigskip
  \begin{center}
    {\LARGE\bf Title}
\end{center}
  \medskip
} \fi

\bigskip
\begin{abstract}
B-spline regression constitutes a widely used framework for nonparametric modeling. The performance of this methodology depends on specifying the number and placement of changepoints, known as knots, prior to the estimation process.  Such knot sequence determines the dimension of the B-spline basis used to represent the regression function and the number of coefficients to be estimated. Therefore, the knots' choice affects the model's flexibility, influencing its smoothness and goodness-of-fit. Traditionally, this problem has been addressed either by explicitly selecting knots, via knot-selection algorithms, or by regularization methods, such as P-splines, which automatically tune the regressor's smoothness. The latter have become the standard in generalized additive models (GAMs). In contrast, knot-selection techniques, frequently neglected because of computational or modeling limitations, provide certain advantages which can be valuable in some contexts. 
In this work, we introduce a novel explicit knot-selection technique for GAMs based on an extension of the adaptive splines (A-splines) knot selection methodology, combined with a customized Fellner-Schall scheme for tuning the associated parameters. Our approach is evaluated on various synthetic and real datasets and compared with P-splines and state-of-the-art knot-selection techniques. The results indicate comparable performance, while producing models built on a substantially smaller number of basis elements.

\end{abstract}

\noindent%
{\it Keywords:}   Nonparametric regression, Generalized additive models, Knot selection, Sparse models

\section{Introduction}
\label{sec:1}

Modeling complex phenomena through nonparametric methods is a central topic in contemporary statistics. Among the most widely used approaches, B-spline regression provides a flexible and interpretable framework for smooth function estimation and constitutes the foundation of many modern smoothing techniques. Such methodology requires specifying a set of changepoints, or knots, which determine the family of basis functions used for curve fitting, both in terms of their number and location \citep{de1978practical}. The resulting model is highly sensitive to this choice, as it directly controls the flexibility and smoothness of the fitted curve. An excessively large or poorly located set of knots may lead to overfitting, whereas an insufficient number of knots may oversmooth important features of the response, both of which would compromise the model's generalizability and derived insight. Consequently, knot selection remains an active research topic in several settings, including univariate regression \citep{goepp2025spline}, additive models \citep{kim2026simultaneous, thielmann2025enhancing}, generalized regression models \citep{Dimitrova2023}, and functional data smoothing \citep{magistris2025adaptive}.

In practice, smoothing has mainly been addressed either through explicit knot-selection or regularization techniques. The former seeks to identify a parsimonious set of knots capable of adequately representing the underlying response structure.  In contrast, regularization approaches fix a relatively large set of knots and control smoothness through a penalty term on the spline coefficients. On the one hand, penalized splines (P-splines) \citep{eilers1996flexible}, which combine B-spline regression with a smoothness penalty, have become one of the standard approaches for smooth regression. In this framework, smoothing is reduced to selecting appropriate penalty parameters, typically through generalized cross-validation or automatic procedures such as the Fellner--Schall algorithm \citep{wood2017generalized}. Knot-selection techniques, on the other hand, offer certain advantages over regularization approaches (\cite{goepp2025spline}). Namely, (i) the knots provide contextual insight into the response's nature by identifying regions in which the response exhibits behavioral changes, (ii) they showcase adaptability to curvature or abrupt changes in its behavior, and (iii) they yield models that are built on a reduced number of basis functions. 

There are contexts in which explicitly seeking the simplest model is key for further applicability, and it is precisely this idea which motivates the present work: leveraging simplicity in terms of the number of basis elements, which is what we refer to as model \textit{sparsity}, with predictive performance. For instance, within surrogate modeling , estimating functions with a low number of parameters is convenient when later embedding them as either the objective function or constraints in an optimization model, since the dimension of the surrogate optimization problem critically conditions the computational requirements to solve it \citep{cuesta2025}. Another context in which this notion of sparsity might be key is numerical methods, such as in the Weighted Isogeometric Collocation based on Spline Projectors method \citep{GIUST2022114554}: {``[\ldots] the choice of the collocation nodes is crucial to ensure stability and good approximation properties [\ldots]''}. These examples illustrate the principal advantage provided by the presented paradigm: using sparsely estimated models, that is, through a low number of basis elements, potentially alleviates subsequent computational demand.

A variety of methods in the literature aim at performing knot selection in univariate regression. On the one hand, among the most prominent fixed-set knot selection methods, that is, those selecting knots from a predefined set, \cite{osborne1998knot,yuan2013adaptive} propose LASSO penalization, whereby knots associated with non-significant basis functions are removed. Another well-known approach is the Multivariate Adaptive Regression Splines (MARS) method of \cite{friedman1991multivariate}, which employs a stepwise forward/backward knot selection procedure. On the other hand, a different paradigm is that of free-knot splines, in which the search space of the knots is not constrained to a fixed set. Among them, \cite{denison1998automatic, green1995reversible} propose the use of a Bayesian approach, the Bayesian Adaptive Regression Splines (BARS), which employs a Markov Chain Monte Carlo procedure to estimate the model's coefficients and knots. Additionally, several purely heuristic optimization procedures have also been proposed in \cite{jupp1978approximation, spiriti2013knot}.

When extending the univariate regression setting to additive models, the smoothing problem must be addressed across all covariates simultaneously. This leads to a considerably more challenging estimation problem, as the estimated effect of each covariate depends on the simultaneous estimation of the remaining additive components. The aforementioned knot selection algorithms have been adapted to this setting: a LASSO-based approach along variable selection is proposed in \cite{kim2026simultaneous}; MARS and BARS have already been developed within the additive framework; and \cite{thielmann2025enhancing} introduces a heuristic free-knot approach. 
Most of them, however, present computational limitations arising from the \textit{curse of dimensionality}: simultaneously selecting sets of knots and/or parameters across multiple dimensions substantially increases the computational burden. As a result, their extension to additive models has attracted limited attention, and P-splines together with the Fellner--Schall algorithm for penalty parameter tuning have become the preferred method for fitting additive models \citep{de1978practical, mgcvR}.

Further, while P-splines have been successfully extended to generalized additive models (GAMs), there is currently no widely adopted knot-selection algorithm for this setting. Although several attempts have been made, they present important limitations. For example, the method of \cite{Dimitrova2023} relies on a backfitting algorithm, which does not estimate the covariate effects simultaneously and is restricted to B-spline bases of degrees 1, 2, and 3. Likewise, \cite{thielmann2025enhancing} discusses the problem but does not provide a method tailored to non-Gaussian responses.

Three knot-selection approaches are particularly relevant to develop our methodology: (i)~Adaptive splines (A-splines), proposed by \cite{goepp2025spline} for univariate non-linear regression, are a fixed-set knot-selection method. They start from a large set of initially equidistant knots and remove the least relevant ones using a surrogate of the $L_0$ pseudonorm penalty (\cite{frommlet2016adaptive, rippe2012visualization}). This approach has been shown to be considerably simpler and significantly faster than several previously proposed methods while achieving comparable predictive performance; (ii)  Geometrically Designed least squares splines with variable knots (GeDS), a free-knot paradigm apt for generalized non-linear models and single-term bivariate models developed in \cite{Dimitrova2023}. Their approach relies on a backfitting two-stage procedure based on Schoenberg's variation diminishing spline approximations.  Although applicable to GAM fitting, it inherits the limitations of backfitting \citep{hastie1986generalized}, as the covariate effects are not estimated simultaneously, which may compromise the quality of the resulting models; and (iii) The work of \cite{thielmann2025enhancing} proposes a free-knot approach that combines knot selection and penalty smoothing within the additive setting to achieve adaptive smoothing. Specifically, the sets of free knots and the penalty parameters are jointly optimized using the Particle Swarm Optimization (PSO) heuristic. Unlike the approach we develop in this paper, which aims to obtain sparse models, their framework performs knot selection to achieve adaptive smoothing without introducing numerous penalty parameters, as it is traditionally done in adaptive P-splines \citep{rodriguez2019estimation}.

An early attempt to extend A-splines beyond univariate regression can be found in \cite{seck2022adaptive}, where the authors develop an approach for Binomial and Poisson GAMs  based on piecewise-constant (degree-0) B-spline bases and a backfitting procedure, and illustrate it with two actuarial applications. While this represents an important first step, its applicability is limited to a narrow class of GAMs, motivating the development of a more general methodology.


In this work, we propose a new framework for automatic knot selection in generalized additive models: \textit{Automatic Knot Selection in Smooth Additive Models} (AKSSAM). Our methodology extends the A-splines paradigm to additive and generalized additive settings by combining explicit knot selection with automatic penalty parameter estimation through a tailored implementation of the Fellner--Schall algorithm. More specifically, knot selection is performed via an $L_0$-pseudonorm quadratic surrogate penalty approach, which is based on simultaneously and automatically optimizing the penalty parameters and the surrogate terms. Such task is achieved by developing a customized  Fellner--Schall scheme within an alternating optimization procedure. The resulting framework avoids multidimensional grid-search strategies for penalty parameter tuning and yields sparse smooth additive models in a computationally tractable manner. 
The proposed methodology therefore extends automatic knot-selection procedures from univariate regression settings to GAMs, providing an alternative to regularization-based GAM fitting when sparse and interpretable spline representations are desirable.

The remainder of his paper is organized as follows: Section \ref{sec:2} outlines B-spline regression and penalty parameter selection. In Section \ref{sec:3},  the proposed algorithm for automatic knot selection in GAMs, namely AKSSAM, is presented. Section \ref{sec:4} reports the computational experiments, including both simulation studies and real-data applications. Finally, Section \ref{sec:5} presents the conclusions and discusses directions for future research.

\section{Background and notation} \label{sec:2}
This section briefly reviews B-spline regression and the role of knot placement in smooth curve estimation (Section~\ref{subsec:2.1}), followed by P-splines and smoothing parameter estimation via the Fellner--Schall algorithm (Section~\ref{subsec:2.2}).


\subsection{B-spline regression} \label{subsec:2.1}


Consider the univariate regression model with continuous covariate $X$ and response $Y$,
\begin{equation}\label{eq:univariate non-linear model}
	y_i = f(x_i) + \varepsilon_i,\qquad i = 1,\ldots,n,
\end{equation}
where $f$ is an unknown smooth function, $\{(x_i,y_i)\}_{i=1}^n$ is the set of observations and $\varepsilon_i$ are independent Gaussian errors with zero mean. A common approach to model $f$ in \eqref{eq:univariate non-linear model} is  B-spline regression, which approximates $f$ by a linear combination of B-spline basis functions \citep{de1978practical}.

Given a non-decreasing sequence of real numbers $\mathbf{t} = \{t_1,\ldots, t_k\}$, a {B-spline} of degree $q \geq 0$,  denoted by $B_{q,l}$, is a piecewise polynomial function of degree $q$ recursively defined as follows:

$$ B_{0,l}(x) = \begin{cases}1, & \text{if}\; x\in [t_l,t_{l+1})\\ 0 & \text{otherwise}\end{cases}$$and
\begin{equation*}
	B_{q,l}(x) = \frac{x-t_l}{t_{l+q}-t_{l}}B_{q-1,l}(x) + \frac{t_{l+q+1}-x}{t_{l+q+1}-t_{l+1}}B_{q-1, l+1}(x),\;\textrm{if } q>0,
\end{equation*}
for $l = 1, \ldots, k-q-1$. The resulting basis functions $B_{q,l}$ have compact local support and are fully determined by the knots $\mathbf{t}$ and the spline degree $q$. 

From this point onward, we denote the B-spline basis functions simply by $\{B_l\}_{l=1}^{k-q-1}$, omitting the degree whenever no ambiguity arises. Let $x_{\min}$ and $x_{\max}$ denote the minimum and maximum observed values of $X$, respectively. We assume that the knot sequence $\mathbf{t}$ contains no repeated knots and is chosen so that $x_{\min}$ and $x_{\max}$ are the boundary knots. In addition, it contains $k-2q-2$ internal knots, collected in the vector $\mathbf{t}^*$, together with $q$ external knots to the left of $x_{\min}$ and $q$ external knots to the right of $x_{\max}$, yielding a total of $k$ knots. This knot configuration improves the approximation near the boundaries, ensuring that the observed covariate domain is covered with enough B-spline functions, and enables extrapolation beyond the observed covariate range.

Figure~\ref{fig: BBasis} shows two B-spline bases defined on the interval $[0,1]$ (i.e., $x_{\min}=0$ and $x_{\max}=1$) using different knot sequences: equally spaced (Figure~\ref{fig:BBasis_a}) and non-equally spaced (Figure~\ref{fig:BBasis_b}).


\begin{figure}[htbp]
	\centering
    
    \begin{subfigure}[b]{0.45\textwidth}
        \centering
        \includegraphics[width=\textwidth]{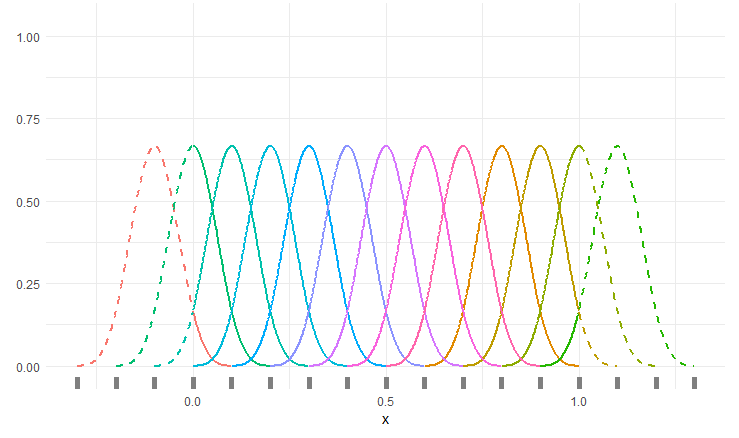}
        \caption{{B-spline} basis with {equally spaced} knots with $q = 3$.}
        \label{fig:BBasis_a}
    \end{subfigure}
    \hfill
    \begin{subfigure}[b]{0.45\textwidth}
        \centering
        \includegraphics[width=\textwidth]{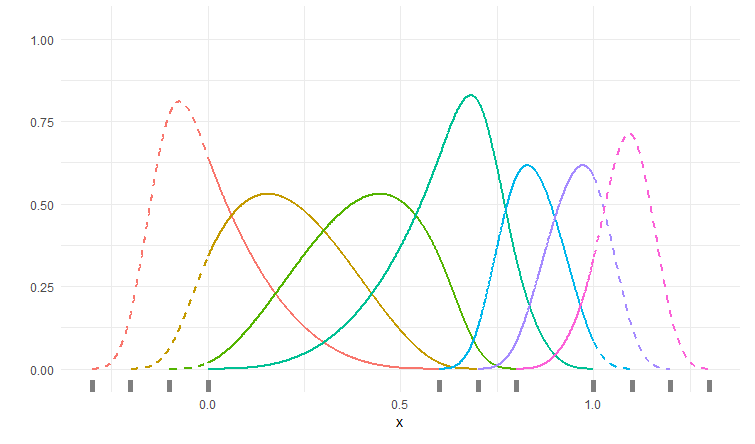}
        \caption{B-spline basis with non-equally spaced knots with $q = 3$.}
        \label{fig:BBasis_b}
    \end{subfigure}
    
    \caption{B-spline bases defined in $[0,1]$ with knots indicated in gray on the $x-$axis. Solid lines represent the basis functions within the covariate domain, while dashed lines indicate the extrapolation region.}
    \label{fig: BBasis}
\end{figure}

Then, function $f$ in \eqref{eq:univariate non-linear model} can be approximated in the interval $[x_{\min},x_{\max}]$ by a curve $S$ that is a linear combination of the B-spline basis functions $\{B_l\}_{l=1}^{k-q-1}$ resulting from the knot sequence $\mathbf{t}=\{t_1,\ldots,t_{k}$\}, yielding:

\begin{equation}\label{eq: Univariate Smoothing function}
	f(x)\approx S(x)=\sum_{l=1}^{k-q-1}\alpha_l B_l(x), \qquad x\in[x_{\min},x_{\max}],
\end{equation}
where $\alpha_l,$ $l=1,\ldots,k-q-1,$ are the regression coefficients. 

To estimate $S$ in \eqref{eq: Univariate Smoothing function} the least squares criterion is commonly used. Let $\mathbf{y}=(y_1,\ldots,y_n)'$ be the vector containing the observed values
of the response variable and $\mathbf{B}=[B_l(x_i)]_{il}$ denote the design matrix. The  estimated regression coefficients $\hat{\boldsymbol{\alpha}}=(\hat{\alpha}_1,\ldots,\hat{\alpha}_{k-q-1})'$  are obtained as:
\begin{equation}\label{eq: Univariate unpenalized Problem}
	\hat{\boldsymbol{\alpha}}
	\in
	\arg\min_{\boldsymbol{\alpha}\in\mathbb{R}^{k-q-1}}
	\left\|
	\mathbf{y}-\mathbf{B}\boldsymbol{\alpha}
	\right\|^2
	=
(\mathbf{B}'\mathbf{B})^{-1}\mathbf{B}'\mathbf{y},
\end{equation}
where $\|\cdot\|$ denotes the Euclidean norm and $\boldsymbol{\alpha} = (\alpha_1,\ldots,\alpha_{k-q-1}).$

We point out that a key aspect of B-spline regression is the selection of the knot configuration, including both the number and the placement of the knots. An excessively large number of knots may lead to overfitting, whereas too few knots, or poorly positioned ones, may oversmooth important features of the response. Consequently, identifying an appropriate knot configuration is a central problem in B-spline smoothing.

\subsection{Penalized splines and smoothing parameter selection} \label{subsec:2.2}

To reduce the influence of the knot configuration on the shape of $S$ in \eqref{eq: Univariate Smoothing function}, penalized splines (P-splines) leverage a large number of knots and control the smoothness of the fitted function by adding a penalty to the regression coefficients $\boldsymbol{\alpha}$. This approach provides an alternative to explicit knot selection, as the curvature penalty prevents the overfitting that could otherwise arise from using a large number of knots while regulating the smoothness of the estimated model.

Several penalty terms have been proposed for P-splines. A common choice is to penalize the squared $d-$th order differences of the regression coefficients $\boldsymbol{\alpha}$, where $d$ is a non-negative integer.
That is, the terms $\Delta^d\alpha_j = \Delta(\Delta^{d-1}\alpha_j)$ for $j\geq d+1$, where $\Delta\alpha_j = \alpha_{j} - \alpha_{j-1}$. Let $\mathbf{D}_d$ denote the $d$-th order difference matrix and define $\mathbf{P}=\mathbf{D}_d'\mathbf{D}_d$. Then, the P-splines approach to estimate $S$ in \eqref{eq: Univariate Smoothing function} optimizes a penalized least squares criterion as follows:
\begin{equation}\label{eq:Differences penalty}
	\hat{\boldsymbol{\alpha}}
	\in
	\arg\min_{\boldsymbol{\alpha}\in\mathbb{R}^{k-q-1}}
	\left\|
	\mathbf{y}-\mathbf{B}\boldsymbol{\alpha}
	\right\|^2
	+
	\lambda\,\boldsymbol{\alpha}'\mathbf{P}\boldsymbol{\alpha}
	 = (\textbf{B}'\textbf{B} + \lambda \mathbf{P})^{-1}\textbf{B}'\mathbf{y},
\end{equation}
where $\lambda\geq0$ is the smoothing parameter controlling the roughness of the fitted curve.

The previous formulation can be naturally extended to generalized non-linear models, where the response variable is allowed to follow an exponential family distribution. In this setting, the target of modeling is no longer the response itself, but the linear predictor $\eta$, which is related to the conditional mean of the response through a link function $g$. The spline representation is therefore incorporated into the additive predictor as
\begin{equation}\label{eq: GLM}
	g(\mathbb{E}[Y|X])=\eta(X)=f(X)\approx S(X).
\end{equation}

Under this framework, estimation is performed through penalized likelihood maximization rather than  penalized least squares. Specifically, the regression coefficients are obtained as
\begin{equation}\label{eq: Penalized GLM Estimation}
	\hat{\boldsymbol{\alpha}}
	\in
	\arg\max_{\boldsymbol{\alpha}\in\mathbb{R}^{k-q-1}}
	\ell(\boldsymbol{\alpha})
	-
	\frac{\lambda}{2}
	\boldsymbol{\alpha}'\mathbf{P}\boldsymbol{\alpha}
	,
\end{equation}
where $\ell$ denotes the log-likelihood associated with the assumed response distribution and the penalty term controls the smoothness of the estimated cuve. Since the resulting optimization problem does not admit a closed-form solution, estimation is typically carried out through penalized iterative reweighted least squares (P-IRLS) \citep{nelder1972generalized}.

Assuming a sufficiently rich B-spline basis, that is, a large number of knots, the problem of controlling the smoothness of $S$ in P-splines is effectively reduced to selecting an appropriate smoothing parameter $\lambda$. Classical approaches determine $\lambda$ by optimizing criteria such as generalized cross-validation (GCV), typically optimized through a grid search over the parameter space. In additive models, however, the presence of multiple smooth components requires the simultaneous selection of several smoothing parameters, making these procedures computationally demanding.

An attractive alternative is the Fellner--Schall algorithm \citep{wood2017generalized}, which automatically estimates the smoothing parameter(s) by maximizing an approximate restricted marginal likelihood. Taking $\mathbf{P}_\lambda = \lambda\,\mathbf{P} = \lambda\, \mathbf{D}_d'\mathbf{D}_d$ yields the iterative update
\begin{equation}\label{eq: Fellner Gaussian Update - 1D}
	\lambda^* =
	\widehat{\sigma}_{\lambda}^2
	\,
	\frac{
		\text{tr}\left[\mathbf{P}_\lambda^-\mathbf{P}\right]
		-
		\text{tr}\left[
		\left(\mathbf{B}'\mathbf{B}+\mathbf{P}_\lambda\right)^{-1}\mathbf{P}
		\right]
	}{
		\hat{\boldsymbol{\alpha}}_{\lambda}' \mathbf{P} \hat{\boldsymbol{\alpha}}_{\lambda}
	}
	\,\lambda,
\end{equation}
where $\mathbf{P}_\lambda^-$ denotes the Moore--Penrose pseudoinverse of $\mathbf{P}_\lambda$, $\hat{\boldsymbol{\alpha}}_\lambda$ are the regression coefficient estimates obtained from \eqref{eq:Differences penalty} for a fixed $\lambda$, and
\begin{equation}\label{eq: sigma estimation}
	\widehat\sigma^2_{\lambda} = \frac{\left\|\mathbf{y} - \mathbf{B}\hat{\boldsymbol{\alpha}}_{\lambda}\right\|^2}{n - \text{tr}\left[\left(\mathbf{B}'\mathbf{B} + \mathbf{P}_\lambda\right)^{-1}\mathbf{B}'\mathbf{B} \right]}
\end{equation}
is the residual variance estimator with effective-degrees-of-freedom correction.

The main advantage of the Fellner--Schall approach is that smoothing parameter estimation is performed automatically and scales efficiently to additive settings involving multiple smoothing parameters. This property plays a central role in the methodology we develop in Section~\ref{sec:3}.

\begin{remark}
	For non-Gaussian responses, the Fellner--Schall update becomes
	\begin{equation}\label{eq: Fellner General Update - 1D}
		\lambda^*
		=
		\phi
		\,
		\frac{
			\text{tr}\left[\mathbf{P}_\lambda^-\mathbf{P}\right]
			-
			\text{tr}\left[
			\left(\mathbf{B}'\Omega\mathbf{B}+\mathbf{P}_\lambda\right)^{-1}\mathbf{P}
			\right]
		}{
			\hat{\boldsymbol{\alpha}}_\lambda' \mathbf{P} \hat{\boldsymbol{\alpha}}_\lambda
		}
		\,\lambda,
	\end{equation}
	where $\Omega$ denotes the diagonal weight matrix obtained from the P-IRLS algorithm, $\phi$ is the scale parameter of the exponential family distribution { and $\hat{\boldsymbol{\alpha}}_\lambda$ is the solution of \eqref{eq: Penalized GLM Estimation}}.

\end{remark}

\section{AKSSAM: Automatic Knot Selection in Smooth Additive Models}
\label{sec:3}

This section introduces AKSSAM, our proposed knot-selection framework for generalized additive models (GAMs). Building upon the adaptive spline (A-spline) methodology of \citet{goepp2025spline}, we extend it from the univariate setting to additive models (AMs) and develop an algorithm that jointly estimates the model's coefficients and tuning parameters. Section~\ref{subsec:3.1} briefly reviews A-splines, whose $L_0-$penalty surrogate provides the foundation of the proposed approach. Section~\ref{subsec:3.2} generalizes this methodology to the additive setting, while Section~\ref{subsec:3.3} presents the tuning parameter strategy we follow for the additive extension of A-splines. Finally, Section~\ref{subsec:3.4} formulates AKSSAM through an alternating optimization scheme that integrates the Fellner--Schall algorithm for the automatic estimation of the penalty parameters, both for AMs and GAMs.


\subsection{A-splines: Adaptive splines}\label{subsec:3.1}

A-splines, introduced by \citet{goepp2025spline}, address the explicit knot selection problem in univariate nonlinear regression. The method formulates knot selection as a penalized regression problem in which an $L_0-$pseudonorm penalty is used to identify and remove the least relevant knots from an initially dense knot sequence. Since the resulting optimization problem is computationally challenging, the $L_0-$penalty is replaced by the adaptive ridge quadratic surrogate \citep{frommlet2016adaptive,rippe2012visualization}, and it is then solved using an alternating scheme.


The theoretical justification for this knot-selection strategy is established in \citet[Section~2.2.1]{goepp2025spline}. Consider a B-spline basis of degree $q$ defined over an equally spaced knot sequence $\mathbf{t}=\{t_1,\ldots,t_k\}$. If there exists an index $l^*\geq q+2$ such that $\Delta^{q+1}\alpha_{l^*}=0$, then the B-spline basis can be equivalently reparametrized over the reduced knot sequence $\mathbf{t}' = \{t_1,\ldots,t_{l^*-1}, t_{l^*+1},\ldots, t_k\},$ obtained by removing the knot $t_{l^*}$. Consequently, whenever the $(q+1)-$th order difference associated with an interior knot vanishes, eliminating that knot and recomputing the B-spline basis produces exactly the same fitted function as the one obtained using the original knot sequence. This result provides the theoretical foundation for identifying and removing unnecessary knots.


Therefore, under the notation and assumptions of Section~\ref{subsec:2.1}, the A-spline approach solves the following optimization problem:

\begin{equation}\label{eq: L0 penalized problem univariate}
\hat{\boldsymbol{\alpha}} \in \arg\min_{\boldsymbol{\alpha}\in\mathbb{R}^{k-q-1}} \left\|\mathbf{y} - \mathbf{B}\boldsymbol{\alpha}\right\|^2+ {\lambda} \sum_{l = q+2}^{k-q-1} \left\|\Delta^{q+1}\alpha_{l}\right\|_0,
\end{equation}
where the penalty parameter $\lambda>0$ controls the influence of the penalty in the objective function. The $L_0-$pseudonorm is adopted because it penalizes only the presence of non-zero terms, irrespective of their magnitude. Consequently, all non-zero $(q+1)-$th order differences are penalized equally, promoting sparse solutions in which many of these differences are estimated as exactly zero. By the theoretical result described above, the corresponding knots can then be removed without altering the fitted curve.

Although formulation \eqref{eq: L0 penalized problem univariate} directly promotes sparse knot configurations, it is computationally intractable due to the non-differentiability and non-convexity of the $L_0-$pseudonorm. To overcome the former issue, the $L_0-$penalty is approximated by the adaptive ridge quadratic surrogate, resulting in a differentiable optimization problem that can be solved through an alternating optimization scheme. In other words,
taking $\epsilon = 10^{-5}$ and $\omega_l = (\left(\Delta^{q+1}\alpha_{q+1+l}\right)^2 + \epsilon^2)^{-1}$ for $l = 1,\ldots, k-2q-2$, then
\[
\left\| \Delta^{q+1}\alpha_{q+1+l} \right\|_{0} \approx \omega_l(\Delta^{q+1}\alpha_{q+1+l})^2.
\]
Using this surrogate, the optimization problem in \eqref{eq: L0 penalized problem univariate} can be approximated by:
\begin{equation}\label{eq: Penalized Gaussian L0 scenario}
\arg\min_{\boldsymbol{\alpha}\in\mathbb{R}^{k-q-1}} \left\|\mathbf{y}  - \mathbf{B}\boldsymbol{\alpha}\right\|^2 + \boldsymbol{\alpha}'\mathbf{P}^{\lambda\boldsymbol{\omega}}\boldsymbol{\alpha},
\end{equation}
where $\mathbf{P}^{\lambda\boldsymbol{\omega}} = \lambda \mathbf{D}'_{q+1} W \mathbf{D}_{q+1}$, $\boldsymbol\omega = (\omega_{1},\ldots,\omega_{k-2q-2})$ and $W = \text{diag}(\boldsymbol\omega).$ 
However, problem \eqref{eq: Penalized Gaussian L0 scenario} does not admit a closed-form solution because the weights $\boldsymbol{\omega}$ depend on the regression coefficients $\boldsymbol{\alpha}$. If the weights are instead treated as fixed, the problem becomes quadratic in $\boldsymbol{\alpha}$, and its solution is given by
\begin{equation}\label{eq:Fixed-Weights solution}
\hat{\boldsymbol\alpha}_{\boldsymbol\omega} = \left(\mathbf{B}'\mathbf{B} + \mathbf{P}^{\lambda\boldsymbol{\omega}}\right)^{-1}\mathbf{B}'\mathbf{y}.
\end{equation}

\cite{goepp2025spline} solve problem~\eqref{eq: Penalized Gaussian L0 scenario}  through an alternating optimization scheme. Starting from an initial value of the weight vector $\boldsymbol{\omega}$, the regression coefficients are estimated by solving~\eqref{eq:Fixed-Weights solution} while keeping $\boldsymbol{\omega}$ fixed. The weights are then updated according to the estimated $(q+1)-$th order differences of the coefficients, and the resulting weights are used to re-estimate the regression coefficients. These two steps are repeated until convergence. The procedure yields the estimated regression coefficients $\hat{\boldsymbol{\alpha}}$ and weight vector $\hat{\boldsymbol{\omega}}$, from which knot selection is performed as follows:

\begin{equation}\label{eq: Approximated Knot selection}
    \mathbf{t}^{sel} = \left\{t_{q + 1+ l}\in\mathbf{t}^*\,|\, \|\Delta^{q+1}\hat\alpha_{q+1+l}\|_0\approx\hat\omega_l\left(\Delta^{q+1}\hat\alpha_{q+1+l}\right)^2 \approx 1,\; l =1,\ldots, k-2q-2\right\},
\end{equation}
that is, the interior knots whose associated surrogate $L_0$ term is estimated to be non-zero are retained.  The external and boundary knots of the original knot sequence $\mathbf{t}$ are added to the set \eqref{eq: Approximated Knot selection}, i.e., $\{t_1,\ldots,t_{q+1}\}\cup\{t_{k-q},\ldots,t_k\}$, and, afterwards the resulting B-spline basis is used to fit an unpenalized non-linear model, as in \eqref{eq: Univariate unpenalized Problem}.

The alternating optimization scheme assumes a fixed value of the penalty parameter $\lambda$.  \citet{goepp2025spline} tune $\lambda$  through a grid search using a modified Bayesian Information Criterion (BIC).


\subsection{Additive extension of A-splines}\label{subsec:3.2}

To establish the proposed methodology, we first review additive models (AMs), describing both their formulation and the identifiability constraints adopted in this work following \citet{wood2020inference}. We then show how the A-spline methodology can be extended from the univariate to the additive setting.


\subsubsection{Smooth Additive Models}
The {B-spline} regression approach in Section~\ref{subsec:2.1} can be naturally extended to the multivariate case. Let $X_1,\ldots,X_p$ be $p$ continuous covariates and $Y$ the response, consider the AM
\begin{equation}\label{eq:Additive model}
y_i = \alpha_0 + f_1({x_{1i}}) + \ldots + f_p(x_{pi}) + \varepsilon_i,\qquad i=1,\ldots,n,
\end{equation}
where $f_1,\ldots, f_p$ are unknown smooth functions, $\alpha_0$ the intercept, $\{(x_{1,i},\ldots,x_{pi},y_i)\}_{i=1}^n$ the set of observations and $\varepsilon_i$ are independent Gaussian errors with zero mean.


The smooth components of an AM are estimated by representing each function $f_r$ through a B-spline basis expansion. Specifically, for $r=1,\ldots,p$,  let $\mathbf{t}_r$ denote a sequence of $k_r$ knots, as in Section~\ref{subsec:2.1}, being  $\mathbf{t}^*_r$ the corresponding sequence of internal knots, that is defined taking as boundary knots $x_{r\min}$ and $x_{r\max}$, i.e., the minimum and maximum observed values of $X_r$, respectively. Then, using B-splines of degree $q_r$, the $r-$th smooth component is approximated as

\begin{equation}\label{eq:Multivariate Smoothing}
f_r(x_r) \approx S_r(x_r) = \sum_{l = 1}^{k_r - q_r-1} \alpha_{rl}\, B_{rl}(x_r),\qquad x_r\in[x_{r\min},x_{r\max}],
\end{equation}
where $\{B_{rl}\}_{l=1}^{k_r-q_r-1}$ denotes the corresponding B-spline basis, and $\alpha_{rl}$ are the regression coefficients, $ l=1,\ldots,k_r-q_r-1$, for $r=1,\ldots,p.$


Let $\mathbf{B}_{r} = \left[ B_{rl}(x_{ri}) \right]_{i,l},$ for $i = 1, \ldots, n$ and $l = 1, \ldots, k_r - q_r - 1$  be the design matrix for the basis expansion of each covariate $X_r$ and $\boldsymbol{\alpha}_r = (\alpha_{r1}, \ldots, \alpha_{r\,k_r - q_r - 1})'$ the associated regression coefficient vector, for $r=1,\ldots,p$. Denote by $\boldsymbol{\alpha} = (\alpha_0, \boldsymbol{\alpha}_1', \ldots, \boldsymbol{\alpha}_p')'$ the stacked coefficient vector and by $\mathbf{B} = \left[ \mathbf{1}^n \,|\, \mathbf{B}_1 \,|\, \ldots \,|\, \mathbf{B}_p \right]$ the full design matrix, of dimension $n\times m$ with $m= 1+\sum_{r=1}^p m_r$, and $m_r=k_r-q_r-1$ for each $r=1,\ldots,p$, which combines columnwise a column of ones of dimension $n$ and all covariate design matrices. Since each B-spline basis constitutes a partition of unity, $\mathbf{B}_r\mathbf{1}^{m_r} = \mathbf{1}^n$ for every $r$, the design matrix $\mathbf{B}$ is rank-deficient, and we impose that the effect of every covariate is centered, namely $\mathbf{1}^{n\prime}\mathbf{B}_r\boldsymbol{\alpha}_r = 0$ for $r=1,\ldots,p$, to ensure that the AM is identifiable. Note that \citet{wood2017generalized} take the positive definiteness of the
cross-product of the design matrix as a working hypothesis. This is justified in their framework because the identifiability constraints are absorbed into the model matrix by reparametrization, as described in \citet{wood2017generalizedbook}:  each smooth term is re-expressed in terms of one fewer, unconstrained coefficient, so that the centering condition holds by construction and the resulting design matrix has full column rank. Such a route is, however, not available in our case: the reparametrization alters the structure of $\mathbf{B}$ and, with it, that of the difference matrices $\mathbf{D}_{q_r+1}$, since the coefficients of the reparametrized
model are no longer B-spline coefficients. The knot-removal result reviewed in Section~\ref{subsec:3.1} and the adaptive
ridge penalty are both formulated in terms of the $(q_r+1)$-th order
differences of B-spline coefficients, so the A-spline machinery would no longer be justified under the transformed parameterization. We therefore preserve the original parameterization and enforce the constraints in their penalized form, following \citet{wood2020inference}, through the quadratic penalty \begin{equation}\label{eq:Identifiability penalty}
\mathbf{P}_{\text{I}} = \operatorname{diag}\left(0,\; (\mathbf{B}_1'\mathbf{1}^{n})(\mathbf{B}_1'\mathbf{1}^{n})',\;\ldots,\;(\mathbf{B}_p'\mathbf{1}^{n})(\mathbf{B}_p'\mathbf{1}^{n})'\right),
\end{equation}
so that $\boldsymbol{\alpha}'\mathbf{P}_{\text{I}}\boldsymbol{\alpha} = \sum_{r=1}^p \left(\mathbf{1}^{n\prime}\mathbf{B}_r\boldsymbol{\alpha}_r\right)^2$. Then, to estimate $S_r$ in \eqref{eq:Multivariate Smoothing}, $r=1,\ldots,p$, and thus the AM in \eqref{eq:Additive model}, the least squares criterion is used, yielding
\begin{equation}\label{eq:Addittive Solution}
\hat{\boldsymbol{\alpha}} \in \arg\min_{\boldsymbol{\alpha}\in\mathbb{R}^{m}} \left\|\mathbf{y} - \mathbf{B}\boldsymbol{\alpha}\right\|^2 + \boldsymbol{\alpha}'\mathbf{P}_{\text{I}}\boldsymbol{\alpha} = \left(\mathbf{B}'\mathbf{B} + \mathbf{P}_{\text{I}}\right)^{-1}\mathbf{B}'\mathbf{y},
\end{equation}
where the uniqueness of the solution, and thus the validity of the closed-form expression, follows from Lemma~\ref{lem:PD} below.

The choice of the knot sequences $\{\mathbf{t}_r\}_{r=1}^p$ remains a crucial aspect of model estimation. Since the smooth components are estimated simultaneously, the B-spline basis selected for one covariate may influence the estimation of the others, making the construction of appropriate knot sequences particularly challenging. This further motivates the use of regularization techniques or automatic knot-selection procedures.


As in the univariate setting, the P-spline framework extends naturally to AMs by augmenting problem \eqref{eq:Addittive Solution} with a roughness penalty for each smooth component. Accordingly, model complexity is controlled through a collection of smoothing parameters, one for each additive term. These parameters can be estimated automatically using the Fellner--Schall algorithm described in Section~\ref{subsec:2.2}, which jointly updates the smoothing penalties while estimating the AM.

\subsubsection{Extending A-splines}

Building upon the AM framework introduced above, we extend the A-spline methodology to the additive setting by simultaneously performing knot selection for all smooth components. This is achieved by incorporating an $L_0-$based penalty for each additive term, allowing the knot sequence associated with every covariate to be selected automatically.

Following the notation of Section~\ref{subsec:3.1}, let $\boldsymbol{\omega}_r=(\omega_{r1},\ldots,\omega_{r\,k_r-2q_r-2})^\prime$ denote the adaptive ridge weight vector for the $r-$th smooth component, where
$\omega_{rl}=\left((\Delta^{q+1}\alpha_{r\,q_r+1+l})^2+\epsilon^2\right)^{-1},$ $l=1,\ldots,k_r-q_r-2$. Let $W_r=\operatorname{diag}(\boldsymbol{\omega}_r)$ be the corresponding diagonal weight matrix, and define the joint weight vector $\boldsymbol{\omega}=(\boldsymbol{\omega}_1^\prime,\ldots,\boldsymbol{\omega}_p^\prime)^\prime$, whose dimension is $s=\sum_{r=1}^{p}(k_r-2q_r-2)$. Furthermore, let $\lambda_r,\, r=1,\ldots,p,$ denote the penalty parameter associated with the $r-$th smooth component, and define $\boldsymbol{\lambda}=(\lambda_1,\ldots,\lambda_p)$ as the vector collecting all penalty parameters.

For each $r=1,\ldots,p$, let $\mathbf{P}^{\boldsymbol{\omega}_r} = \mathbf{D}_{q_r+1}'W_r\mathbf{D}_{q_r+1}$ denote the adaptive ridge penalty matrix of the $r$-th component with unit penalty parameter, and 
let $\mathbb{P}^{\boldsymbol{\omega}_r}$ denote the $m\times m$ block-diagonal matrix with $\mathbf{P}^{\boldsymbol{\omega}_r}$ in the diagonal block corresponding to the coefficients of the $r-$th covariate and zeros in all remaining blocks.
The joint adaptive ridge penalty is then
\begin{equation}\label{eq: Additive A-splines i}
\mathbf{P}^{\boldsymbol{\lambda}\boldsymbol{\omega}} = \sum_{r=1}^p \lambda_r\,\mathbb{P}^{\boldsymbol{\omega}_r},
\end{equation}
and we write
\begin{equation}\label{eq: Additive A-splines ii}
\mathbf{P}^{\boldsymbol{\lambda}\boldsymbol{\omega}}_{\text{I}} = \mathbf{P}^{\boldsymbol{\lambda}\boldsymbol{\omega}} + \mathbf{P}_{\text{I}}.
\end{equation} 
Therefore, the A-splines problem extended to the AM setting is given by
\begin{equation}\label{eq: AM A-Splines Problem}
\arg\min_{\boldsymbol{\alpha}\in\mathbb{R}^{m}} \left\|\mathbf{y}-\mathbf{B}\boldsymbol{\alpha}\right\|^2 + \boldsymbol{\alpha}' \mathbf{P}^{\boldsymbol{\lambda}\boldsymbol{\omega}}_{\text{I}}\boldsymbol{\alpha}.
\end{equation}
As in the univariate setting, problem~\eqref{eq: AM A-Splines Problem} does not admit a closed-form solution because the adaptive ridge weights $\boldsymbol{\omega}$ depend on the regression coefficients $\boldsymbol{\alpha}$. Consequently, the problem is solved using the same alternating optimization scheme described in Section~\ref{subsec:3.1}, alternating between updating the regression coefficients and the adaptive ridge weights. In particular, when the weight vector $\boldsymbol{\omega}$ is held fixed, the optimization problem becomes quadratic in $\boldsymbol{\alpha}$, and its analytical solution is
\begin{equation}\label{eq: AM A-Splines Problem solution}
\hat{\boldsymbol{\alpha}}_{\boldsymbol\omega} = \left(\mathbf{B}'\mathbf{B}+\mathbf{P}^{\boldsymbol{\lambda}\boldsymbol{\omega}}_{\text{I}}\right)^{-1}\mathbf{B}'\mathbf{y}.
\end{equation}
Then, at convergence,
the knot-trimming procedure in \eqref{eq: Approximated Knot selection} can be applied to perform knot selection in each covariate individually.


In contrast to the univariate setting, the additive version of A-splines associates a distinct penalty parameter $\lambda_r$ with each smooth component, $r=1,\ldots,p,$ allowing the sparsity of each knot sequence to be controlled independently. Following the strategy of \citet{goepp2025spline}, these $p$ penalty parameters could be selected through a grid search based on a suitable model selection criterion. However, such an approach quickly becomes computationally impractical as the search space grows exponentially with the number of smooth components. This limitation motivates the development of an automatic procedure for the simultaneous estimation of all penalty parameters, which is presented in the following section.

\subsection{Tuning parameters in additive A-splines: a Fellner--Schall approach}\label{subsec:3.3}


To address penalty tuning in additive A-splines, we propose a tailored implementation of the Fellner--Schall algorithm that automatically optimizes the penalty parameters in problem~\eqref{eq: AM A-Splines Problem}, thereby avoiding computationally expensive grid search procedures. Because the adaptive ridge weights depend on the regression coefficients, the algorithm cannot be applied directly. We therefore begin by considering the problem with fixed adaptive ridge weights. This section proves that the assumptions of the Fellner--Schall algorithm hold under this formulation, laying the foundation for the algorithm developed in Section \ref{subsec:3.4}.

A key observation is that the identifiability penalty $ \mathbf{P}_{\text{I}}$ is not a regularization in the usual sense. It introduces no tuning parameter and is included solely to enforce the identifiability constraint.
Accordingly, when applying the Fellner--Schall methodology, the quadratic form $\boldsymbol{\alpha}'\mathbf{P}_{\text{I}}\boldsymbol{\alpha}$ is treated as part of the fitting criterion, together with the residual sum of squares, rather than as part of the smoothing penalty $\mathbf{P}^{\boldsymbol{\lambda}\boldsymbol{\omega}}$. This distinction is essential: Theorem~1 of \citet{wood2017generalized} requires the positive definiteness of the matrix that remains fixed as the penalty parameters vary  ($\mathbf{B}'\mathbf{B}$ in their formulation) and this condition fails in our setting, since $\mathbf{B}$ is rank-deficient by construction. Grouping $\mathbf{P}_{\text{I}}$ with the fitting criterion replaces this matrix by $\mathbf{B}'\mathbf{B}+\mathbf{P}_{\text{I}}$, which is shown to be positive definite in Lemma~\ref{lem:PD} below.

We first state the structural assumption under which identifiability is guaranteed.

\begin{assumption}\label{ass:rank}
	Each design matrix $\mathbf{B}_r$, $r = 1,\ldots,p$, has full column rank, and the only linear dependencies among the columns of $\mathbf{B}$ are those induced by the partition-of-unity property of the B-spline bases; that is,
	\[
	\text{Null}(\mathbf{B}) = \operatorname{span}\{\mathbf{u}_1,\ldots,\mathbf{u}_p\},
	\]
	where $\mathbf{u}_r\in\mathbb{R}^m$ denotes the vector whose first entry is $-1$, whose $r$-th block is $\mathbf{1}^{m_r}$ and whose remaining entries are zero.
\end{assumption}

Note that each $\mathbf{u}_r$ indeed belongs to $\text{Null}(\mathbf{B})$, since $\mathbf{B}\mathbf{u}_r = -\mathbf{1}^n + \mathbf{B}_r\mathbf{1}^{m_r} = \mathbf{0}$ by the partition-of-unity property: adding a constant to one smooth component while subtracting it from the intercept leaves the fitted model unchanged. Assumption~\ref{ass:rank} states that these  dependencies are the only ones, and it holds generically, provided that each covariate takes at least $m_r$ distinct values and that no exact concurvity relationship exists among the covariates.

\begin{lemma}\label{lem:PD}
	Under Assumption~\ref{ass:rank}, the matrix $\mathbf{B}'\mathbf{B} + \mathbf{P}_{\text{I}}$ is positive definite.
\end{lemma}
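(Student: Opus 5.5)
Since both $\mathbf{B}'\mathbf{B}$ and $\mathbf{P}_{\text{I}}$ are positive semidefinite, the plan is to show that their quadratic forms cannot vanish simultaneously at a nonzero vector. Take $\mathbf{v}\in\mathbb{R}^m$ with
\[
\mathbf{v}'(\mathbf{B}'\mathbf{B}+\mathbf{P}_{\text{I}})\mathbf{v} = \|\mathbf{B}\mathbf{v}\|^2 + \sum_{r=1}^p \left(\mathbf{1}^{n\prime}\mathbf{B}_r\mathbf{v}_r\right)^2 = 0,
\]
where $\mathbf{v}=(v_0,\mathbf{v}_1',\ldots,\mathbf{v}_p')'$ is partitioned conformably with $\boldsymbol{\alpha}$. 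Both summands are nonnegative, so each must be zero. This gives two facts: $\mathbf{B}\mathbf{v}=\mathbf{0}$, and $\mathbf{1}^{n\prime}\mathbf{B}_r\mathbf{v}_r=0$ for every $r$. The goal is to deduce $\mathbf{v}=\mathbf{0}$.

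By Assumption~\ref{ass:rank}, the condition $\mathbf{B}\mathbf{v}=\mathbf{0}$ means $\mathbf{v}=\sum_{r=1}^p c_r\mathbf{u}_r$ for some scalars $c_1,\ldots,c_p$. Reading this block by block gives $\mathbf{v}_r=c_r\mathbf{1}^{m_r}$ and $v_0=-\sum_r c_r$. Substituting into the centering conditions and using the partition of unity $\mathbf{B}_r\mathbf{1}^{m_r}=\mathbf{1}^n$, we get
\[
0=\mathbf{1}^{n\prime}\mathbf{B}_r\mathbf{v}_r = c_r\,\mathbf{1}^{n\prime}\mathbf{1}^n = c_r n.
\]
Hence $c_r=0$ for all $r$, and therefore $\mathbf{v}=\mathbf{0}$. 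Positive semidefiniteness together with this trivial kernel of the quadratic form gives positive definiteness. Symmetry is immediate, since $\mathbf{P}_{\text{I}}$ is block-diagonal with symmetric rank-one blocks.

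The only point needing care is the first reduction: the quadratic form really does split as the sum of $\|\mathbf{B}\mathbf{v}\|^2$ and the squared centering terms. This holds because $\mathbf{P}_{\text{I}}$ has a zero in the intercept position and blocks $(\mathbf{B}_r'\mathbf{1}^n)(\mathbf{B}_r'\mathbf{1}^n)'$, so $\mathbf{v}'\mathbf{P}_{\text{I}}\mathbf{v}=\sum_r(\mathbf{1}^{n\prime}\mathbf{B}_r\mathbf{v}_r)^2$, which is exactly the identity noted after \eqref{eq:Identifiability penalty}. The full-column-rank part of Assumption~\ref{ass:rank} is not used separately; it is subsumed by the null-space characterization.
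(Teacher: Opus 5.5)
Your proof is correct and follows essentially the same route as the paper, which phrases your first step as $\text{Null}(\mathbf{B}'\mathbf{B}+\mathbf{P}_{\text{I}})=\text{Null}(\mathbf{B})\cap\text{Null}(\mathbf{P}_{\text{I}})$ for positive semidefinite summands. It then uses Assumption~\ref{ass:rank} and the partition-of-unity property to obtain $n\,c_r=0$ for every $r$, hence $\alpha_0=0$, exactly as you do.
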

\begin{proof}
	Both $\mathbf{B}'\mathbf{B}$ and $\mathbf{P}_{\text{I}}$ are positive semidefinite, hence so is their sum, and for positive semidefinite matrices $\text{Null}(\mathbf{B}'\mathbf{B} + \mathbf{P}_{\text{I}}) = \text{Null}(\mathbf{B}'\mathbf{B})\cap\text{Null}(\mathbf{P}_{\text{I}}) = \text{Null}(\mathbf{B})\cap\text{Null}(\mathbf{P}_{\text{I}})$. Let $\boldsymbol{\alpha}$ belong to this intersection. By Assumption~\ref{ass:rank}, $\boldsymbol{\alpha} = \sum_{r=1}^p c_r\mathbf{u}_r$ for some $c_1,\ldots,c_p\in\mathbb{R}$; in particular, $\boldsymbol{\alpha}_r = c_r\mathbf{1}^{m_r}$ for every $r$ and $\alpha_0 = -\sum_{r=1}^p c_r$. Moreover, by \eqref{eq:Identifiability penalty}, $\boldsymbol{\alpha}\in\text{Null}(\mathbf{P}_{\text{I}})$ implies $\mathbf{1}^{n\prime}\mathbf{B}_r\boldsymbol{\alpha}_r = 0$. Using the partition-of-unity property, $\mathbf{1}^{n\prime}\mathbf{B}_r\boldsymbol{\alpha}_r = c_r\,\mathbf{1}^{n\prime}\mathbf{B}_r\mathbf{1}^{m_r} = c_r\,\mathbf{1}^{n\prime}\mathbf{1}^n = n\,c_r$, so $c_r = 0$ for every $r$, and consequently $\alpha_0 = 0$. Hence $\boldsymbol{\alpha} = \mathbf{0}$ and the matrix is positive definite.
\end{proof}
\begin{remark}\label{rem:mixed model}
The grouping of $\mathbf{P}_{\text{I}}$ with the fitting criterion has a natural interpretation in terms of the mixed-model representation underlying the Fellner--Schall method \citep{wood2017generalized}. Since $\boldsymbol{\alpha}'\mathbf{P}_{\text{I}}\boldsymbol{\alpha} = \sum_{r=1}^p\left(\mathbf{1}^{n\prime}\mathbf{B}_r\boldsymbol{\alpha}_r\right)^2$, the identifiability penalty amounts to augmenting the sample with $p$ pseudo-observations, $0 = \mathbf{1}^{n\prime}\mathbf{B}_r\boldsymbol{\alpha}_r + \varepsilon^*_r$ with $\varepsilon^*_r\sim\mathcal{N}(0,\sigma^2)$, which softly enforce the centering of each smooth component. Stacking these $p$ rows below $\mathbf{B}$ yields an augmented design matrix $\widetilde{\mathbf{B}}$ satisfying $\widetilde{\mathbf{B}}'\widetilde{\mathbf{B}} = \mathbf{B}'\mathbf{B} + \mathbf{P}_{\text{I}}$, which is positive definite by Lemma~\ref{lem:PD}, and $\widetilde{\mathbf{B}}'\widetilde{\mathbf{y}} = \mathbf{B}'\mathbf{y}$, where $\widetilde{\mathbf{y}}$ appends $p$ zeroes to $\mathbf{y}$. The augmented model is thus exactly of the form considered by \citet{wood2017generalized}: a full-column-rank design combined with the improper Gaussian prior induced by the smoothing penalty alone, $\boldsymbol{\alpha}\sim\mathcal{N}\left(\mathbf{0},\sigma^2\left(\mathbf{P}^{\boldsymbol{\lambda}\boldsymbol{\omega}}\right)^-\right)$, which is the only term carrying the tuning parameters. Note that, for fixed $\boldsymbol{\lambda}$, the coefficient estimates are identical regardless of how $\mathbf{P}_{\text{I}}$ is grouped, since the penalized objective is unchanged; the grouping only determines the restricted marginal likelihood that the penalty-parameter update ascends.
\end{remark}

Given a fixed weight vector $\boldsymbol{\omega}$, by \eqref{eq: Additive A-splines i} and \eqref{eq: Additive A-splines ii} the smoothing penalty $\mathbf{P}^{\boldsymbol{\lambda}\boldsymbol{\omega}}$ is linear in the penalty parameters, with
\[
\frac{\partial}{\partial \lambda_r}\, \mathbf{P}^{\boldsymbol{\lambda}\boldsymbol{\omega}} = \mathbb{P}^{\boldsymbol{\omega}_r},\qquad r = 1,\ldots,p.
\]
The validity of the Fellner--Schall update in this setting rests on the following result.

\begin{theorem}\label{thm:AKSSAM THM}
    {Let Assumption~\ref{ass:rank} hold, let $\boldsymbol{\omega}$ be a fixed weight vector satisfying $\omega_{rl}>0$ for $l=1,\ldots,k_r-2q_r-2$ and $r=1,\ldots,p$ and let $\lambda_r>0$ for $r=1,\ldots,p$. }For matrices $\mathbf{B}$, $\mathbf{P}^{\boldsymbol{\lambda}\boldsymbol{\omega}}_{\text{I}}$ and $\mathbb{P}^{\boldsymbol{\omega_r}}$ as defined in this section, then
    \begin{equation}\label{eq: Proper AKSSAM justification}
    {\text{tr}\left[{\left(\mathbf{P}^{\boldsymbol{\lambda}\boldsymbol{\omega}}\right)}^-\mathbb{P}^{\boldsymbol{\omega}_r}\right]-\text{tr}\left[\left(\mathbf{B}'\mathbf{B} + \mathbf{P}^{\boldsymbol{\lambda}\boldsymbol{\omega}}_{\text{I}}\right)^{-1}\mathbb{P}^{\boldsymbol{\omega}_r}\right]>0.}
    \end{equation}
\end{theorem}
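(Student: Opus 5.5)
Set $\mathbf{S}=\mathbf{P}^{\boldsymbol{\lambda}\boldsymbol{\omega}}$, $\mathbf{A}=\mathbf{B}'\mathbf{B}+\mathbf{P}_{\text{I}}$ (positive definite by Lemma~\ref{lem:PD}) and $\mathbf{S}_r=\mathbb{P}^{\boldsymbol{\omega}_r}$. The plan is to mimic the proof of Theorem~1 in \citet{wood2017generalized}, with $\mathbf{A}$ replacing their $\mathbf{B}'\mathbf{B}$, as Remark~\ref{rem:mixed model} suggests. The whole argument works in the column space of $\mathbf{S}$.

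\textbf{Step 1: block structure of $\mathbf{S}$.} The matrix $\mathbf{S}=\operatorname{diag}(0,\lambda_1\mathbf{P}^{\boldsymbol{\omega}_1},\ldots,\lambda_p\mathbf{P}^{\boldsymbol{\omega}_p})$ is block diagonal. Since $\omega_{rl}>0$ and $\lambda_r>0$, we have $\text{Null}(\lambda_r\mathbf{P}^{\boldsymbol{\omega}_r})=\text{Null}(\mathbf{D}_{q_r+1})$. Hence $\mathbf{S}^-$ is block diagonal with blocks $\lambda_r^{-1}(\mathbf{P}^{\boldsymbol{\omega}_r})^-$. It follows that
\[
\text{tr}[\mathbf{S}^-\mathbf{S}_r]=\lambda_r^{-1}\operatorname{rank}(\mathbf{D}_{q_r+1})=\lambda_r^{-1}(m_r-q_r-1)>0,
\]
provided there is at least one interior knot. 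This is implicit in the hypothesis that $\boldsymbol{\omega}_r$ is nonempty.

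\textbf{Step 2: reduce to the range of $\mathbf{S}$.} Let $\mathbf{U}$ be an orthonormal basis of $\text{Range}(\mathbf{S})$, and write $\mathbf{S}=\mathbf{U}\boldsymbol{\Lambda}\mathbf{U}'$ with $\boldsymbol{\Lambda}\succ0$. The range of $\mathbf{S}_r$ lies in $\text{Range}(\mathbf{S})$ because of the block structure, so $\mathbf{S}_r=\mathbf{U}\mathbf{R}_r\mathbf{U}'$ with $\mathbf{R}_r\succeq0$ and $\mathbf{R}_r\neq0$. Then $\text{tr}[\mathbf{S}^-\mathbf{S}_r]=\text{tr}[\boldsymbol{\Lambda}^{-1}\mathbf{R}_r]$.

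For the second term, complete $\mathbf{U}$ to an orthogonal matrix $[\mathbf{U},\mathbf{V}]$. Block inversion (Schur complement) of $\mathbf{A}+\mathbf{S}$ in this basis gives
\[
\mathbf{U}'(\mathbf{A}+\mathbf{S})^{-1}\mathbf{U}=(\boldsymbol{\Lambda}+\mathbf{C})^{-1},\qquad \mathbf{C}=\mathbf{U}'\mathbf{A}\mathbf{U}-\mathbf{U}'\mathbf{A}\mathbf{V}(\mathbf{V}'\mathbf{A}\mathbf{V})^{-1}\mathbf{V}'\mathbf{A}\mathbf{U}.
\]
Here $\mathbf{C}$ is the Schur complement of a positive definite matrix, so $\mathbf{C}\succ0$. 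Therefore $\text{tr}[(\mathbf{A}+\mathbf{S})^{-1}\mathbf{S}_r]=\text{tr}[(\boldsymbol{\Lambda}+\mathbf{C})^{-1}\mathbf{R}_r]$.

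\textbf{Step 3: strict inequality.} Since $\mathbf{C}\succ0$, we have $\boldsymbol{\Lambda}+\mathbf{C}\succ\boldsymbol{\Lambda}$, so $\mathbf{M}=\boldsymbol{\Lambda}^{-1}-(\boldsymbol{\Lambda}+\mathbf{C})^{-1}\succ0$ by strict operator monotonicity of inversion. The difference in \eqref{eq: Proper AKSSAM justification} equals $\text{tr}[\mathbf{M}\mathbf{R}_r]=\text{tr}[\mathbf{M}^{1/2}\mathbf{R}_r\mathbf{M}^{1/2}]$. This is strictly positive because $\mathbf{R}_r\succeq0$ is nonzero and $\mathbf{M}^{1/2}$ is nonsingular.

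\textbf{Main obstacle.} The delicate point is Step 2. In Wood's setting the positive definite fixed matrix is $\mathbf{B}'\mathbf{B}$. Here it must be $\mathbf{A}$, which is exactly why Lemma~\ref{lem:PD} and the grouping of $\mathbf{P}_{\text{I}}$ are needed: positive definiteness of $\mathbf{A}$ is what makes $\mathbf{V}'\mathbf{A}\mathbf{V}$ invertible and $\mathbf{C}$ strictly positive definite. A second point to check is the strictness in Step 1. This requires $\mathbf{S}_r$ not to vanish on $\text{Range}(\mathbf{S})$, which holds because $\mathbf{P}^{\boldsymbol{\omega}_r}\neq0$ as soon as $k_r-2q_r-2\ge1$ and the weights are positive.
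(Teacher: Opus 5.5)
Your proof is correct. It rests on the same setup as the paper: $\mathbf{A}=\mathbf{B}'\mathbf{B}+\mathbf{P}_{\text{I}}$ (positive definite by Lemma~\ref{lem:PD}), $\mathbf{S}=\mathbf{P}^{\boldsymbol{\lambda}\boldsymbol{\omega}}$ and $\mathbf{S}_r=\mathbb{P}^{\boldsymbol{\omega}_r}$.

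The difference is in how the inequality itself is obtained. The paper only checks the hypotheses of \citet[Theorem~1]{wood2017generalized} and then cites that theorem as a black box. You instead prove the inequality inline, in three moves:
\begin{enumerate}[label=(\roman*)]
\item restrict everything to $\text{Range}(\mathbf{S})$;
\item use the Schur-complement identity $\mathbf{U}'(\mathbf{A}+\mathbf{S})^{-1}\mathbf{U}=(\boldsymbol{\Lambda}+\mathbf{C})^{-1}$ with $\mathbf{C}\succ\mathbf{0}$;
\item use strict antitonicity of matrix inversion to reduce the claim to $\text{tr}[\mathbf{M}\mathbf{R}_r]>0$, where $\mathbf{M}\succ\mathbf{0}$ and $\mathbf{R}_r\succeq\mathbf{0}$ with $\mathbf{R}_r\neq\mathbf{0}$.
\end{enumerate}

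What your route buys is self-containedness and a precise account of what the strict inequality uses at a fixed $\boldsymbol{\lambda}$. First, it uses the inclusion $\text{Range}(\mathbb{P}^{\boldsymbol{\omega}_r})\subseteq\text{Range}(\mathbf{P}^{\boldsymbol{\lambda}\boldsymbol{\omega}})$. You read this directly off the block structure, rather than obtaining it through the null-space-independence hypothesis (iii). Second, it uses the non-degeneracy condition $\mathbb{P}^{\boldsymbol{\omega}_r}\neq\mathbf{0}$, i.e.\ $k_r-2q_r-2\ge 1$. The paper leaves this implicit, yet without it both traces vanish and the inequality fails to be strict. Your Step~1 also recovers the closed form of the first trace stated in Remark~\ref{rem:cheap trace}.

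The paper's route, by contrast, is shorter and ties the result directly to the Fellner--Schall framework whose update it is meant to justify.
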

\begin{proof}
{We apply \citet[Theorem~1]{wood2017generalized}, whose statement involves a positive definite matrix, denoted $\mathbf{A}$ in what follows, and a positive semidefinite penalty $\mathbf{S}_{\boldsymbol{\lambda}}$, linear in $\boldsymbol{\lambda}$, with derivatives $\mathbf{S}_r$ and a null space independent of $\boldsymbol{\lambda}$. We take $\mathbf{A} := \mathbf{B}'\mathbf{B} + \mathbf{P}_{\text{I}}$, $\mathbf{S}_{\boldsymbol\lambda} := \mathbf{P}^{\boldsymbol{\lambda}\boldsymbol{\omega}}$ and $\mathbf{S}_r := \mathbb{P}^{\boldsymbol{\omega}_r}$, and verify the hypotheses:}
   \begin{enumerate}[label=(\roman*)]
   	\item {$\mathbf{A} = \mathbf{B}'\mathbf{B} + \mathbf{P}_{\text{I}}$ is positive definite, by Lemma~\ref{lem:PD}. }
    
   	\item { $\mathbf{P}^{\boldsymbol{\lambda}\boldsymbol{\omega}}$ } is positive semidefinte, {} and linear in $\boldsymbol{\lambda}$, with $\partial\mathbf{P}^{\boldsymbol{\lambda}\boldsymbol{\omega}}/\partial\lambda_r = \mathbb{P}^{\boldsymbol{\omega}_r}$ positive semidefinite: each block is of the form $\mathbf{D}_{q_r+1}'W_r\mathbf{D}_{q_r+1}$ with $W_r$ a non-negative diagonal matrix. 
   
   	\item $\text{Null}\left(\mathbf{P}^{\boldsymbol{\lambda}\boldsymbol{\omega}}_{\text{I}}\right)$ is independent of $\boldsymbol\lambda$ (under the assumption that $\lambda_r>0,\, r=1,\ldots,p$): { since the addends in \eqref{eq: Additive A-splines i} are positive semidefinite, $\text{Null}\left(\sum_{r}\lambda_r\mathbb{P}^{\boldsymbol{\omega}_r}\right) = \bigcap_{r=1}^p\text{Null}\left(\mathbb{P}^{\boldsymbol{\omega}_r}\right)$ for any $\lambda_r>0,$  $r=1,\ldots,p$, which depends only on the (fixed) weights $\boldsymbol{\omega}$, while the penalty parameters act as scale factors.}
   \end{enumerate}
   { Noting that $\mathbf{A} + \mathbf{S}_{\boldsymbol{\lambda}} = \mathbf{B}'\mathbf{B} + \mathbf{P}^{\boldsymbol{\lambda}\boldsymbol{\omega}}_{\text{I}}$, } by \cite[Theorem~1]{wood2017generalized}, inequality \eqref{eq: Proper AKSSAM justification} is proven.
\end{proof}

Therefore, for a problem of the form in \eqref{eq: AM A-Splines Problem} where $\boldsymbol{\omega}$ is fixed, the Fellner--Schall update for penalty parameters $\boldsymbol\lambda$, is given by
\begin{equation}\label{eq: Fellner-Schall for AKSSAM}
\lambda^*_r = \widehat\sigma_{\boldsymbol\lambda}\frac{\text{tr}\left[\left(\mathbf{P}^{\boldsymbol{\lambda}\boldsymbol{\omega}}\right)^-\mathbb{P}^{\boldsymbol{\omega}_r}\right]-\text{tr}\left[\left(\mathbf{B}'\mathbf{B} + \mathbf{P}^{\boldsymbol{\lambda}\boldsymbol{\omega}}_{\text{I}}\right)^{-1}\mathbb{P}^{\boldsymbol{\omega}_r}\right]}{\boldsymbol{\hat{\alpha}_{\boldsymbol{\lambda}}}'\mathbb{P}^{\boldsymbol{\omega}_r} \boldsymbol{\hat{\alpha}_{\boldsymbol{\lambda}}}} \lambda_r,\quad r=1,\ldots,p,
\end{equation}
where $\boldsymbol{\hat{\alpha}_{\boldsymbol{\lambda}}}$ are given by \eqref{eq: AM A-Splines Problem solution} for fixed penalty parameters $\boldsymbol\lambda$, and where the associated unbiased variance estimator is 
\begin{equation}\label{eq:sigma}
\widehat{\sigma}^2_{\boldsymbol \lambda} = \frac{\left\| \mathbf{y} - \mathbf{B}\boldsymbol{\hat{\alpha}_{\boldsymbol\lambda}} \right\|^2}{n - \operatorname{tr} \left[ \left( \mathbf{B}' \mathbf{B} + \mathbf{P}^{\boldsymbol{\lambda}\boldsymbol{\omega}}_{\text{I}}  \right)^{-1} \mathbf{B}' \mathbf{B} \right]}.
\end{equation}
{\begin{remark}\label{rem:cheap trace}
	Since the adaptive ridge weights are strictly positive by construction, {\(\omega_{rl} \geq \quad\allowbreak \left(\max_l(\Delta^{q_r+1}\alpha_{r\,q_r+1+l})^2+\epsilon^2\right)^{-1} \allowbreak >0,\) $r=1,\ldots,p,\, l=1,\ldots,1,\ldots,k_r-2q_r-2,$ each block $\mathbf{P}^{\boldsymbol{\omega}_r}$} has rank $k_r-2q_r-2$, equal to the number of rows of $\mathbf{D}_{q_r+1}$. Moreover, the blocks $\{\mathbb{P}^{\boldsymbol{\omega}_r}\}_{r=1}^p$ act on disjoint sets of coefficients, so the first trace in \eqref{eq: Proper AKSSAM justification} admits the closed form
	\[
	\operatorname{tr}\left[\left(\mathbf{P}^{\boldsymbol{\lambda}\boldsymbol{\omega}}\right)^-\mathbb{P}^{\boldsymbol{\omega}_r}\right] = \frac{\operatorname{rank}\left(\mathbf{P}^{\boldsymbol{\omega}_r}\right)}{\lambda_r} = \frac{k_r-2q_r-2}{\lambda_r},
	\]
	 avoiding the explicit computation of the pseudoinverse.
\end{remark} }
Theorem~\ref{thm:AKSSAM THM} justifies that, if $\lambda_r\geq0$, then its Fellner-Schall update \eqref{eq: Fellner-Schall for AKSSAM} satisfies $\lambda_r^*\geq0$, ultimately rendering such algorithm valid for the setting in \eqref{eq: AM A-Splines Problem}, $r=1,\ldots,p$.

\subsection{AKSSAM algorithm}\label{subsec:3.4}


A natural way to address penalty parameter tuning in the additive A-splines problem \eqref{eq: AM A-Splines Problem} is to embed the Fellner–Schall scheme described in Section~\ref{subsec:3.3} within each iteration of the alternating optimization algorithm. This results in two nested loops: an outer loop that updates the weights until convergence of the $L_0$ surrogate, i.e. the adaptive ridge penalty, and an inner loop that optimizes the penalty parameters. The resulting algorithm, termed \textit{Automatic Knot Selection in Smooth Additive Models} (AKSSAM), performs automatic penalty parameter estimation for additive A-splines while avoiding the computational burden of grid-search procedures. Its implementation for additive models is presented in Section~\ref{subsubsec:3.4.1}, and Section~\ref{subsubsec:3.4.2} subsequently extends the approach to generalized additive models (GAMs).

\subsubsection{AKSSAM for AMs}\label{subsubsec:3.4.1}


Algorithm~\ref{Alg: AKSSAM} presents the pseudocode for AKSSAM. Starting from an initial set of penalty parameters, $\boldsymbol{\lambda}_0$, the algorithm initializes the weights as $\boldsymbol{\omega}=\boldsymbol{1}^s$ and computes an initial coefficient estimate as in \eqref{eq: AM A-Splines Problem solution}. It then executes the two nested loops described above: the inner loop updates the penalty parameters using the Fellner--Schall algorithm, while the outer loop uses the resulting coefficient estimates to update the weights of the adaptive ridge penalty until convergence. Once the algorithm has converged, the estimated weights $\hat{\boldsymbol{\omega}}$ and coefficients $\hat{\boldsymbol{\alpha}}$ are used to perform knot selection across the covariates according to \eqref{eq: Approximated Knot selection}. The selected internal knots associated with the $r$-th smooth component are given by
\[
    \mathbf{t}^{\text{sel}}_r =  \left\{t_{q_r+1+l}\in\mathbf{t}^*_r\,|\, \hat{\omega}_{rl}\left(\Delta^{q_r+1}\hat{\alpha}_{r\,q_r+1+l}\right)^2 \approx 1\right\}_{l = 1}^{k_r-2q_r-2},
\]
to which the external and boundary knots of the original knot sequence $\mathbf{t}_r$  are appended.

It is worth noting that, in Algorithm~\ref{Alg: AKSSAM}, the convergence criterion for the adaptive ridge procedure (i.e., the outer loop) is based on the relative change in the terms approximating the $L_0$ pseudonorms. Specifically, convergence is declared when these terms satisfy the relative tolerance Tol1 as:
\[\frac{\left\|\,\boldsymbol{\omega}^{\text{new}}\circ \left(\Delta^{\boldsymbol q+1}\boldsymbol{\alpha}^{\text{new}}\right)^2-\boldsymbol{\omega}\circ \left(\Delta^{\boldsymbol q+1}\boldsymbol{\alpha}\right)^2\,\right\|^2}{\left\|\,\boldsymbol{\omega}\circ \left(\Delta^{\boldsymbol q+1}\boldsymbol{\alpha}\right)^2\right\|^2}<\text{Tol1},
\]
where $\Delta^{\boldsymbol{q}+1}\boldsymbol{\alpha} = \left(\Delta^{q_1+1}\boldsymbol{\alpha}_1,\ldots,\Delta^{q_r+1}\boldsymbol{\alpha}_r\right)$ for each $r=1,\ldots,p$, $\circ$ represents the element-wise product of two vectors and $\boldsymbol{\omega}^{\text{new}},\boldsymbol{\alpha}^{\text{new}}$ represent the estimated weights and coefficients parting from $\boldsymbol{\omega},\boldsymbol{\alpha}$ within a single iteration of the loop. As well, if the iteration tolerance ItTol1 is violated, the loop is stopped.
Similarly, the convergence criterion for the Fellner--Schall algorithm (i.e., the inner loop) is based on the relative change in the estimated curves. Specifically, convergence is declared when this change falls below the tolerance Tol2 as:
\[\frac{\left\|\,\mathbf{B}(\boldsymbol{\alpha}^{\text{new}}-\tilde{\boldsymbol{\alpha}})\right\|^2}{\left\|\,\mathbf{B}\tilde{\boldsymbol{\alpha}}\right\|^2}<\text{Tol2},
\]
where $\boldsymbol{\alpha}^{\text{new}}$ are the estimated coefficients parting from $\tilde{\boldsymbol{\alpha}}$, and the loop is stopped if ItTol2 is exceeded.

\begin{algorithm}[h!]
\caption{AKSSAM for AMs}\label{Alg: AKSSAM}

\begin{flushleft}{\small
\textbf{Input:} Observations $\{(x_{i1},\ldots, x_{ip},y_i)\}_{i=1}^n$, knot sequences $\mathbf{t}_1,\ldots,\mathbf{t}_p$, initial penalty parameters $\boldsymbol{\lambda}_0$, tolerances $\text{ItTol1}, \text{Tol1}$, iteration and relative tolerances respectively for the adaptive ridge penalty and $\text{ItTol2}, \text{Tol2}$, iteration and relative tolerances respectively for the Fellner--Schall algorithm \\
\textbf{Output:} Sets of selected knots $\mathbf{t}_{1}^{\text{sel}},\ldots,\mathbf{t}_{p}^{\text{sel}}$ and the fitted coefficients ${\boldsymbol\alpha}^{\text{sel}}$ of  the additive model after knot selection}
\end{flushleft}

\begin{algorithmic}[1]
\State \textbf{Initialize} $\boldsymbol{\alpha}^{\text{new}} \gets \mathbf{0}$; $\boldsymbol{\omega}^{\text{new}} \gets \mathbf{1}$; $\boldsymbol{\lambda}^{\text{new}} \gets \boldsymbol{\lambda}_0$ \
\While{Adaptive Ridge ($\text{ItTol1}, \text{Tol1}$) does not converge}
    \State $\boldsymbol{\alpha} \gets \boldsymbol{\alpha}^{\text{new}}$
    \State $\boldsymbol{\omega} \gets \boldsymbol{\omega}^{\text{new}}$
    \While{Fellner-Schall ($\text{ItTol2}, \text{Tol2}$) does not converge}
    	\State $\tilde{\boldsymbol{\alpha}} \gets \boldsymbol{\alpha}^{\text{new}}$
        \State $\boldsymbol{\lambda} \gets \boldsymbol{\lambda}^{\text{new}}$
        \State $\boldsymbol{\alpha}^{\text{new}} \gets \left(\mathbf{B}'\mathbf{B}+\mathbf{P}^{\boldsymbol{\lambda}\boldsymbol{\omega}}_{\text{I}}\right)^{-1}\mathbf{B}'\mathbf{y}$ 
        \State $\boldsymbol{\lambda}^{\text{new}} \gets$ Fellner-Schall update ($\boldsymbol{\lambda},\boldsymbol{\alpha}^{\text{new}}$) as in \eqref{eq: Fellner-Schall for AKSSAM}
    \EndWhile
    \State $\boldsymbol{\omega}^{\text{new}} \gets$ update with $\boldsymbol{\alpha}^{\text{new}}$ via $\omega^{\text{new}}_{rl} \gets \left(\left(\Delta^{q_r+1} {\alpha}^{\text{new}}_{r\,q_r+1+l}\right)^2 + \varepsilon^2\right)^{-1},\quad
\left.\begin{aligned}
&l = 1, \ldots, k_r-2q_r-2 \\
&r = 1, \ldots, p
\end{aligned}\right.$
\EndWhile    
\State {$
\mathbf{t}^{\text{sel}}_r \gets  \left\{t_{q_r+1+l}\in\mathbf{t}^*_r\,|\, \omega^{\text{new}}_{rl}\left(\Delta^{q_r+1}\alpha^{\text{new}}_{r\,q_r+1+l}\right)^2 \approx 1\right\}_{l = 1}^{k_r-2q_r-2},\quad r = 1,\ldots,p 
$}
\State $\hat{\mathbf{t}}_r \gets \text{append external and boundary knots of $\mathbf{t}_r$ to $\mathbf{t}^{\text{sel}}_r$, $r=1,\ldots,p$}$
\State ${\boldsymbol\alpha}^{\text{sel}} \gets \left[(\mathbf{B}^{\text{sel}})'(\mathbf{B}^{\text{sel}}) + \mathbf{P}_{\text{I}}\right]^{-1}(\mathbf{B}^{\text{sel}})'\mathbf{y}$, where $\mathbf{B}^{\text{sel}}$ is the design matrix defined over $\hat{\mathbf{t}}_1,\ldots,\hat{\mathbf{t}}_p$.
\end{algorithmic}
\end{algorithm}

\subsubsection{AKSSAM for GAMs}\label{subsubsec:3.4.2}

The AM setting presented in Section~\ref{subsec:3.2} can be extended to GAMs \citep{tibshirani1996regression}, in which the response is allowed to follow any distribution belonging to the exponential family. AKSSAM can be as well extended from its additive formulation in Section~\ref{subsubsec:3.4.1} to the GAM setting through the P-IRLS algorithm and accordingly modifying the Fellner--Schall update.

Let $X_1,\ldots,X_p$ be $p$ continuous covariates and $Y$ the response following an exponential-family distribution with associated link function $g$, consider the GAM
\begin{equation}\label{eq:Generalized Additive model}
g(\mu_i) = \eta_i = \alpha_0 + f_1({x_{1i}}) + \ldots + f_p(x_{pi}),\qquad i=1,\ldots,n,
\end{equation}
where $f_1,\ldots, f_p$ are unknown smooth functions, $\alpha_0$ the intercept, $\{(x_{1i},\ldots,x_{pi},y_i)\}_{i=1}^n$ the set of observations, with $\mu_i$ and $\eta_i$ being the unknown expected value of $Y$ and linear predictor for the $i$-th observation, respectively.


As in \eqref{eq:Multivariate Smoothing}, each smooth function $f_r$ is represented using a B-spline basis expansion, with identifiability enforced as described in Section~\ref{subsec:3.2}. Since the estimation problem generally admits no closed-form solution, we adopt a penalized likelihood approach. As discussed in Section~\ref{subsec:2.2}, the resulting optimization problem can be solved using the P-IRLS algorithm.

Considering the identifiability penalty $\mathbf{P}_\text{I}$, the estimation problem in GAMs results in
\begin{equation}\label{eq: GAM Estimation}
\hat{\boldsymbol{\alpha}} \in \arg\max_{\boldsymbol{\alpha}\in\mathbb{R}^m} \ell(\boldsymbol \alpha) - \frac{1}{2}\boldsymbol{\alpha}'\mathbf{P}_{\text{I}}\boldsymbol{\alpha},
\end{equation}
where $\ell$ is the log-likelihood associated with the distribution of the response.

A-splines can be accommodated to GAMs by dealing with the maximum likelihood problem
\begin{equation}\label{eq: GAM A-splines}
 \arg\max_{\boldsymbol{\alpha}\in\mathbb{R}^m} \ell(\boldsymbol \alpha) - \frac{1}{2}\boldsymbol{\alpha}'\mathbf{P}^{\boldsymbol{\lambda}\boldsymbol{\omega}}_{\text{I}}\boldsymbol{\alpha},
\end{equation}
where $\mathbf{P}^{\boldsymbol{\lambda}\boldsymbol{\omega}}_{\text{I}}$ is given by \eqref{eq: Additive A-splines ii}. 

In this case, AKSSAM remains valid taking into account that the update of the regression coefficients has to be made iteratively through P-IRLS and 
that the Fellner--Schall update for a fixed-weight problem induced by \eqref{eq: GAM A-splines} is given by
\begin{equation}\label{eq: GAM Fellner-Schall for AKSSAM}
\lambda^*_r = \phi\frac{\text{tr}\left[\left({\mathbf{P}^{\boldsymbol{\lambda}\boldsymbol{\omega}}} \right)^-\mathbb{P}^{\boldsymbol{\omega}_r}\right]-\text{tr}\left[\left(\mathbf{B}'\Omega\mathbf{B} + \mathbf{P}^{\boldsymbol{\lambda}\boldsymbol{\omega}}_{\text{I}}\right)^{-1}\mathbb{P}^{\boldsymbol{\omega}_r}\right]}{\boldsymbol{\hat{\alpha}_{\boldsymbol{\lambda}}}'\mathbb{P}^{\boldsymbol{\omega}_r} \boldsymbol{\hat{\alpha}_{\boldsymbol{\lambda}}}} \lambda_r,\quad r=1,\ldots,p,
\end{equation}
where $\hat{\boldsymbol{\alpha}}_{\boldsymbol{\lambda}}$ are the penalized maximum-likelihood estimated parameters in \eqref{eq: GAM A-splines} for $\boldsymbol{\lambda}$, $\Omega$ is the weights matrix at convergence of P-IRLS used for estimating the coefficients and $\phi$ the scale parameter associated with the response's distribution. 

The justification of the validity of \eqref{eq: GAM Fellner-Schall for AKSSAM} is analogous to the argument exposed in Theorem~\ref{thm:AKSSAM THM} but noting that $\Omega$ cannot have null diagonal elements.

Therefore, the GAM version of AKSSAM is analogous to Algorithm~\ref{Alg: AKSSAM}, but requires changing the penalty parameter update in line~9 by \eqref{eq: GAM Fellner-Schall for AKSSAM} and the use of P-IRLS both at lines~8 and 15 for proper coefficient estimation.
\section{Computational experiments}
\label{sec:4}

In this section, we evaluate the performance of AKSSAM in a variety of additive (AM) and generalized additive model (GAM) settings. The experiments comprise two simulation studies, one with a Gaussian response and one with a Poisson response, together with two real-data applications: the \texttt{electric\_load} dataset (available in the R package \texttt{opera} \cite{R-opera}), involving a Gaussian response, and the \texttt{PimaIndiansDiabetes} dataset (available in the R package \texttt{mlbench} \cite{mlbench}), involving a binomial response. We compare AKSSAM with P-splines, the Particle Swarm Optimization (PSO) approach proposed by \citet{thielmann2025enhancing}, and the Geometrically Designed variable knot Spline regression algorithm (GeDS) introduced by \citet{Dimitrova2023}. Since our comparison is based on publicly available software, each method is evaluated only in those scenarios for which an implementation is available.

P-splines are fitted using the \texttt{gam} function from the R package \texttt{mgcv} \citep{mgcvR}, with smoothing parameters estimated by the Fellner--Schall (REML) algorithm. The PSO approach is implemented through the \texttt{fit\_gam\_pso} function available at \url{https://github.com/AnFreTh/OKPSPS}, using its default settings. Finally, GeDS is fitted using the \texttt{NGeDSgam} function from the R package \texttt{GeDS} \citep{GeDS}, which provides a GAM implementation of the method. Table~\ref{tab:SummaryChpt4} summarizes the data scenarios considered and the methods evaluated in each case.



It is worth noting that both the PSO and GeDS algorithms require additional tuning parameters to achieve their best performance. To ensure a fair comparison between automatic knot-selection procedures, we use the default values of these parameters throughout all experiments. Furthermore, the estimated smooth functions produced by GeDS are centered after model fitting, as the implementation does not enforce the usual identifiability constraints. This post-processing step makes the estimated component functions directly comparable with those obtained by the other methods.


Finally, all knot-selection procedures are initialized from the same deliberately overparameterized set of knots, dividing the observed covariates' support in $k_r-2q_r-1=40$ uniform-length intervals, i.e., since $q_r=3$, $k_r=47$ knots, for all $r=1,\ldots,p$. Starting from a large initial basis reflects the practical situation in which no prior information is available about the appropriate number of knots, allowing each method to determine the final model complexity through its own knot-selection mechanism. Although \citet{thielmann2025enhancing} considered considerably smaller initial knot sets (typically around 15 knots), we use $k=47$ for all methods to ensure a fair comparison under a common initialization and to evaluate their ability to discard unnecessary knots.


\begin{table}[ht]
\centering
\begin{tabular}{r|l|cccc}
\toprule
 \multicolumn{1}{c}{Setting} & Distribution & \textbf{AKSSAM} & \textbf{P-Splines} & \textbf{GeDS} & \textbf{PSO} \\
\midrule
{Simulation 1}  & Gaussian & $\checkmark$ & $\checkmark$ & $\checkmark$ & $\checkmark$ \\
{Simulation 2}  & Poisson & $\checkmark$ & $\checkmark$ & $\checkmark$ &  \\
{\texttt{electric\_load}} & Gaussian & $\checkmark$ & $\checkmark$ & $\checkmark$ & $\checkmark$ \\
{\texttt{PimaIndians}} & Binomial & $\checkmark$ & $\checkmark$ & $\checkmark$ &  \\
\bottomrule
\end{tabular}
\caption{Summary of computational experiments and {the algorithms evaluated in each scenario.}}
\label{tab:SummaryChpt4}
\end{table}

An implementation of AKSSAM in the R programming language, {together with the code required to reproduce all the experiments presented in this section, is publicly} available at \url{https://github.com/n-carrizo/AKSSAM}. {All experiments have been conducted} using R version 4.5.1 on a server equipped with an AMD Ryzen Threadripper 3990X processor (64 cores, 128 threads, 2.2–4.38 GHz), 256~GiB of RAM, and running Debian GNU/Linux 12 (bookworm).

The performance of the different methods is evaluated using metrics appropriate for each response type. Specifically, the mean squared error (MSE) is computed with respect to the response variable for the Gaussian scenarios and with respect to the linear predictor for the Poisson scenarios, whereas the area under the ROC curve (AUC) is used for the binomial scenario. In addition, we assess the complexity and goodness-of-fit of the fitted models by reporting the total number of parameters, defined as the number of B-spline basis functions plus the intercept term (Size), the effective degrees of freedom (EDF), and the Bayesian Information Criterion (BIC). For AKSSAM and GeDS, the size of the B-spline basis, which reflects the sparsity of the fitted model, coincides with its effective degrees of freedom. Finally, the runtime of each method (in seconds) is also reported.


\begin{remark}
Before proceeding, we highlight several practical considerations regarding the implementation of AKSSAM, including potential sources of numerical instability.
\begin{itemize}
        \item Concerning convergence Algorithm~\ref{Alg: AKSSAM}, neither the adaptive ridge scheme, due to the non-convexity of the underlying $L_0$-penalized problem, nor its combination with the Fellner--Schall updates enjoys global convergence guarantees. In our experiments, the median number of iterations of the outer loop required for our algorithm to converge has been at most 18 throughout the different instances, and the selected knot configurations have been robust to the choice of the initial penalty parameters $\boldsymbol{\lambda}_0 = 10\cdot\mathbf{1}^p$ .

		\item 
        To improve the numerical stability of Algorithm~\ref{Alg: AKSSAM}, if the update produces $\lambda_r^*<0$ as a result of numerical instabilities or finite-precision arithmetic, we replace it with a small positive value, for example $\lambda_r^*=10^{-1}$. In our experience, this simple safeguard may require a few additional iterations but substantially reduces the occurrence of non-convergence.
		
		\item 
        Notice that, with $\epsilon=10^{-5}$ in the adaptive ridge procedure, if $\Delta^{q_r+1}\alpha_{r\,q_r+1+l}=0$, then $\omega_{rl}\approx10^{10},$ $r=1,\ldots,p,\, l=1,\ldots,k_r-2q_r-2$. Consequently, the diagonal matrix $W_r$ in \eqref{eq: Additive A-splines ii} may contain entries that differ by several orders of magnitude. While this has not caused numerical issues in the Gaussian setting, response distributions requiring the P-IRLS algorithm may lead to ill-conditioning of the matrix $\mathbf{B}^{\prime}\Omega\mathbf{B}+\mathbf{P}_{\text{I}}^{\boldsymbol{\lambda}\boldsymbol{\omega}}$, owing to the combined effect of the large entries in $W_r$ and the P-IRLS weight matrix $\Omega$. This phenomenon is likely a consequence of finite-precision arithmetic. In such cases, we have found that imposing an upper bound on the updated penalty parameters effectively resolves the issue. We recommend using $10^{4}$ as this upper bound, since the converged penalty parameters rarely exceed this value.
\end{itemize}\end{remark}

\subsection{Simulations \label{subsec: 4.1}}

{
The following settings are used in all simulation studies. For every covariate ($r=1,\ldots,p$), a cubic B-spline basis ($q_r=3$) with an initial set of $k_r=47$ equally-spaced knots is considered and  AKSSAM is initialized with a penalty parameters $\lambda_r=10.$  Regarding tolerances of AKSSAM, we use $\text{Tol1}=10^{-5}$, $\text{Tol2}=10^{-5}$, $\text{Tol3}=10^{-5}$, $\text{ItTol1}=100$, $\text{ItTol2}=100$, and $\text{ItTol3}=100$. These values have been selected after extensive preliminary experimentation and have been found to provide robust performance across a wide range of scenarios. For P-splines, PSO, and GeDS, the same initial knot configuration is employed, while all remaining algorithm-specific parameters are kept at their default values.
}


{
Additional results are provided in Appendix~\ref{app: Sim Setting}, including boxplots of the evaluation metrics across simulation replicates and the mean estimated function for each covariate in both simulation scenarios.
}


\subsubsection{Simulation 1: Gaussian}

The first simulation study considers the additive model defined in \eqref{eq:Additive model}, with $p=6$ covariates independently generated from the uniform distribution, $X_r\sim\mathcal{U}[0,1]$, for $r=1,\ldots,6$, and Gaussian errors $\varepsilon\sim\mathcal{N}(0,\sigma_{\text{error}}^2)$. The smooth component functions $f_r$ are listed in Table~\ref{tab:functions Gaussian} and illustrated in Figure~\ref{fig: Covariates_Sim2}. Each function is centered to have zero mean.


\begin{table}[H]
\renewcommand{\arraystretch}{1.6}
\centering
\begin{tabular}{c l} \hline
\textbf{Function} & \textbf{Expression} \\
\hline
$f_1(x)$ & $\cos\left(2\pi(2x - 2.5)\right)$ \\
$f_2(x)$ & $3e^{-70(x - 0.3)^2} + 3e^{-10(x - 0.7)^2} + \frac{x}{10}$ \\
$f_3(x)$ & $x^3 - 2\sin\left(\pi x^3\right)$ \\
$f_4(x)$ & $\frac{1}{3} \left(x + 10e^{-20(x - 0.5)^2} \right)$ \\
$f_5(x)$ & $\cos\left(6(2x - 3)\right) - 2.5(x - 1)$ \\
$f_6(x)$ & $5\sqrt{x(1 - x)} \cdot \sin\left( 2\pi\frac{\left(1 + 2^{-3/5}\right)}{x + 2^{-3/5}} \right)$ \\ \hline

\end{tabular}
\caption{Smooth functions for each covariate in {Simulation} 1.}
\label{tab:functions Gaussian}
\end{table}

\begin{figure}[htbp]
	\centering
	\includegraphics[width=0.9\textwidth]{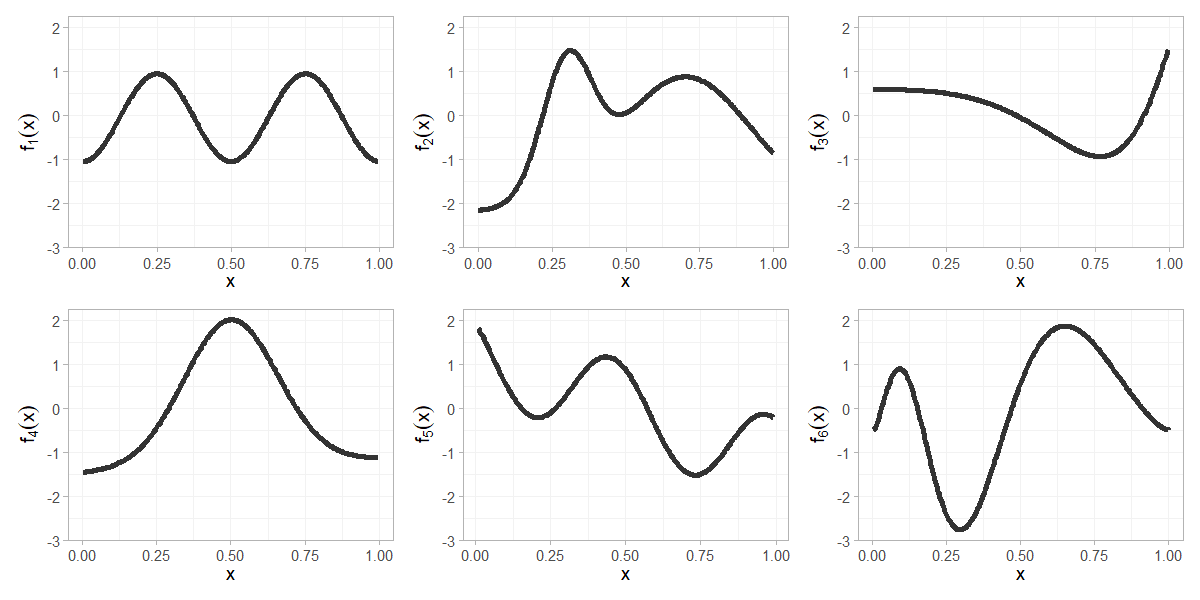}
	\caption{$f_r(x)$ in Table~\ref{tab:functions Gaussian}, $r=1,\ldots,6$.}
	\label{fig: Covariates_Sim2}
\end{figure}

The simulation study is conducted under different combinations of sample size and noise level. Specifically, we consider sample sizes $n\in\{300,500,800\}$ and error variances $\sigma_{\text{error}}^2$ chosen to yield signal-to-noise ratios (SNRs) $\in \{1,2,4\}$. Each of the resulting nine simulation settings is replicated 100 times, and all performance metrics are computed on the training data.


{
Table~\ref{tab:MultivarGaussianSimulation} reports the mean value of each performance metric over the 100 replicates, with the corresponding standard deviations shown in parentheses. The rows correspond to the different simulation settings, identified by the combination of sample size (first column) and SNR (second column). For each setting, the best average value of each metric is highlighted.
}


\begin{table}[H]
	\centering
	\scriptsize
	\setlength{\tabcolsep}{3pt}
	\renewcommand{\arraystretch}{0.9}

	\begin{tabular}{lr|rrrr|rrrrr}
		\toprule
		\multicolumn{2}{c}{} &
		\multicolumn{4}{c}{\textbf{AKSSAM}} &
		\multicolumn{5}{c}{\textbf{P-Splines}} \\
		\midrule
		$n$ & SNR & MSE & BIC & EDF/Size & Time &
		MSE & BIC & EDF & Size & Time \\
		\midrule
		300 & 1
			& 0.94 (0.21) & 1566 (28) & 36.91 (3.42) &  11.76 (8.63)
			& \cellcolor{gray!50} 0.85 (0.24) & \cellcolor{gray!50} 1538 (23) & \cellcolor{gray!50} 31.67 (3.24) & 259 (0) & 1.14 (0.43) \\
		& 2
			& 0.47 (0.11) & 1363 (26) & \cellcolor{gray!50} 38.83 (2.26) &  7.22 (5.81)
			& \cellcolor{gray!50} 0.42 (0.09) & \cellcolor{gray!50} 1358 (25) &  39.35 (2.03) & 259 (0) &  0.84 (0.31) \\
		& 4
			& 0.24 (0.05) & \cellcolor{gray!50} 1160 (26) & \cellcolor{gray!50} 40.11 (1.53) & 6.01 (6.51)
			& \cellcolor{gray!50} 0.22 (0.04) & 1185 (25) & 46.22 (1.87) & 259 (0) &  0.64 (0.08) \\
		\midrule
		500 & 1
			& 0.58 (0.12) &2517 (34) & 38.37 (2.05) & 9.07 (5.95)
			& \cellcolor{gray!50} 0.49 (0.10) & \cellcolor{gray!50} 2509 (32) & 37.81 (2.15) & 259 (0) &  1.00 (0.12) \\
		& 2
			& 0.29 (0.06) & \cellcolor{gray!50} 2176 (34) & 39.84 (1.66) & 6.25 (4.33)
			& \cellcolor{gray!50} 0.27 (0.05) & 2197 (31) & 44.82 (1.71) & 259 (0) &  0.84 (0.12) \\
		& 4
			& 0.16 (0.03) & \cellcolor{gray!50} 1835 (34) & \cellcolor{gray!50} 41.26 (1.62) &  4.09 (0.96)
			& \cellcolor{gray!50} 0.14 (0.03) & 1883 (36) & 52.67 (1.58) & 259 (0) &  0.70 (0.08) \\
		\midrule
		800 & 1
			& 0.36 (0.08) & \cellcolor{gray!50} 3933 (49) & 39.78 (2.11) &  7.52 (6.29)
			& \cellcolor{gray!50} 0.32 (0.06) & 3947 (40) & 43.28 (1.84) & 259 (0) &  1.20 (0.17) \\
		& 2
			& 0.19 (0.04) & \cellcolor{gray!50} 3384 (47) &  41.03 (1.65) & 5.12 (1.85)
			& \cellcolor{gray!50} 0.17 (0.03) & 3427 (40) & 50.94 (1.52) & 259 (0) &  0.96 (0.15) \\
		& 4
			& 0.10 (0.02) & \cellcolor{gray!50} 2837 (48) &  42.62 (1.53) & 3.89 (1.44)
			& \cellcolor{gray!50} 0.10 (0.01) & 2921 (43) & 58.42 (1.36) & 259 (0) & \cellcolor{gray!50}   0.84 (0.09) \\
		\bottomrule
	\end{tabular}

	\vspace{1em}

	\begin{tabular}{lr|rrrr|rrrrr}
		\toprule
		\multicolumn{2}{c}{} &
		\multicolumn{4}{c}{\textbf{GeDS}} &
		\multicolumn{5}{c}{\textbf{PSO}} \\
		\midrule
		$n$ & SNR & MSE & BIC & EDF/Size & Time &
		MSE & BIC & EDF & Size & Time \\
		\midrule
		300 & 1
			& 1.34 (0.33) & 1604 (26) & 37.23 (4.28) & \cellcolor{gray!50} 0.40 (0.14)
			& 0.99 (0.37) & 1577 (42) & 44.59 (11.61) & 193.30 (40.71) & 590 (21) \\
		& 2
			& 0.88 (0.29) & 1425 (28) & 39.97 (4.84) & \cellcolor{gray!50} 0.45 (0.12)
			& 0.52 (0.14) & 1387 (38) & 49.07 (10.04) & 222.63 (30.47) & 591 (16) \\
		& 4
			& 0.54 (0.25) & 1249 (44) & 44.15 (4.85) & \cellcolor{gray!50} 0.53 (0.15)
			& 0.26 (0.07) & 1201 (37) & 52.34 (9.46) & 231.84 (23.43) & 610 (15) \\
		\midrule
		500 & 1
			& 1.17 (0.32) & 2571 (38) & \cellcolor{gray!50} 35.03 (2.90) & \cellcolor{gray!50} 0.54 (0.18)
			& 0.55 (0.13) & 2539 (44) & 45.66 (8.66) & 187.23 (34.80) & 636 (22) \\
		& 2
			& 0.83 (0.35) &  2269 (54) & \cellcolor{gray!50} 37.98 (3.75) & \cellcolor{gray!50} 0.61 (0.16)
			& 0.30 (0.07) & 2216 (38) & 50.29 (6.55) & 212.78 (29.89) & 662 (22) \\
		& 4
			& 0.50 (0.25) & 1961 (67) & 41.78 (4.65) & \cellcolor{gray!50} 0.69 (0.18)
			& 0.16 (0.04) & 1892 (47) & 55.71 (7.38) & 228.43 (25.33) & 659 (14) \\
		\midrule
		800 & 1
			& 1.09 (0.33) & 4015 (60) & \cellcolor{gray!50} 34.02 (2.45) & \cellcolor{gray!50} 0.76 (0.23)
			& 0.38 (0.10) & 3978 (52) & 50.78 (8.30) & 186.56 (34.98) & 742 (29) \\
		& 2
			& 0.81 (0.32) & 3537 (77) & \cellcolor{gray!50} 36.90 (3.50) & \cellcolor{gray!50} 0.95 (0.28)
			& 0.21 (0.06) & 3449 (58) & 56.61 (8.95) & 207.48 (36.66) & 737 (20) \\
		& 4
			& 0.51 (0.28) & 3040 (108) & \cellcolor{gray!50} 41.52 (4.49) &  1.13 (0.31)
			& 0.11 (0.02) & 2923 (53) & 59.63 (6.95) & 225.76 (28.46) & 772 (23) \\
		\bottomrule
	\end{tabular}

	\caption{{    Mean performance metrics (standard deviations in parentheses) across 100 replicates for Simulation 1.}}
	\label{tab:MultivarGaussianSimulation}
\end{table}

{
Table~\ref{tab:MultivarGaussianSimulation} shows that AKSSAM achieves the lowest average BIC in six of the nine simulation settings, namely the most informative ones ($n=300$ with $\mathrm{SNR}=4$, $n=500$ with $\mathrm{SNR}\in\{2,4\}$, and all settings with $n=800$). In the remaining, less informative scenarios ($n=300$ with $\mathrm{SNR}\in\{1,2\}$ and $n=500$ with $\mathrm{SNR}=1$), its performance is comparable to that of P-splines. In the most informative settings, AKSSAM produces substantially sparser models than P-splines while maintaining a similar predictive accuracy, resulting in consistently lower BIC values. By contrast, in the most challenging scenario ($n=300$, $\mathrm{SNR}=1$), P-splines achieves both higher predictive accuracy and simpler models, suggesting that it is better suited to settings with limited and highly noisy training data. In terms of computational time, P-splines is consistently faster than AKSSAM, although the additional runtime required by AKSSAM remains moderate.
}


{
GeDS is consistently the fastest method. Moreover, in the most informative scenarios, it produces the sparsest models in terms of EDF. However, this reduction in model complexity comes at the expense of predictive performance, yielding the highest MSE values and larger BIC values than both AKSSAM and P-splines across all simulation settings. Although GeDS typically retains fewer knots than AKSSAM, the resulting loss in predictive accuracy outweighs the gain in sparsity.
}


{
PSO exhibits the weakest overall performance. It produces the most complex models and requires computational times that are orders of magnitude larger than those of the competing methods, without achieving any improvement in predictive accuracy. Starting from an initial basis of 259 B-spline basis functions, PSO typically retains around 200, in contrast to the approximately 40 basis functions selected by AKSSAM and GeDS.
}


\subsubsection{{Simulation} 2: Poisson}

{ The second simulation study considers the generalized additive model defined in \eqref{eq:Generalized Additive model}, with a Poisson response and $p=3$ covariates independently generated from the uniform distribution, $X_r\sim\mathcal{U}[0,1]$, for $r=1,2,3$. The smooth component functions $f_r$ are listed in Table~\ref{tab:functions Poisson} and illustrated in Figure~\ref{fig: Covariates_Sim3}. Each function is centered to have zero mean.}


\begin{table}[H]
\renewcommand{\arraystretch}{1.5}
\centering
\begin{tabular}{c l}
\hline
\textbf{Function} & \textbf{Expression} \\
\hline
$f_1(x)$ & $\frac{1}{2} \sin(4\pi x) \log(20x + 1)$ \\
$f_2(x)$ & $x^3 + \sin\left(2\pi(x^7 - 2x^3)\right)$ \\
$f_3(x)$ & $\frac{1}{2} \left(\sin(6\pi x) + 3x - 5x^2 + \cos(3\pi x)\right)$ \\
\hline
\end{tabular}
\caption{Smooth functions for each covariate in { Simulation} 2.}
\label{tab:functions Poisson}
\end{table}

\begin{figure}[htbp]
	\centering
	\includegraphics[width=0.9\textwidth]{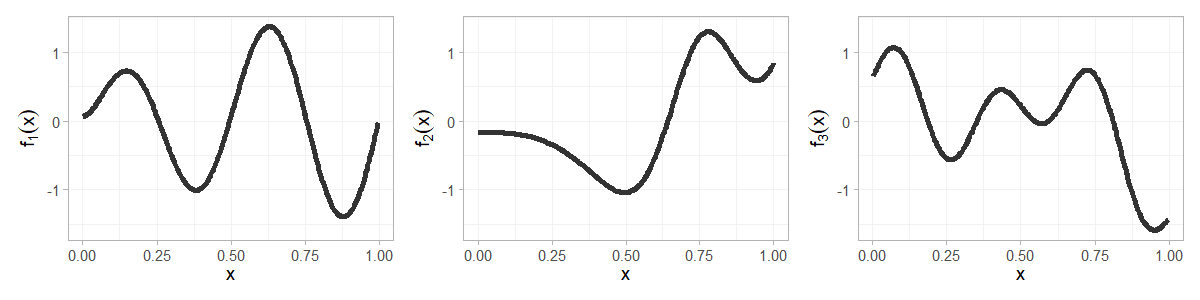}
	\caption{$f_r(x)$ in Table~\ref{tab:functions Poisson}, $r=1,2,3$.}
	\label{fig: Covariates_Sim3}
\end{figure}

{
The simulation study is conducted for three different sample sizes, $n\in\{300,500,800\}$. Each simulation setting is replicated 100 times, and all performance metrics are computed on the training data.

Table~\ref{tab:PoissonSimulation} reports the mean value of each performance metric over the 100 simulation replicates, with the corresponding standard deviations shown in parentheses. The rows correspond to the three simulation settings, identified by the sample size $n$.
}


\begin{table}[H]
	\centering
	\scriptsize
	\setlength{\tabcolsep}{3pt}
	\renewcommand{\arraystretch}{0.9}

	\begin{tabular}{l|rrrr|rrrrr}
		\toprule
		\multicolumn{1}{c}{} &
		\multicolumn{4}{c}{\textbf{AKSSAM}} &
		\multicolumn{5}{c}{\textbf{P-Splines}} \\
		\midrule
		$n$ & MSE{$\cdot 10^{-2}$} & BIC & EDF/Size & Time &
		MSE{$\cdot 10^{-2}$} & BIC & EDF & Size & Time \\
		\midrule
		300 & 8.80 (4.80) & \cellcolor{gray!50} 916 (25) &  23.49 (1.24) & 0.84 (0.19)
			& \cellcolor{gray!50} 7.90 (2.90) & 945 (25) & 29.66 (0.93) & 130 (0) & \cellcolor{gray!50} 0.27 (0.03) \\
		500 & 4.90 (2.10) & \cellcolor{gray!50} 1474 (33) &  24.24 (1.33) & 0.77 (0.10)
			& \cellcolor{gray!50} 4.60 (1.60) & 1520 (31) & 33.35 (0.88) & 130 (0) & \cellcolor{gray!50} 0.34 (0.03) \\
		800 & 3.10 (1.10) & \cellcolor{gray!50} 2294 (38) &  25.09 (1.64) & 0.77 (0.11)
			& \cellcolor{gray!50} 3.00 (1.10) & 2360 (38) & 36.97 (0.67) & 130 (0) & \cellcolor{gray!50} 0.42 (0.04) \\
		\bottomrule
	\end{tabular}

	\vspace{1em}

	\begin{tabular}{l|rrrr}
		\toprule
		\multicolumn{1}{c}{} &
		\multicolumn{4}{c}{\textbf{GeDS}} \\
		\midrule
		$n$ & MSE{$\cdot 10^{-2}$} & BIC & EDF/Size & Time \\
		\midrule
		300 &  42.90 (26.30) & 1061 (107) & \cellcolor{gray!50} 19.74 (2.91) &  0.31 (0.09) \\
		500 &  36.90 (26.40) & 1709 (196) & \cellcolor{gray!50} 20.31 (3.15) &  0.49 (0.12) \\
		800 &  37.20 (26.00) & 2694 (274) & \cellcolor{gray!50} 20.12 (2.93) &  0.79 (0.19) \\
		\bottomrule
	\end{tabular}

	\caption{{    Mean performance metrics (standard deviations in parentheses) across 100 replicates for Simulation 2.}}
	\label{tab:PoissonSimulation}
\end{table}

{
Table~\ref{tab:PoissonSimulation} shows that AKSSAM achieves predictive performance comparable to that of P-splines across all simulation settings, although with slightly larger prediction errors. In return, AKSSAM consistently produces simpler models, reducing the effective degrees of freedom by approximately $10$ for $n\in\{500,800\}$. P-splines is also consistently faster than AKSSAM, although both methods require less than one second on average. Overall, these results indicate that AKSSAM attains a favorable trade-off between predictive accuracy and model complexity, yielding substantially sparser models while maintaining a level of accuracy close to that of P-splines.

GeDS, by contrast, performs considerably worse than both AKSSAM and P-splines. Although it is consistently the fastest method and produces the sparsest models in terms of EDF, its prediction errors are approximately one order of magnitude larger, leading to the least favorable BIC values across all simulation settings. As in the Gaussian simulation study, these results suggest that the aggressive reduction in model complexity achieved by GeDS comes at the expense of predictive performance.
}



\subsection{Real Data \label{subsec: 4.2}}

{
The following settings are used in all real-data experiments. For every covariate ($r=1,\ldots,p$), a cubic B-spline basis ($q=3$) with an initial set of $k=47$ equally-spaced knots is considered and AKSSAM is initialized with a penalty parameter $\lambda_r=10.$ Regarding tolerances of AKSSAM, we use $\text{Tol1}=10^{-5}$, $\text{Tol2}=10^{-5}$, $\text{Tol3}=10^{-5}$, $\text{ItTol1}=100$, $\text{ItTol2}=100$, and $\text{ItTol3}=100$. For P-splines, PSO, and GeDS, the same initial knot configuration is employed, while all remaining algorithm-specific parameters are kept at their default values. Predictive performance is assessed using 5-fold cross-validation.
}


Additional results are provided in  Appendix~\ref{app: Real Setting}, including  boxplots of the evaluation metrics across the folds.

\subsubsection{\texttt{electric\_load} dataset}

{To evaluate AKSSAM on a real-world additive modeling problem, we consider the \texttt{electric\_load} dataset, available in the \texttt{opera} package for \texttt{R} \citep{R-opera}. The dataset contains $n=731$ observations collected in France between 1996 and 2009, including electricity demand, temperature measurements, and industrial production indices. The variables used in our analysis are the following:}

\begin{itemize}
        \item {\texttt{Load1}}: Lagged electrical load.
        \item \texttt{NumWeek}: Value between 0 and 1 representing the fraction of the year. 
        \item \texttt{Temp}: Temperature in celsius (ºC).
        \item \texttt{IPI}: Industrial production index.
\end{itemize}

{ 
The objective is to model \texttt{Load1} using the AM in \eqref{eq:Additive model}, with \texttt{NumWeek}, \texttt{Temp}, and \texttt{IPI} as covariates, while automatically selecting the knots defining the B-spline basis for each smooth component. Both the response variable and the covariates are standardized prior to model fitting.}


{
Table~\ref{tab:electric.load} reports the performance metrics obtained in each of the five cross-validation folds, with each row corresponding to one fold. For each fold, the best value of each metric across methods is highlighted. The mean and standard deviation of each metric over the five folds are also included.
}


\begin{table}[H]
	\centering
	\scriptsize
	\setlength{\tabcolsep}{3pt}
	\renewcommand{\arraystretch}{0.9}
	\begin{tabular}{l|rrrr|rrrrr}
		\toprule
		\multicolumn{1}{c}{} &
		\multicolumn{4}{c}{\textbf{AKSSAM}} &
		\multicolumn{5}{c}{\textbf{P-Splines}} \\
		\midrule
		Fold & MSE & BIC & EDF/Size & Time &
		MSE & BIC & EDF & Size & Time \\
		\midrule
		1 & \cellcolor{gray!50} 0.23 & \cellcolor{gray!50} 792.82 & \cellcolor{gray!50} 19.00 &  1.50
		  & \cellcolor{gray!50} 0.23 & 802.13 & 22.60 & 130 & 0.28 \\
		2 & 0.25 & \cellcolor{gray!50} 796.02 & \cellcolor{gray!50} 20.00 &  0.66
		  & \cellcolor{gray!50}0.23 &  811.21 & 24.87 & 130 & 0.27 \\
		3 & 0.22 & \cellcolor{gray!50} 793.10 & \cellcolor{gray!50} 18.00 & 1.32
		  & \cellcolor{gray!50} 0.21 &  808.12 & 21.90 & 130 & 0.35 \\
		4 & \cellcolor{gray!50} 0.16 & \cellcolor{gray!50} 837.40 & \cellcolor{gray!50} 19.00 &  0.85
		  & \cellcolor{gray!50} 0.16 &  848.17 & 23.39 & 130 & 0.39 \\
		5 & \cellcolor{gray!50} 0.20 & 826.22 & \cellcolor{gray!50} 20.00 &  0.61
		  & \cellcolor{gray!50} 0.20 & \cellcolor{gray!50} 823.20 & 22.73 & 130 & \cellcolor{gray!50} 0.23 \\
        \midrule 
         Mean (Sd)& 0.21 (0.03) & \cellcolor{gray!50} 809 (21) & \cellcolor{gray!50} 19.2 (0.8) & 0.99 (0.02) & 0.21 (0.03) & 819 (18) & 23.1 (1.1) & 130 (0) & 0.30 (0.06) \\
		\bottomrule
	\end{tabular}
	\vspace{1em}
    
	\begin{tabular}{l|rrrr|rrrrr}
		\toprule
		\multicolumn{1}{c}{} &
		\multicolumn{4}{c}{\textbf{GeDS}} &
		\multicolumn{5}{c}{\textbf{PSO}} \\
		\midrule
		Fold & MSE & BIC & EDF/Size & Time &
		MSE & BIC & EDF & Size & Time \\
		\midrule
		1 & \cellcolor{gray!50} 0.23 & 815.03 & 24.00 & \cellcolor{gray!50} 0.18
		  & \cellcolor{gray!50} 0.23 & 870.22 & 39.27 & 130 & 4.24 \\
		2 &\cellcolor{gray!50} 0.23 & 827.30 &  27.00 & \cellcolor{gray!50} 0.19
		  & 0.24 & 963.84 & 62.55 & 130 & 3.81 \\
		3 & \cellcolor{gray!50} 0.21 & 830.62 & 25.00 & \cellcolor{gray!50} 0.17
		  & \cellcolor{gray!50} 0.21 & 949.82 & 57.24 & 130 & 4.57 \\
		4 & 0.17 & 862.57 & 25.00 & \cellcolor{gray!50} 0.18
		  & 0.17 & 905.81 & 37.44 & 130 & 3.81 \\
		5 & \cellcolor{gray!50} 0.20 & 848.47 & 26.00 & \cellcolor{gray!50} 0.23
		  & 0.22 & 929.03 & 48.28 & 130 & 4.09 \\
        \midrule 
        Mean (Sd) & 0.21 (0.03) & 837 (19) & 25.4 (1.1) & \cellcolor{gray!50} 0.19 (0.02) & \cellcolor{gray!50} 0.21 (0.02) & 924 (37) & 49.0 (11) & 130 (0) & 4.10 (0.30) \\
		\bottomrule
	\end{tabular}
	\caption{Performance metrics and their average across 5 folds for \texttt{electric\_load}.}
	\label{tab:electric.load}
\end{table}

{
Table~\ref{tab:electric.load} shows that AKSSAM and P-splines achieve nearly identical predictive performance across all five cross-validation folds. However, AKSSAM consistently produces substantially simpler models, with an average EDF of approximately 19, compared with larger EDF values for P-splines. This favorable trade-off between predictive accuracy and model complexity results in AKSSAM achieving the lowest BIC in four out of the five folds, with P-splines consistently ranking second.

GeDS also produces sparser models than P-splines and PSO. Nevertheless, AKSSAM attains comparable predictive performance while using approximately 5 to 7 fewer basis functions on average, leading to more favorable BIC values. These results indicate that the knot-selection strategy adopted by AKSSAM is more effective at balancing model complexity and predictive accuracy.

PSO exhibits the weakest overall performance. It fails to substantially reduce the initial knot set, producing models with EDF values ranging from approximately 37 to 62, while offering no improvement in predictive accuracy. Consequently, the resulting smoothing is insufficient, leading to unnecessarily complex models. 

In terms of computational time, GeDS is consistently the fastest method, followed by P-splines, AKSSAM, and PSO. Although AKSSAM requires between two and five times the runtime of P-splines and GeDS, its computational cost remains modest, whereas PSO is between three times and up to an order of magnitude slower than the other methods.
}


{
Figure~\ref{fig: Covariates_electic.load} displays the mean estimated smooth functions for each covariate, together with the distribution of the selected knots along the $x-$axis. The shading intensity reflects the frequency with which each knot is selected across the cross-validation folds, with darker regions indicating more frequently selected knots.
}

\begin{figure}[htbp]
	\centering
	\includegraphics[width=\textwidth]{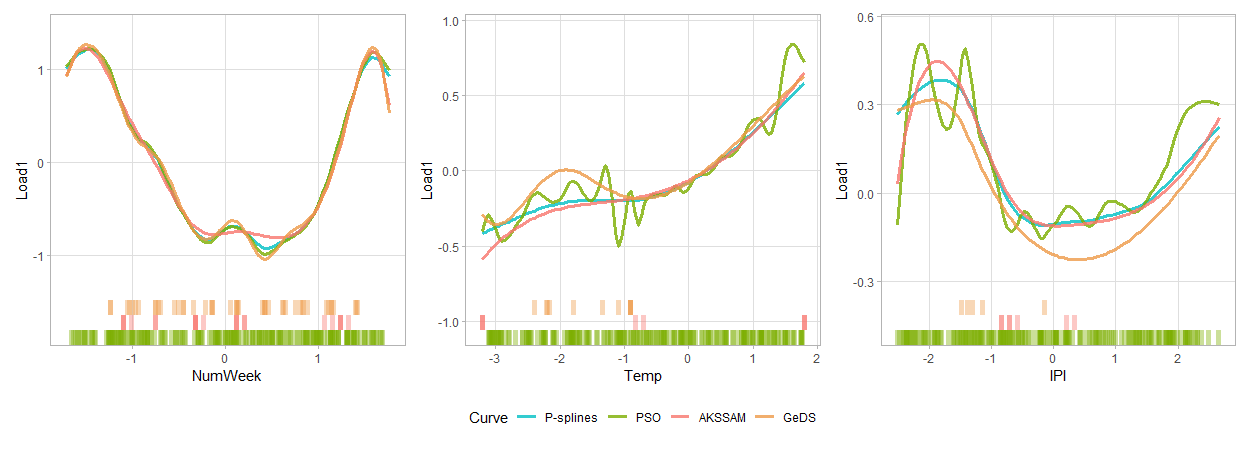}
	\caption{
    Mean estimated smooth functions for each covariate and distribution of the selected knots across the 5 folds in \texttt{electric\_load}.}
	\label{fig: Covariates_electic.load}
\end{figure}

{
Regarding the estimated smooth functions, PSO fails to adequately smooth the effects of \texttt{Temp} and \texttt{IPI}, resulting in noticeably rougher estimates than those obtained with the other methods. By contrast, P-splines, GeDS, and AKSSAM produce similar estimated functions for all covariates, although AKSSAM yields a slightly smoother estimate for \texttt{NumWeek}. The distribution of the selected knots obtained with AKSSAM is both sparse and consistent across the cross-validation folds, with higher selection frequencies concentrated in well-defined regions of the covariate domain. In contrast, PSO exhibits a much more dispersed selection pattern, with no clearly identifiable regions of frequent knot selection. GeDS also shows a less satisfactory knot configuration for \texttt{NumWeek}, both in terms of the number and the location of the selected knots, which may be attributable to the limitations of the backfitting procedure.
}


\subsubsection{\texttt{PimaIndians} dataset}

{
To evaluate AKSSAM on a real-world generalized additive modeling problem with a binomial response, we consider the \texttt{PimaIndiansDiabetes} dataset, available in the \texttt{mlbench} package for \texttt{R} \citep{mlbench}. The dataset contains $n=768$ observations, each corresponding to a patient and including several physiological measurements together with a binary indicator of diabetes. The variables used in our analysis are the following:
}

\begin{itemize}
        \item {\texttt{diabetes}}: Binary variable indicating the presence of diabetes on the patient.
        \item \texttt{mass}: Body mass index of the patient. 
        \item \texttt{age}: Age of the patient in years.
\end{itemize}

{The objective is to model \texttt{diabetes} using the GAM in \eqref{eq:Generalized Additive model}, with \texttt{mass} and \texttt{age} as covariates, while automatically selecting the knots defining the spline basis for each smooth component. Both covariates are standardized prior to model fitting.}


{
Table~\ref{tab:PimaIndians} reports the performance metrics obtained for each of the five cross-validation folds. The best value of each metric across methods is highlighted for every fold. The mean and standard deviation of each metric over the five folds are also reported.}

\begin{table}[H]
	\centering
	\scriptsize
	\setlength{\tabcolsep}{3pt}
	\renewcommand{\arraystretch}{0.9}
\resizebox{\textwidth}{!}{
	\begin{tabular}{l|rrrr|rrrrr|rrrr}
		\toprule
		\multicolumn{1}{c}{} &
		\multicolumn{4}{c}{\textbf{AKSSAM}} &
		\multicolumn{5}{c}{\textbf{P-Splines}} &
		\multicolumn{4}{c}{\textbf{GeDS}} \\
		\midrule
		Fold & AUC & BIC & EDF/Size & Time &
		AUC & BIC & EDF & Size & Time &
		AUC & BIC & EDF/Size & Time \\
		\midrule
		1 & \cellcolor{gray!50} 0.70 & 726.88 &  14.00 & 1.22
		  & \cellcolor{gray!50} 0.70 & \cellcolor{gray!50} 705.75 & \cellcolor{gray!50} 11.14 & 87 & 0.23
		  & \cellcolor{gray!50} 0.70 & 757.12 & 18.00 & \cellcolor{gray!50} 0.13 \\
		2 & \cellcolor{gray!50} 0.77 & 714.72 &  12.00 & 0.73
		  & \cellcolor{gray!50} 0.77 & \cellcolor{gray!50} 700.14 & \cellcolor{gray!50} 9.66 & 87 & 0.21
		  & 0.75 & 746.08 & 14.00 & \cellcolor{gray!50} 0.09 \\
		3 & 0.79 & 737.68 &  12.00 & 1.01
		  & \cellcolor{gray!50} 0.80 & \cellcolor{gray!50} 724.93 & \cellcolor{gray!50} 9.90 & 87 & 0.35
		  & 0.79 & 764.84 & 13.00 & \cellcolor{gray!50} 0.08 \\
		4 & \cellcolor{gray!50} 0.78 & 732.92 &  12.00 & 0.86
		  & \cellcolor{gray!50} 0.78 & \cellcolor{gray!50} 715.75 & \cellcolor{gray!50} 9.34 & 87 & 0.32
		  & 0.77 & 767.06 & 17.00 & \cellcolor{gray!50} 0.11 \\
		5 & 0.73 & 731.32 &  12.00 & 1.06
		  & \cellcolor{gray!50} 0.74 & \cellcolor{gray!50} 715.93 & \cellcolor{gray!50} 9.37 & 87 & 0.23
		  & 0.72 & 760.55 & 14.00 & \cellcolor{gray!50} 0.10 \\
        \midrule
         Mean (Sd) & 0.76 (0.04) & 729 (9) & 12.4 (0.9) & 0.97 (0.19) & \cellcolor{gray!50} 0.76 (0.03) & \cellcolor{gray!50} 712 (10) & \cellcolor{gray!50} 9.87 (0.74) & 87 (0) & 0.26 (0.06) & 0.75 (0.02) & 759 (8) & 15.2 (2.1) & \cellcolor{gray!50} 0.10 (0.02) \\
		\bottomrule
	\end{tabular}}
	\caption{Performance metrics and their average across 5 folds for \texttt{PimaIndians}.}
	\label{tab:PimaIndians}
\end{table}

{ 
Table~\ref{tab:PimaIndians} shows that P-splines consistently achieves the best predictive performance across all five cross-validation folds, yielding the lowest BIC values and the smallest EDF while attaining the highest AUC. AKSSAM closely matches its predictive accuracy in most folds, although with slightly larger EDF values (approximately three additional degrees of freedom) and runtimes roughly twice as large. However, AKSSAM produces substantially sparser spline representations, reducing the initial basis from 87 basis functions to only 12–14 while maintaining comparable predictive performance.

GeDS is consistently outperformed by AKSSAM. Although it is the fastest method, it produces more complex models than AKSSAM while achieving lower predictive accuracy, resulting in less favorable overall performance.
}


\begin{figure}[htbp]
	\centering
	\includegraphics[width=0.9\textwidth]{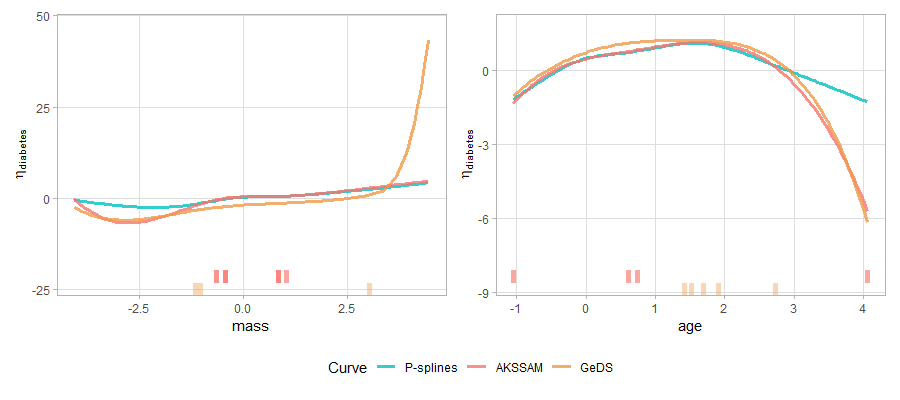}
	\caption{Mean estimated smooth functions for each covariate and distribution of the selected knots across the 5 folds in \texttt{PimaIndians}.}
	\label{fig: Covariates_PimaIndians}
\end{figure}

{ Figure~\ref{fig: Covariates_PimaIndians} displays the mean estimated smooth functions for each covariate, together with the distribution of the selected knots along the $x-$axis. The shading intensity reflects the frequency with which each knot is selected across the cross-validation folds, with darker regions indicating more frequently selected knots.}


{
Figure~\ref{fig: Covariates_PimaIndians} shows that P-splines produces smoother estimated functions than both AKSSAM and GeDS, particularly in the rightmost part of the effect of \texttt{age}. This behavior may be explained by the requirement that the knot sets selected by AKSSAM retain the 2q boundary knots, which imposes a lower bound on the achievable model complexity that is not present in P-splines. It is also noteworthy that AKSSAM and GeDS select different knot configurations for both covariates, suggesting that the optimal knot-selection problem may admit multiple solutions. Finally, AKSSAM exhibits a more consistent knot-selection pattern across the cross-validation folds than GeDS, whose estimated effect of \texttt{mass} deviates noticeably from those produced by the other methods.
}

\section{Conclusion}
\label{sec:5}


{
In this paper, we have addressed the problem of automatic knot selection in smooth generalized additive models by extending the recently proposed A-splines methodology \citep{goepp2025spline} to the additive setting. To avoid the computational burden of tuning multiple smoothing parameters through grid search, we proved that the conditions required by the Fellner--Schall algorithm \citep{wood2017generalized} hold in this context, enabling the simultaneous automatic estimation of the penalty parameters. This has resulted in the development of AKSSAM (Automatic Knot Selection in Smooth Additive Models), a fully automatic knot-selection algorithm for generalized additive models.
}


{
The proposed methodology has been evaluated for Gaussian, Poisson, and binomial responses through extensive simulation studies and real-data applications. Across them, AKSSAM has consistently achieved a favorable balance between predictive performance and model complexity. In informative settings, it has produced substantially sparser models than P-splines while maintaining comparable predictive accuracy, leading to improved BIC values. Compared with existing knot-selection methods, AKSSAM has consistently outperformed GeDS in terms of the sparsity–accuracy trade-off and dramatically reduced the number of retained B-spline basis functions relative to PSO, while requiring computational times that remained practical across all scenarios considered. Although P-splines remained preferable in the most challenging low-information settings, AKSSAM has demonstrated clear advantages whenever the available data were sufficiently informative. The experiments on real datasets have further confirmed that AKSSAM can drastically reduce the number of B-spline basis functions while preserving predictive performance, highlighting the effectiveness of pure knot selection as an alternative to traditional smoothing approaches.
}

{
The proposed methodology also opens several avenues for future research. From a methodological perspective, an immediate extension would be to broaden the class of supported response distributions beyond the exponential family. Other natural developments include extending AKSSAM to richer spline representations, such as thin-plate splines for bivariate smooth terms, which are widely used in geostatistics, or spline bases of higher dimensionality. It would also be interesting to combine automatic knot selection with additional model components, such as automatic variable selection \citep{kim2026simultaneous} or shape constraints \citep{NAVARROGARCIA,navarro2024mathematical}, yielding sparse, interpretable, and structurally constrained additive models. Another promising direction is the joint optimization of knot locations and roughness penalization, following the ideas of \citet{thielmann2025enhancing}.

Beyond methodological developments, the sparsity achieved by AKSSAM makes it particularly attractive for applications in optimization and machine learning. The substantial reduction in the number of basis functions observed throughout the computational experiments suggests that AKSSAM could provide compact surrogate models suitable for embedding objective functions or constraints into mathematical optimization problems, as in \citet{cuesta2025}. Other potential applications include automatic variable discretization through changepoint identification \citep{hollaway2024detection} and the determination of density mixtures within nonparametric estimation frameworks \citep{wang2024nonparametric}.
}



\section*{Disclosure statement}
The authors have no conflicts of interest to declare.

\section*{Data availability statement}
The data and code used to perform the computational experiments presented in Section~\ref{sec:4} are accessible in open-source repositories indicated in this article and are reproducible.

\section*{Acknowledgments}
This research benefited from the support the grants CNS2023-144260 and PID2022-13724OB-I00, funded by MICIU/AEI /10.13039/501100011033 and first also by European Union NextGenerationEU/PRTR

\bibliographystyle{apalike}

\begin{thebibliography}{}

\bibitem[Cuesta et~al., 2025]{cuesta2025}
Cuesta, M., D'Ambrosio, C., Durban, M., Guerrero, V., and Trindade, R.~S. (2025).
\newblock On leveraging constrained smooth additive regression models for global optimization.
\newblock arXiv:2510.14122v2.

\bibitem[De~Boor, 1978]{de1978practical}
De~Boor, C. (1978).
\newblock {\em A Practical Guide to Splines}, volume~27 of {\em Applied Mathematical Sciences}.
\newblock Springer New York.

\bibitem[Denison et~al., 1998]{denison1998automatic}
Denison, D., Mallick, B., and Smith, A. (1998).
\newblock Automatic {B}ayesian curve fitting.
\newblock {\em Journal of the Royal Statistical Society: Series B (Statistical Methodology)}, 60(2):333--350.

\bibitem[Dimitrova et~al., 2025]{GeDS}
Dimitrova, D.~S., Kaishev, V.~K., Lattuada, A., Guillén, E. L.~S., and Verrall, R.~J. (2025).
\newblock {\em GeDS: Geometrically Designed Spline Regression}.
\newblock R Core Team.
\newblock R package version 0.3.3.

\bibitem[Dimitrova et~al., 2023]{Dimitrova2023}
Dimitrova, D.~S., Kaishev, V.~K., Lattuada, A., and Verrall, R.~J. (2023).
\newblock Geometrically designed variable knot splines in generalized (non-)linear models.
\newblock {\em Applied Mathematics and Computation}, 436:127493.

\bibitem[Eilers and Marx, 1996]{eilers1996flexible}
Eilers, P.~H.~C. and Marx, B.~D. (1996).
\newblock Flexible smoothing with {B}-splines and penalties.
\newblock {\em Statistical Science}, 11(2):89--121.

\bibitem[Friedman, 1991]{friedman1991multivariate}
Friedman, J.~H. (1991).
\newblock Multivariate adaptive regression splines.
\newblock {\em The Annals of Statistics}, 19(1):1--67.

\bibitem[Frommlet and Nuel, 2016]{frommlet2016adaptive}
Frommlet, F. and Nuel, G. (2016).
\newblock An adaptive ridge procedure for {$L_0$} regularization.
\newblock {\em PloS ONE}, 11(2):e0148620.

\bibitem[Gaillard et~al., 2020]{R-opera}
Gaillard, P., Goude, Y., Plagne, L., Dubois, T., and Thieurmel, B. (2020).
\newblock {\em opera: Online Prediction by Expert Aggregation}.
\newblock R package version 1.2.0.

\bibitem[Giust and Jüttler, 2022]{GIUST2022114554}
Giust, A. and Jüttler, B. (2022).
\newblock Weighted isogeometric collocation based on spline projectors.
\newblock {\em Computer Methods in Applied Mechanics and Engineering}, 391:114554.

\bibitem[Goepp et~al., 2025]{goepp2025spline}
Goepp, V., Bouaziz, O., and Nuel, G. (2025).
\newblock Spline regression with automatic knot selection.
\newblock {\em Computational Statistics \& Data Analysis}, 202:108043.

\bibitem[Green, 1995]{green1995reversible}
Green, P.~J. (1995).
\newblock Reversible jump {M}arkov {C}hain {M}onte {C}arlo computation and {B}ayesian model determination.
\newblock {\em Biometrika}, 82(4):711--732.

\bibitem[Hastie and Tibshirani, 1986]{hastie1986generalized}
Hastie, T. and Tibshirani, R. (1986).
\newblock Generalized additive models.
\newblock {\em Statistical Science}, 1(3):297--310.

\bibitem[Hollaway and Killick, 2024]{hollaway2024detection}
Hollaway, M.~J. and Killick, R. (2024).
\newblock Detection of spatiotemporal changepoints: a generalised additive model approach.
\newblock {\em Statistics and Computing}, 34(5):162.

\bibitem[Jupp, 1978]{jupp1978approximation}
Jupp, D.~L. (1978).
\newblock Approximation to data by splines with free knots.
\newblock {\em SIAM Journal on Numerical Analysis}, 15(2):328--343.

\bibitem[Kim et~al., 2026]{kim2026simultaneous}
Kim, H., Bang, S., and Kang, J. (2026).
\newblock Simultaneous selection of knots and variables in additive models.
\newblock {\em Computational Statistics}, 41:10.

\bibitem[Leisch et~al., 2026]{mlbench}
Leisch, F., Dimitriadou, E., and Hornik, K. (2026).
\newblock {\em mlbench: Machine Learning Benchmark Problems}.
\newblock R Core Team.
\newblock R package version 2.1-8.

\bibitem[Magistris et~al., 2025]{magistris2025adaptive}
Magistris, A.~D., Romano, E., and Campagna, R. (2025).
\newblock Adaptive {G}eneralized {P}-{S}plines for {F}unctional {D}ata: {A} {S}tatistical {F}ramework via {B}lockwise {GSVD}.
\newblock {\em Statistics and Computing}, 35(6):203.

\bibitem[Navarro-Garc{\'\i}a et~al., 2024]{navarro2024mathematical}
Navarro-Garc{\'\i}a, M., Guerrero, V., and Durban, M. (2024).
\newblock A mathematical optimization approach to shape-constrained generalized additive models.
\newblock {\em Expert Systems with Applications}, 255:124654.

\bibitem[Navarro-García et~al., 2023]{NAVARROGARCIA}
Navarro-García, M., Guerrero, V., and Durban, M. (2023).
\newblock On constrained smoothing and out-of-range prediction using {P}-splines: A conic optimization approach.
\newblock {\em Applied Mathematics and Computation}, 441:127679.

\bibitem[Nelder and Wedderburn, 1972]{nelder1972generalized}
Nelder, J.~A. and Wedderburn, R.~W. (1972).
\newblock Generalized linear models.
\newblock {\em Journal of the Royal Statistical Society Series A: Statistics in Society}, 135(3):370--384.

\bibitem[Osborne et~al., 1998]{osborne1998knot}
Osborne, M.~R., Presnell, B., and Turlach, B.~A. (1998).
\newblock Knot selection for regression splines via the lasso.
\newblock {\em Computing Science and Statistics}, pages 44--49.

\bibitem[Rippe et~al., 2012]{rippe2012visualization}
Rippe, R.~C., Meulman, J.~J., and Eilers, P.~H.~C. (2012).
\newblock Visualization of genomic changes by segmented smoothing using an {$L_0$} penalty.
\newblock {\em PloS ONE}, 7(6):e38230.

\bibitem[Rodr{\'\i}guez-{\'A}lvarez et~al., 2019]{rodriguez2019estimation}
Rodr{\'\i}guez-{\'A}lvarez, M.~X., Durban, M., Lee, D.-J., and Eilers, P.~H. (2019).
\newblock On the estimation of variance parameters in non-standard generalised linear mixed models: Application to penalised smoothing.
\newblock {\em Statistics and Computing}, 29:483--500.

\bibitem[Seck and Denuit, 2022]{seck2022adaptive}
Seck, N.~A. and Denuit, M. (2022).
\newblock Adaptive {S}plines for {C}ontinuous {F}eatures in {R}isk {A}ssessment.
\newblock \textit{{CAS} {E}-{F}orum}, Summer. Available at: \url{https://forum.casact.org/article/38389-adaptive-splines-for-continuous-features-in-risk-assessment}.

\bibitem[Spiriti et~al., 2013]{spiriti2013knot}
Spiriti, S., Eubank, R., Smith, P.~W., and Young, D. (2013).
\newblock Knot selection for least-squares and penalized splines.
\newblock {\em Journal of Statistical Computation and Simulation}, 83(6):1020--1036.

\bibitem[Thielmann et~al., 2025]{thielmann2025enhancing}
Thielmann, A., Kneib, T., and S{\"a}fken, B. (2025).
\newblock Enhancing adaptive spline regression: an evolutionary approach to optimal knot placement and smoothing parameter selection.
\newblock {\em Journal of Computational and Graphical Statistics}, pages 1--16.

\bibitem[Tibshirani, 1996]{tibshirani1996regression}
Tibshirani, R. (1996).
\newblock Regression shrinkage and selection via the lasso.
\newblock {\em Journal of the Royal Statistical Society Series B: Statistical Methodology}, 58(1):267--288.

\bibitem[Wang et~al., 2024]{wang2024nonparametric}
Wang, X., Zhao, Y., Ni, Q., and Tang, S. (2024).
\newblock Nonparametric density estimation with nonuniform b-spline bases.
\newblock {\em Journal of Computational and Applied Mathematics}, 440:115648.

\bibitem[Wood, 2017]{wood2017generalizedbook}
Wood, S.~N. (2017).
\newblock {\em Generalized Additive Models: An Introduction with {R}}.
\newblock Chapman and Hall/CRC, Boca Raton, FL, 2 edition.

\bibitem[Wood, 2020]{wood2020inference}
Wood, S.~N. (2020).
\newblock Inference and computation with generalized additive models and their extensions.
\newblock {\em Test}, 29(2):307--339.

\bibitem[Wood, 2025]{mgcvR}
Wood, S.~N. (2025).
\newblock {\em mgcv: Mixed GAM Computation Vehicle with Automatic Smoothness Estimation}.
\newblock R Core Team.
\newblock R package version 1.9-3.

\bibitem[Wood and Fasiolo, 2017]{wood2017generalized}
Wood, S.~N. and Fasiolo, M. (2017).
\newblock A generalized {F}ellner-{S}chall method for smoothing parameter optimization with application to tweedie location, scale and shape models.
\newblock {\em Biometrics}, 73(4):1071--1081.

\bibitem[Yuan et~al., 2013]{yuan2013adaptive}
Yuan, Y., Chen, N., and Zhou, S. (2013).
\newblock Adaptive {B}-spline knot selection using multi-resolution basis set.
\newblock {\em IIE Transactions}, 45(12):1263--1277.

\end{thebibliography}

\newpage

\begin{appendices}
\section{Additional results for Section \ref{subsec: 4.1}} \label{app: Sim Setting}

This appendix presents additional results complementing the analyses of Simulations 1 and 2.

\subsection{{ Simulation} 1: Gaussian}


{
Figures~\ref{fig: Boxplot_Sim1_n300_SNR1}, \ref{fig: Boxplot_Sim1_n300_SNR2}, \ref{fig: Boxplot_Sim1_n300_SNR4}, \ref{fig: Boxplot_Sim1_n500_SNR1}, \ref{fig: Boxplot_Sim1_n500_SNR2},\ref{fig: Boxplot_Sim1_n500_SNR4}, \ref{fig: Boxplot_Sim1_n800_SNR1}, \ref{fig: Boxplot_Sim1_n800_SNR2} and \ref{fig: Boxplot_Sim1_n800_SNR4} present the boxplots of the performance metrics over the 100 replicates, together with the mean estimated smooth function for each of the six covariates (Figures~\ref{fig: Covariates_Sim1_n300_SNR1}, \ref{fig: Covariates_Sim1_n300_SNR2}, \ref{fig: Covariates_Sim1_n300_SNR4}, \ref{fig: Covariates_Sim1_n500_SNR1}, \ref{fig: Covariates_Sim1_n500_SNR2},\ref{fig: Covariates_Sim1_n500_SNR4}, \ref{fig: Covariates_Sim1_n800_SNR1}, \ref{fig: Covariates_Sim1_n800_SNR2} and \ref{fig: Covariates_Sim1_n800_SNR4}), for all combinations of $n\in\{300,500,800\}$ and $\mathrm{SNR}\in\{1,2,4\}$.
}

\subsubsection{$n=300,\;\text{SNR}=1$}
\begin{figure}[H]
    \centering
    \includegraphics[width=0.8\textwidth]{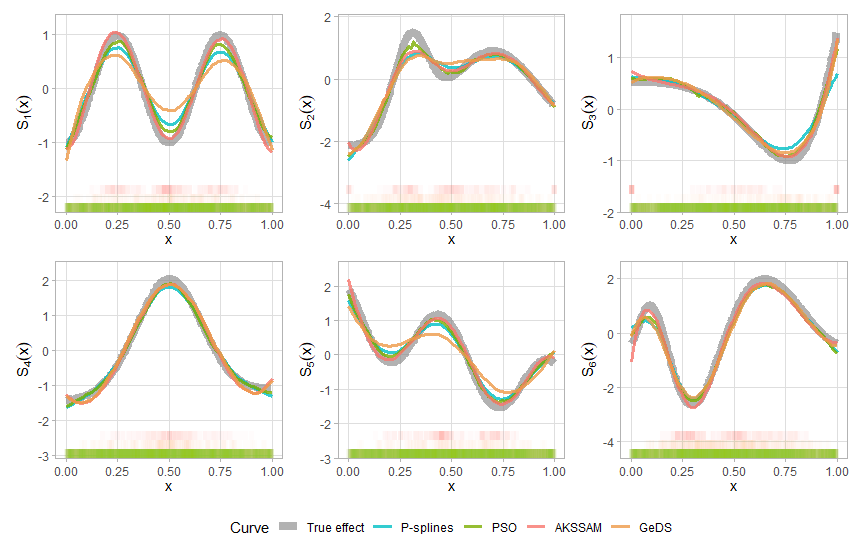}
    \caption{Mean estimated smooth functions for each covariate and distribution of the selected knots over 100 replicates in Simulation 1, $n=300,\;\text{SNR}=1$.}
    \label{fig: Covariates_Sim1_n300_SNR1}
\end{figure}
\begin{figure}[H]
    \centering
    \includegraphics[width=0.8\textwidth]{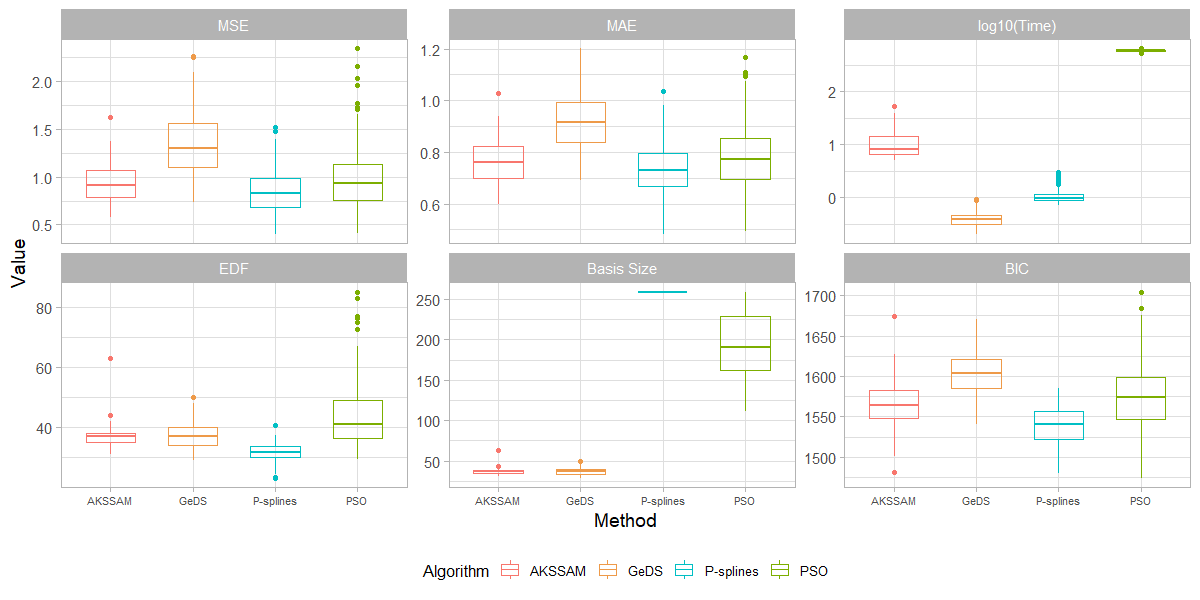}
    \caption{Boxplots of the performance metrics over 100 replicates for  Simulation 1, $n=300,\;\text{SNR}=1$.}
    \label{fig: Boxplot_Sim1_n300_SNR1}
\end{figure}

\subsubsection{$n=300,\;\text{SNR}=2$}
\begin{figure}[H]
    \centering
    \includegraphics[width=0.8\textwidth]{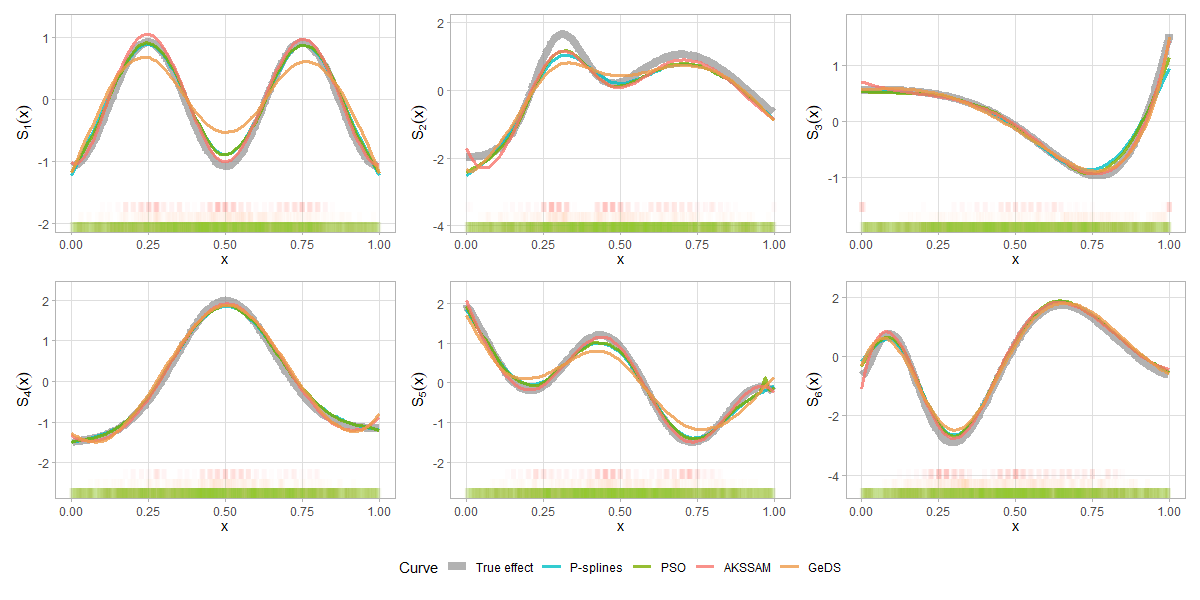}
    \caption{Mean estimated smooth functions for each covariate and distribution of the selected knots over 100 replicates in Simulation 1, $n=300,\;\text{SNR}=2$.}
    \label{fig: Covariates_Sim1_n300_SNR2}
\end{figure}
\begin{figure}[H]
    \centering
    \includegraphics[width=0.8\textwidth]{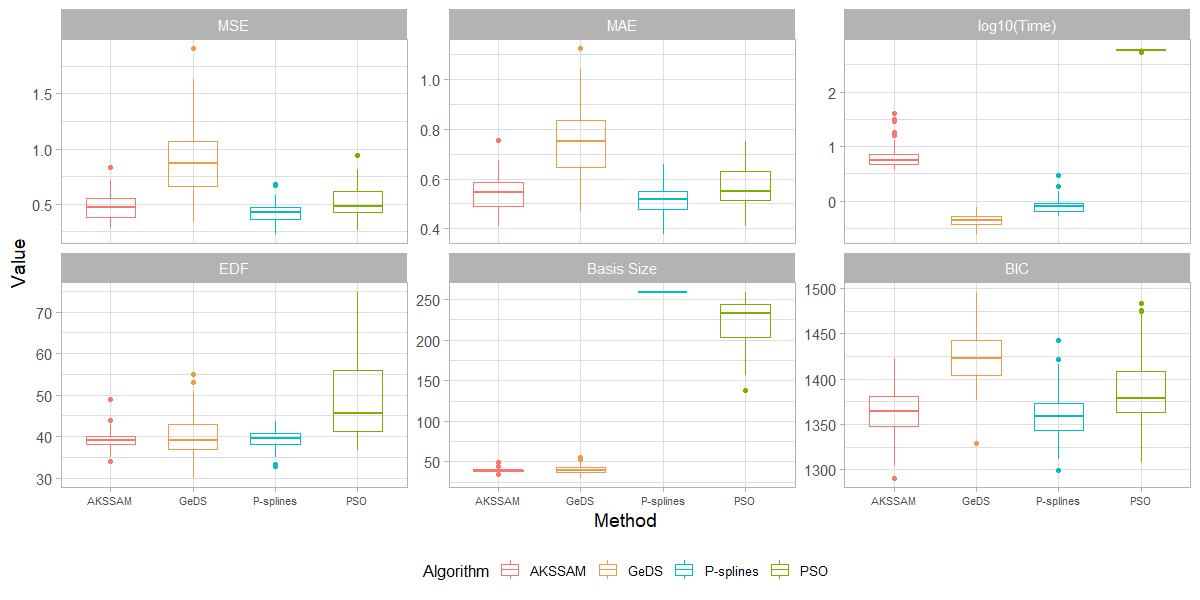}
    \caption{Boxplots of the performance metrics over 100 replicates for  Simulation 1, $n=300,\;\text{SNR}=2$.}
    \label{fig: Boxplot_Sim1_n300_SNR2}
\end{figure}

\subsubsection{$n=300,\;\text{SNR}=3$}
\begin{figure}[H]
    \centering
    \includegraphics[width=0.8\textwidth]{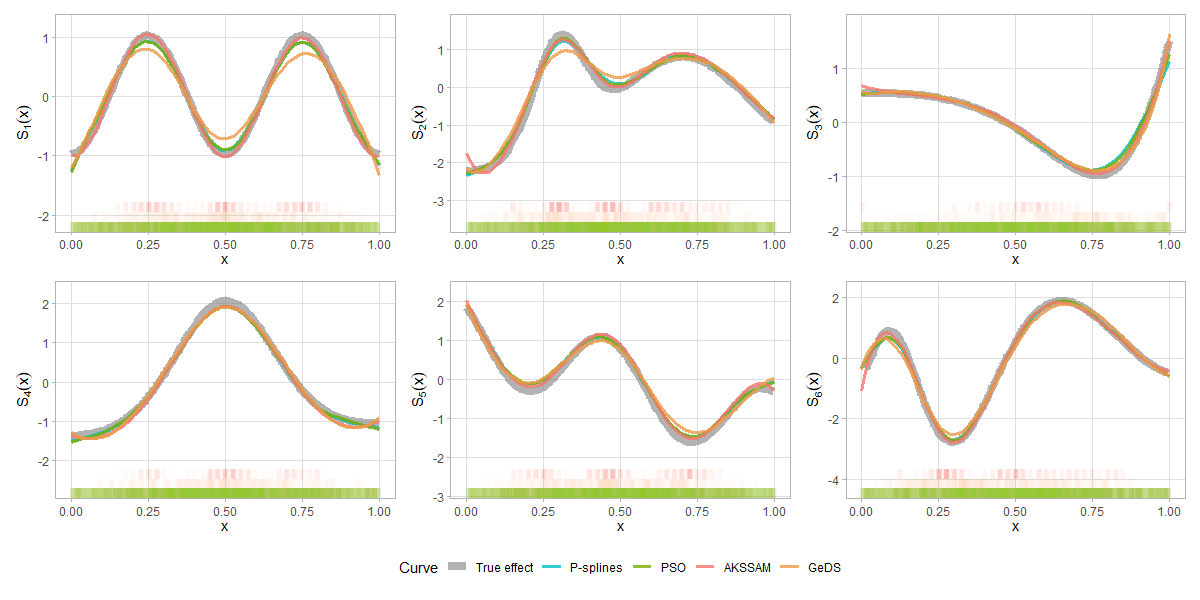}
    \caption{Mean estimated smooth functions for each covariate and distribution of the selected knots over 100 replicates in Simulation 1, $n=300,\;\text{SNR}=4$.}
    \label{fig: Covariates_Sim1_n300_SNR4}
\end{figure}
\begin{figure}[H]
    \centering
    \includegraphics[width=0.8\textwidth]{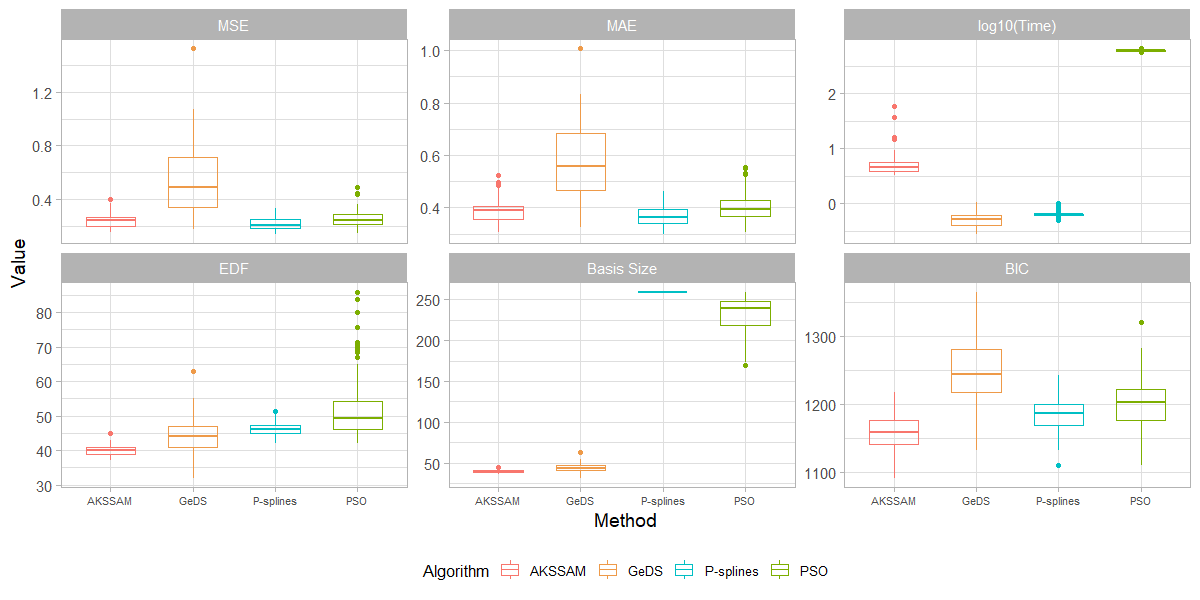}
    \caption{Boxplots of the performance metrics over 100 replicates for  Simulation 1, $n=300,\;\text{SNR}=4$.}
    \label{fig: Boxplot_Sim1_n300_SNR4}
\end{figure}

\subsubsection{$n=500,\;\text{SNR}=1$}
\begin{figure}[H]
    \centering
    \includegraphics[width=0.8\textwidth]{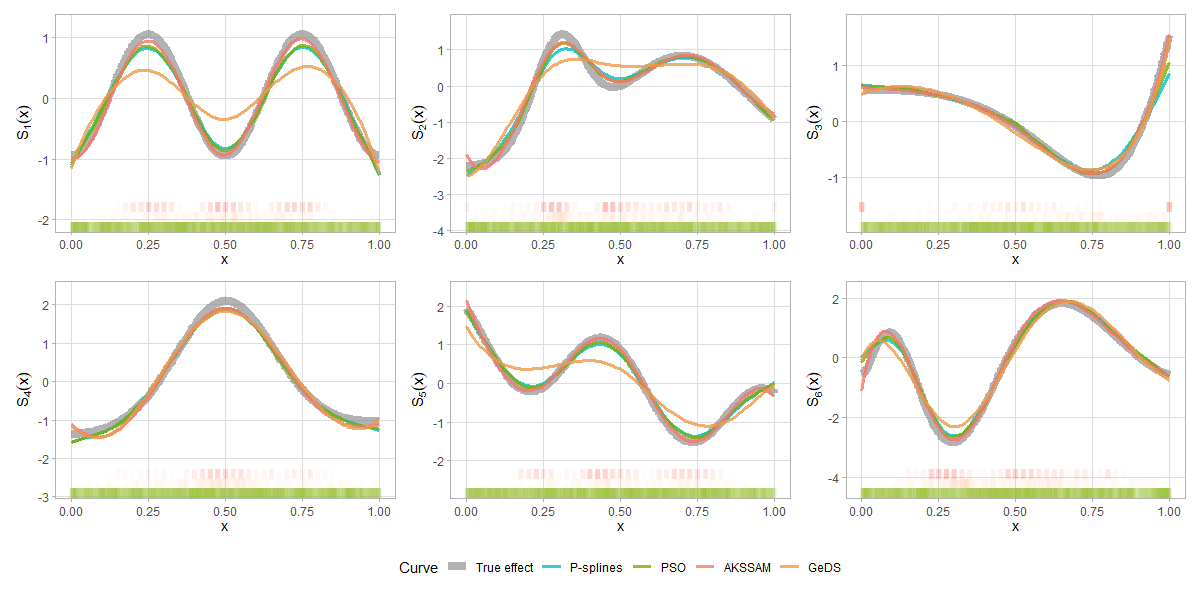}
    \caption{Mean estimated smooth functions for each covariate and distribution of the selected knots over 100 replicates in Simulation 1, $n=500,\;\text{SNR}=1$.}
    \label{fig: Covariates_Sim1_n500_SNR1}
\end{figure}
\begin{figure}[H]
    \centering
    \includegraphics[width=0.8\textwidth]{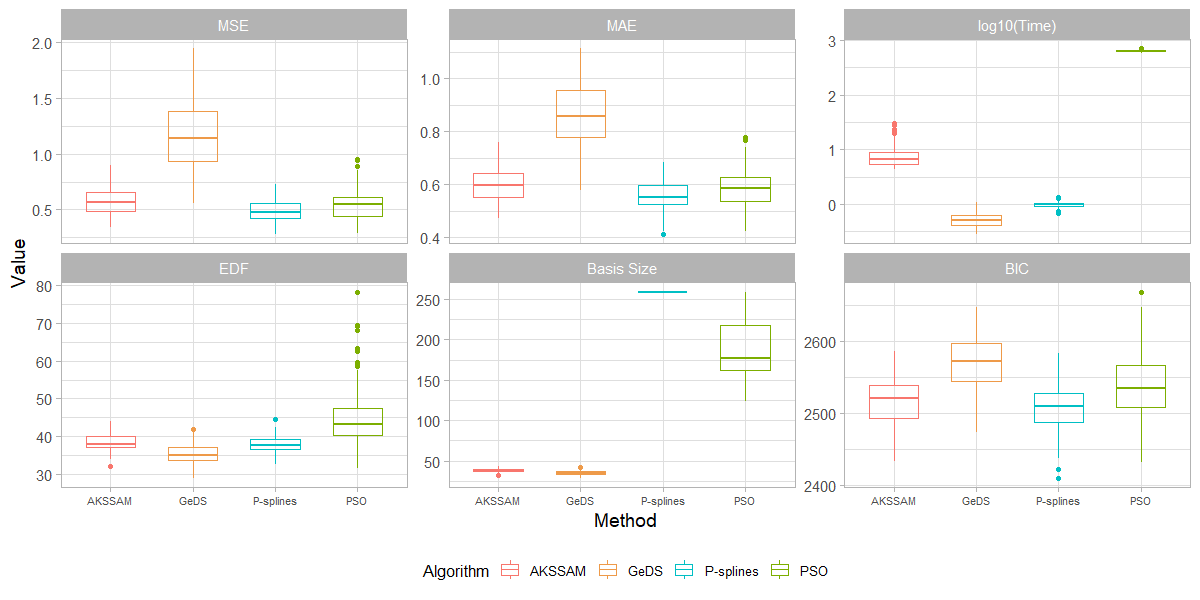}
    \caption{Boxplots of the performance metrics over 100 replicates for  Simulation 1, $n=500,\;\text{SNR}=1$.}
    \label{fig: Boxplot_Sim1_n500_SNR1}
\end{figure}

\subsubsection{$n=500,\;\text{SNR}=2$}
\begin{figure}[H]
    \centering
    \includegraphics[width=0.8\textwidth]{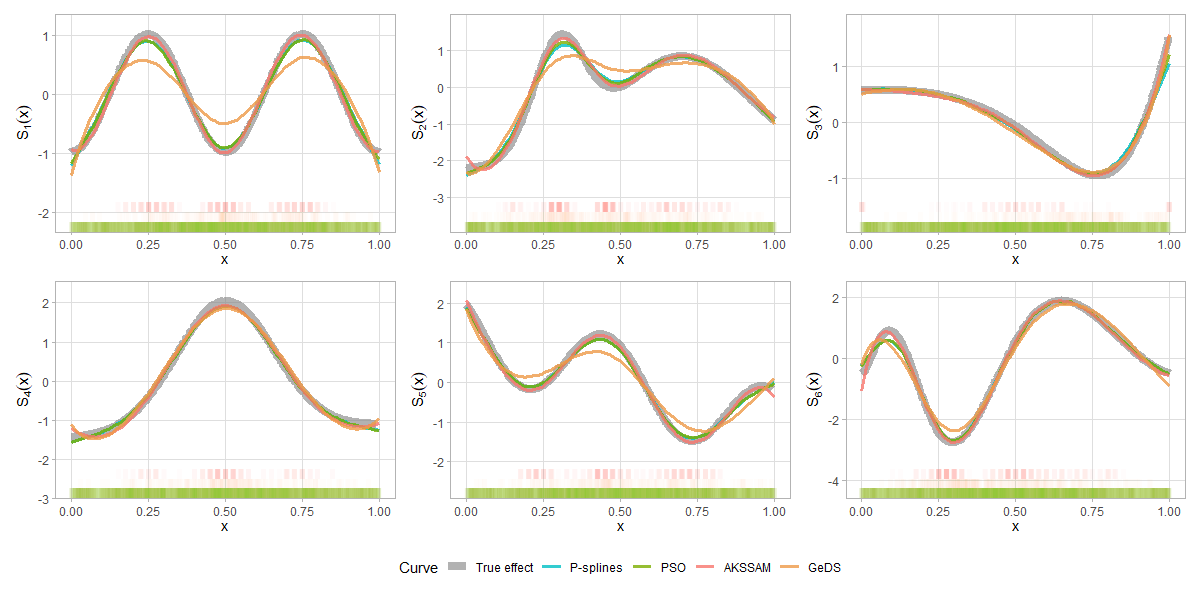}
    \caption{Mean estimated smooth functions for each covariate and distribution of the selected knots over 100 replicates in Simulation 1, $n=500,\;\text{SNR}=2$.}
    \label{fig: Covariates_Sim1_n500_SNR2}
\end{figure}
\begin{figure}[H]
    \centering
    \includegraphics[width=0.8\textwidth]{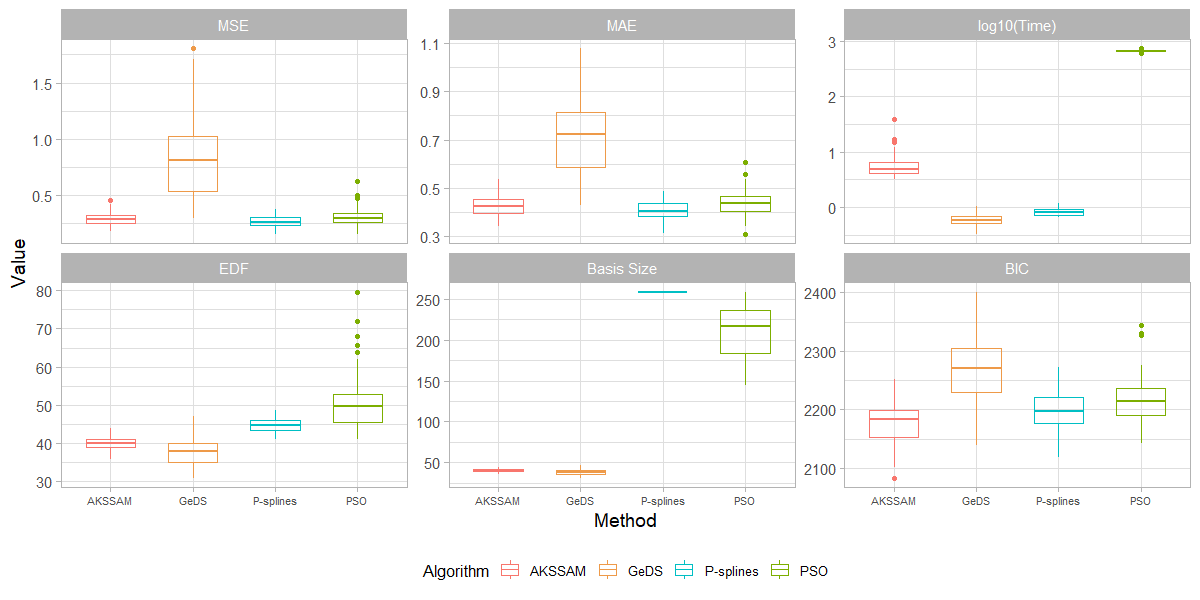}
    \caption{Boxplots of the performance metrics over 100 replicates for  Simulation 1, $n=500,\;\text{SNR}=2$.}
    \label{fig: Boxplot_Sim1_n500_SNR2}
\end{figure}

\subsubsection{$n=500,\;\text{SNR}=4$}
\begin{figure}[H]
    \centering
    \includegraphics[width=0.8\textwidth]{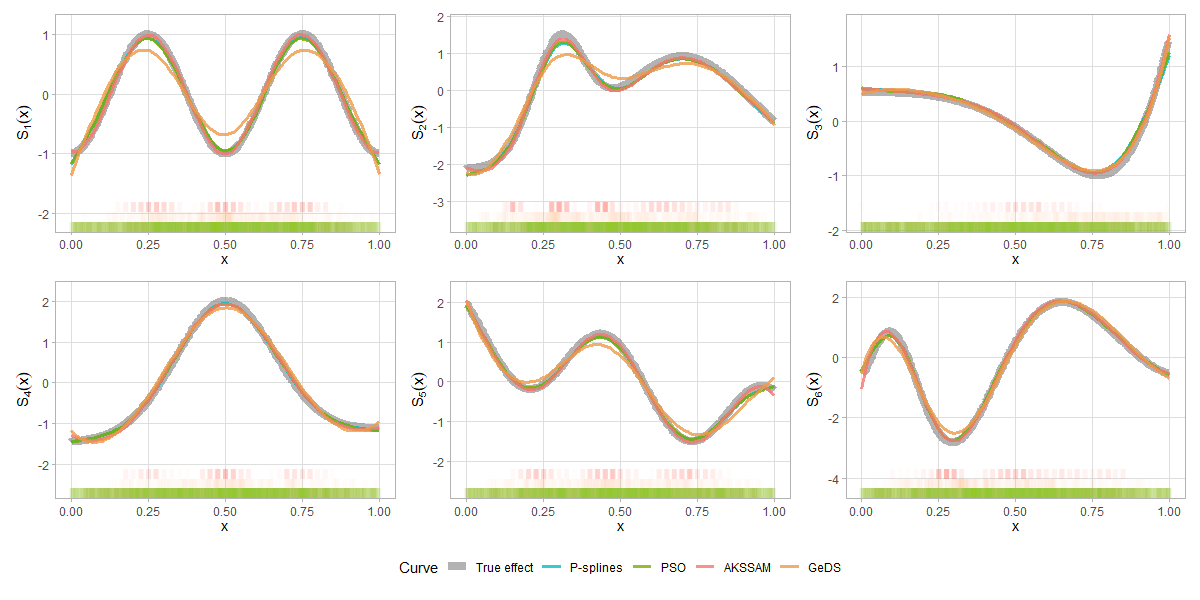}
    \caption{Mean estimated smooth functions for each covariate and distribution of the selected knots over 100 replicates in Simulation 1, $n=500,\;\text{SNR}=4$.}
    \label{fig: Covariates_Sim1_n500_SNR4}
\end{figure}
\begin{figure}[H]
    \centering
    \includegraphics[width=0.8\textwidth]{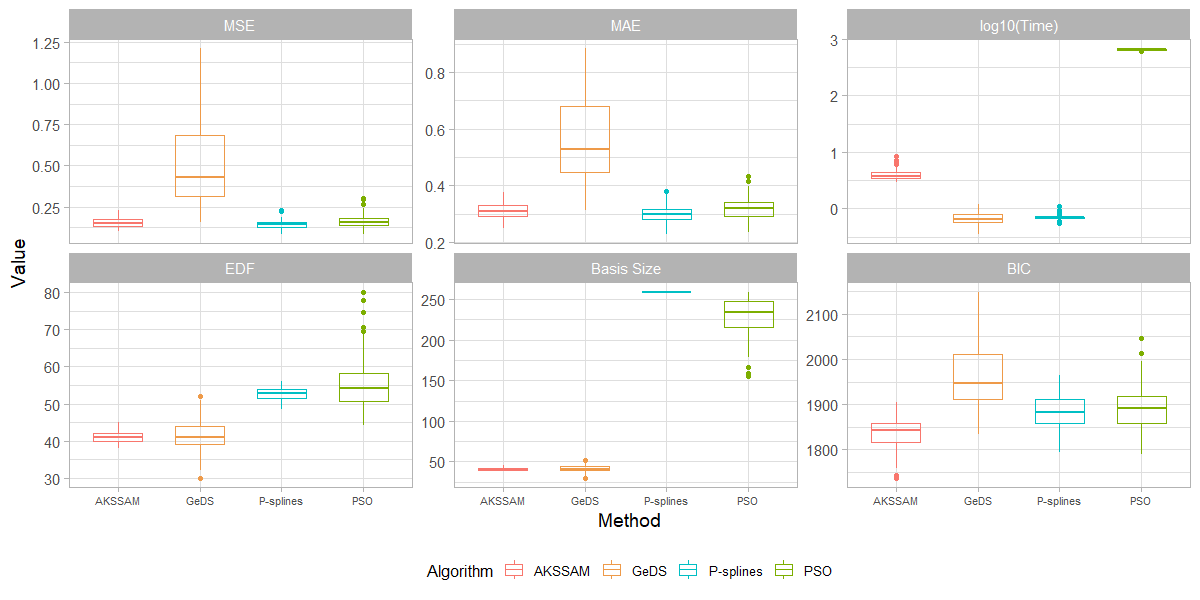}
    \caption{Boxplots of the performance metrics over 100 replicates for  Simulation 1, $n=500,\;\text{SNR}=4$.}
    \label{fig: Boxplot_Sim1_n500_SNR4}
\end{figure}

\subsubsection{$n=800,\;\text{SNR}=1$}
\begin{figure}[H]
    \centering
    \includegraphics[width=0.8\textwidth]{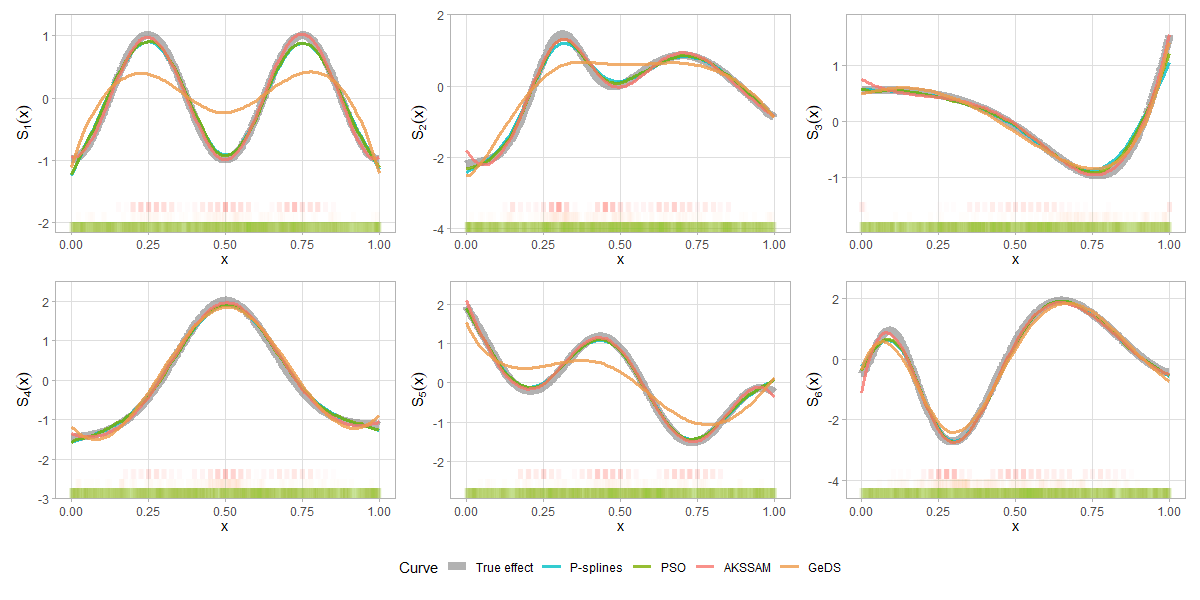}
    \caption{Mean estimated smooth functions for each covariate and distribution of the selected knots over 100 replicates in Simulation 1, $n=800,\;\text{SNR}=1$.}
    \label{fig: Covariates_Sim1_n800_SNR1}
\end{figure}
\begin{figure}[H]
    \centering
    \includegraphics[width=0.8\textwidth]{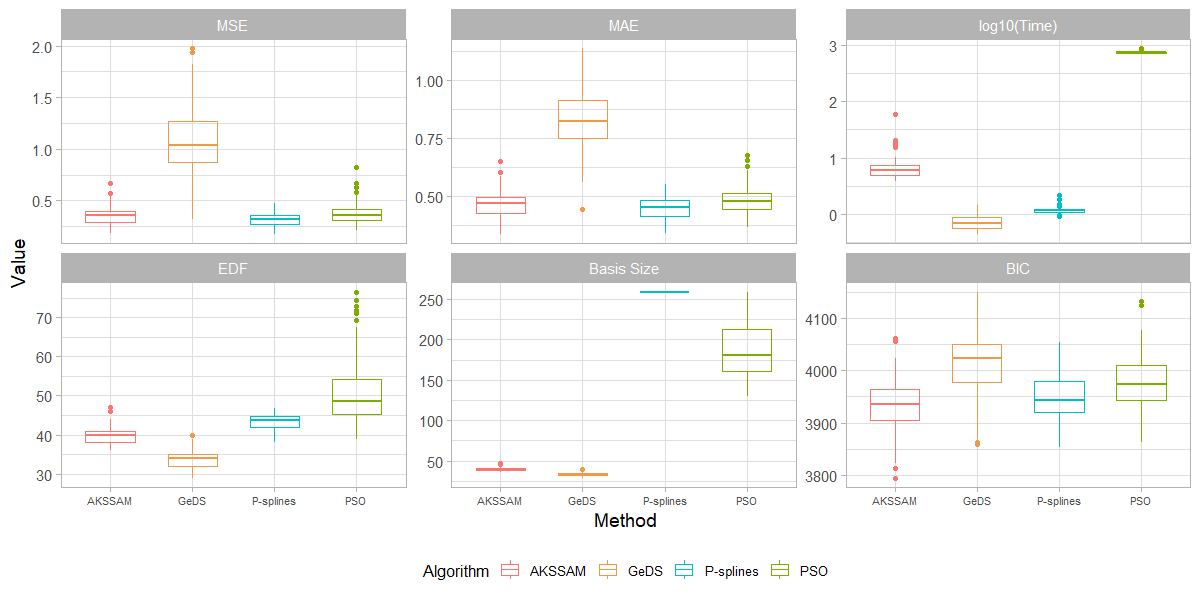}
    \caption{Boxplots of the performance metrics over 100 replicates for  Simulation 1, $n=800,\;\text{SNR}=1$.}
    \label{fig: Boxplot_Sim1_n800_SNR1}
\end{figure}

\subsubsection{$n=800,\;\text{SNR}=2$}
\begin{figure}[H]
    \centering
    \includegraphics[width=0.8\textwidth]{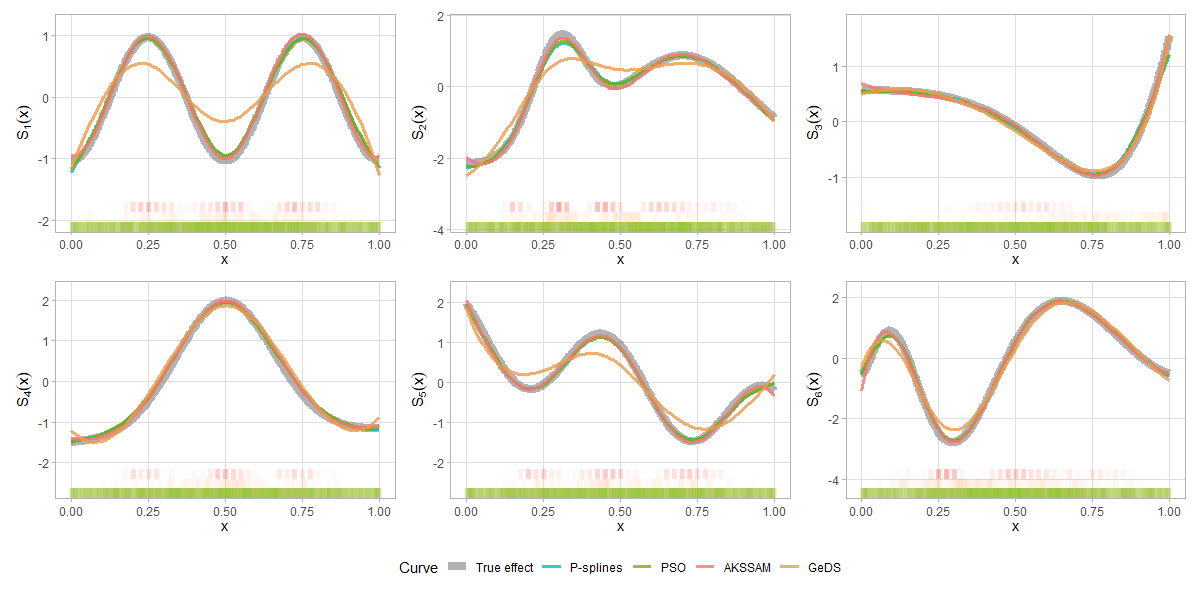}
    \caption{Mean estimated smooth functions for each covariate and distribution of the selected knots over 100 replicates in Simulation 1, $n=800,\;\text{SNR}=2$.}
    \label{fig: Covariates_Sim1_n800_SNR2}
\end{figure}
\begin{figure}[H]
    \centering
    \includegraphics[width=0.8\textwidth]{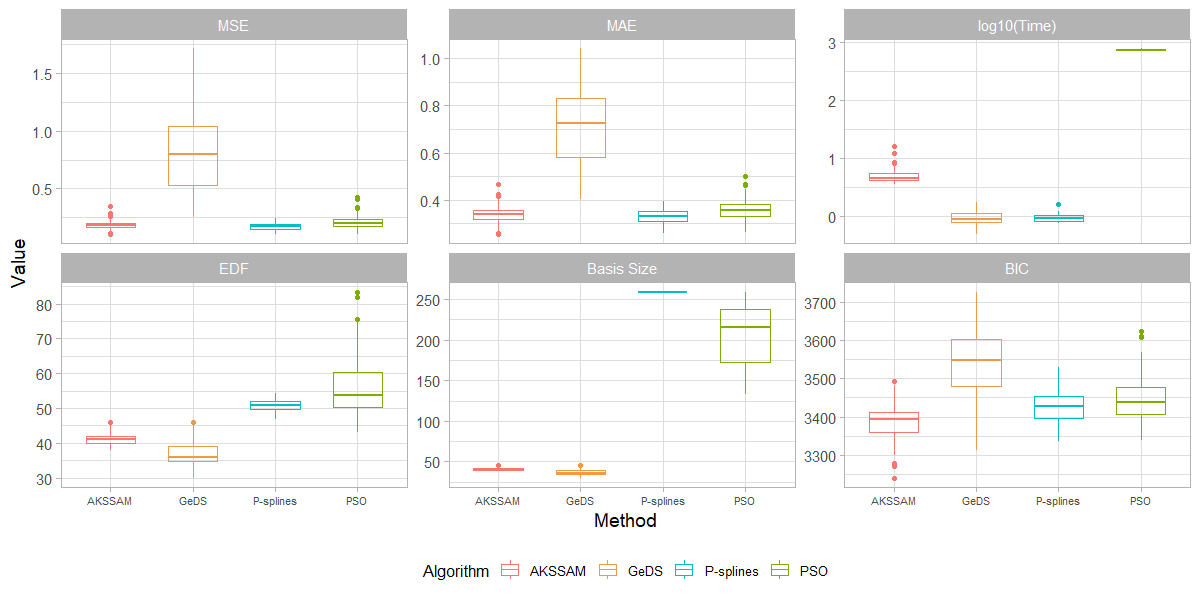}
    \caption{Boxplots of the performance metrics over 100 replicates for  Simulation 1, $n=800,\;\text{SNR}=2$.}
    \label{fig: Boxplot_Sim1_n800_SNR2}
\end{figure}

\subsubsection{$n=800,\;\text{SNR}=4$}
\begin{figure}[H]
    \centering
    \includegraphics[width=0.8\textwidth]{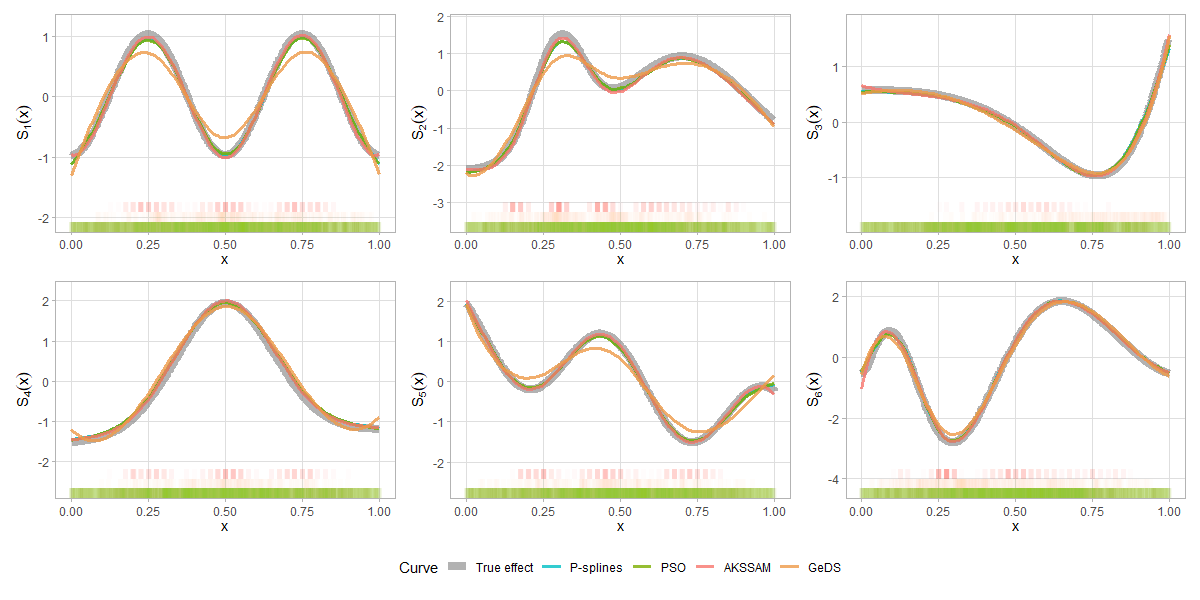}
    \caption{Mean estimated smooth functions for each covariate and distribution of the selected knots over 100 replicates in Simulation 1, $n=800,\;\text{SNR}=4$.}
    \label{fig: Covariates_Sim1_n800_SNR4}
\end{figure}
\begin{figure}[H]
    \centering
    \includegraphics[width=0.8\textwidth]{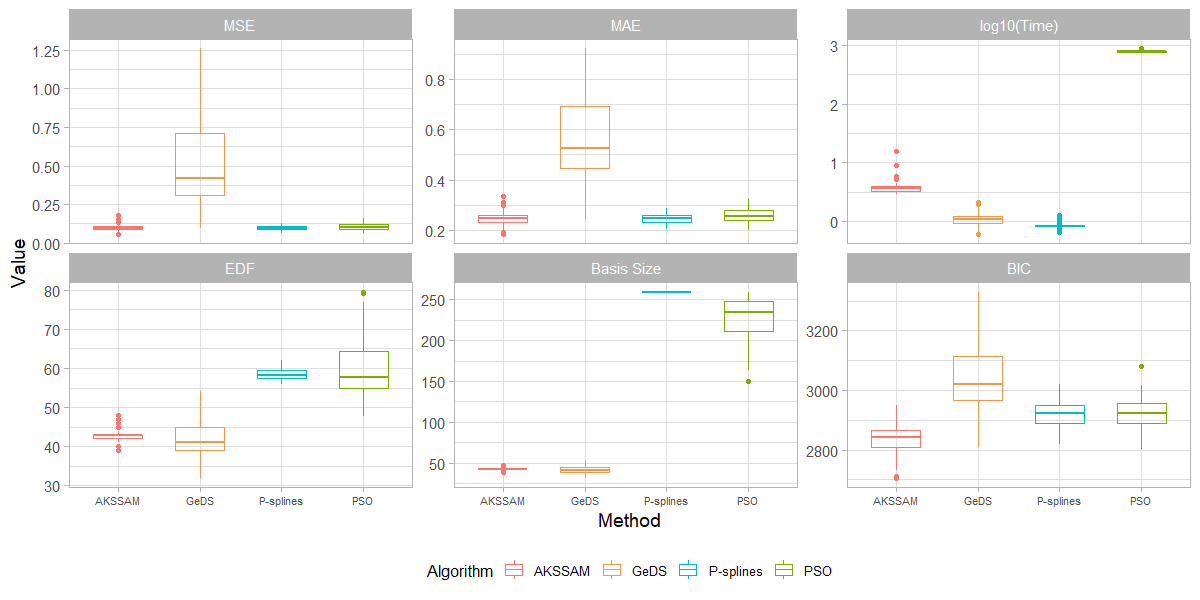}
    \caption{Boxplots of the performance metrics over 100 replicates for  Simulation 1, $n=800,\;\text{SNR}=4$.}
    \label{fig: Boxplot_Sim1_n800_SNR4}
\end{figure}

\subsection{{Simulation} 2: Poisson}

{Figures~\ref{fig: Boxplot_Sim2_n300}, \ref{fig: Boxplot_Sim2_n500} and \ref{fig: Boxplot_Sim2_n800} present the boxplots of the performance metrics over the 100 replicates, togehter with the mean estimated smooth fucntions for each of the three covariates (Figures~\ref{fig: Covariates_Sim2_n300}, \ref{fig: Covariates_Sim2_n500} and \ref{fig: Covariates_Sim2_n800}), for the different sample sizes $n\in\{300,500,800\}$.}


\subsubsection{$n=300$}

\begin{figure}[H]
    \centering
    \includegraphics[width=0.8\textwidth]{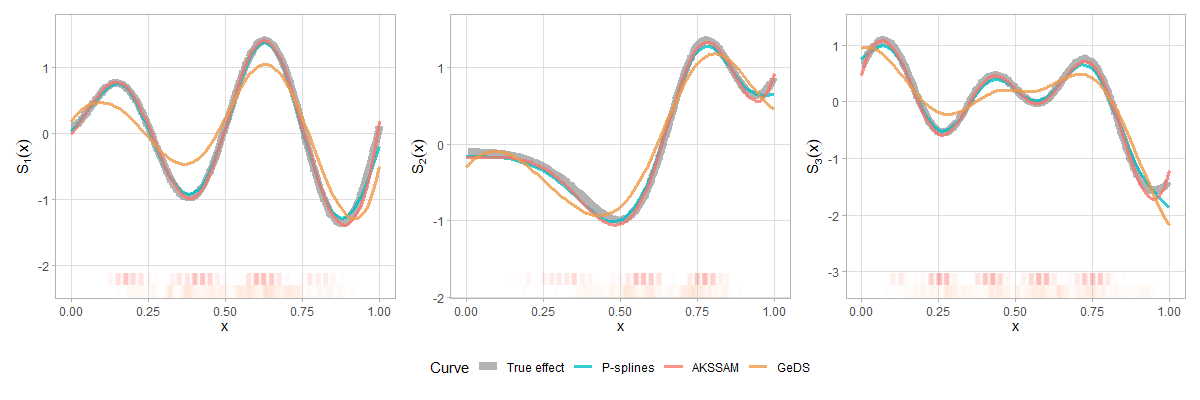}
    \caption{Mean estimated smooth functions for each covariate and distribution of the selected knots over 100 replicates in Simulation 2, $n=300$.}
    \label{fig: Covariates_Sim2_n300}
\end{figure}
\begin{figure}[H]
    \centering
    \includegraphics[width=0.8\textwidth]{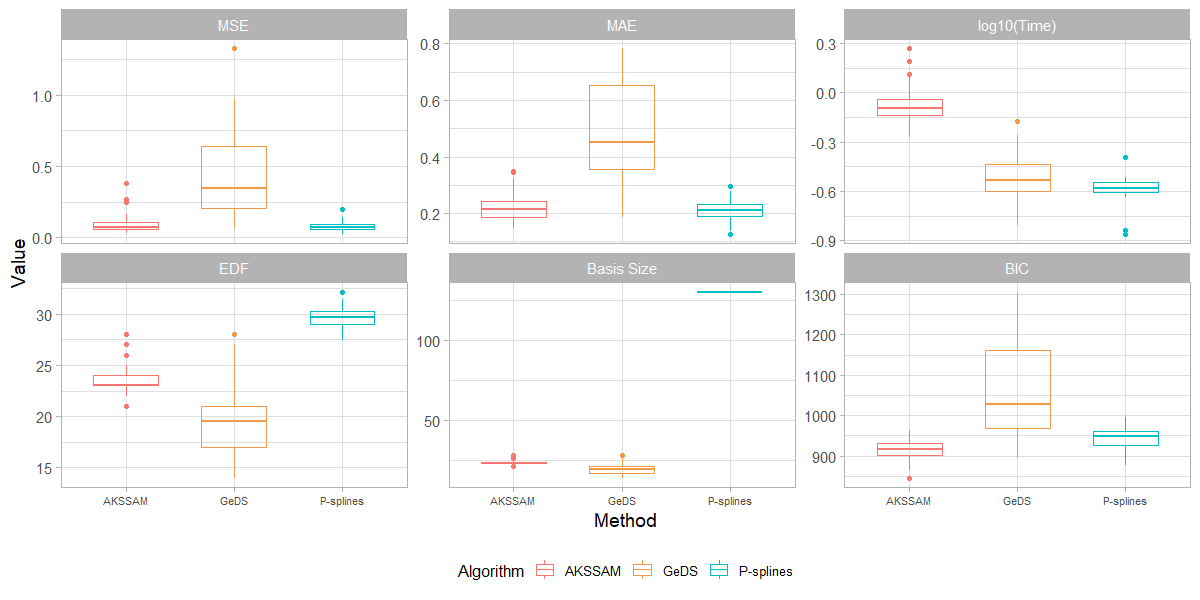}
    \caption{Boxplots of the performance metrics over 100 replicates for  Simulation 2, $n=300$.}
    \label{fig: Boxplot_Sim2_n300}
\end{figure}

\subsubsection{$n=500$}

\begin{figure}[H]
    \centering
    \includegraphics[width=0.8\textwidth]{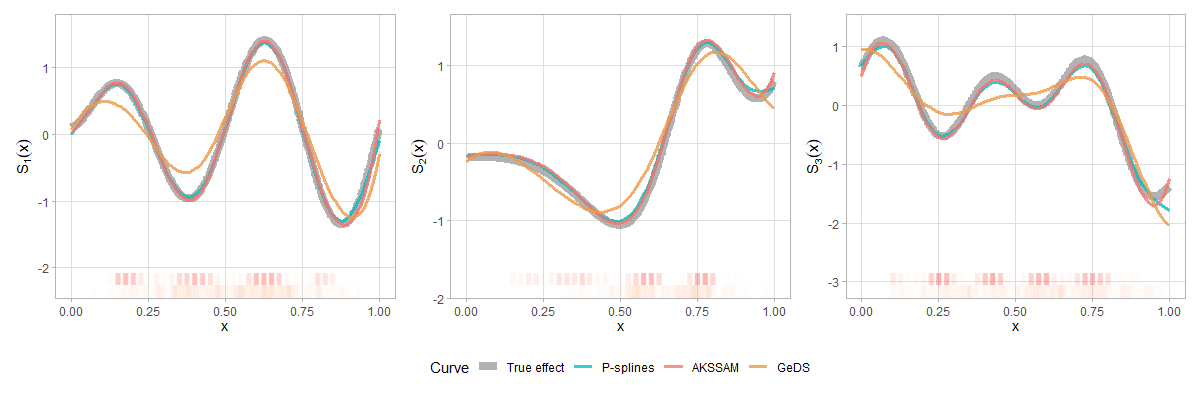}
    \caption{Mean estimated smooth functions for each covariate and distribution of the selected knots over 100 replicates in Simulation 2, $n=500$.}
    \label{fig: Covariates_Sim2_n500}
\end{figure}
\begin{figure}[H]
    \centering
    \includegraphics[width=0.8\textwidth]{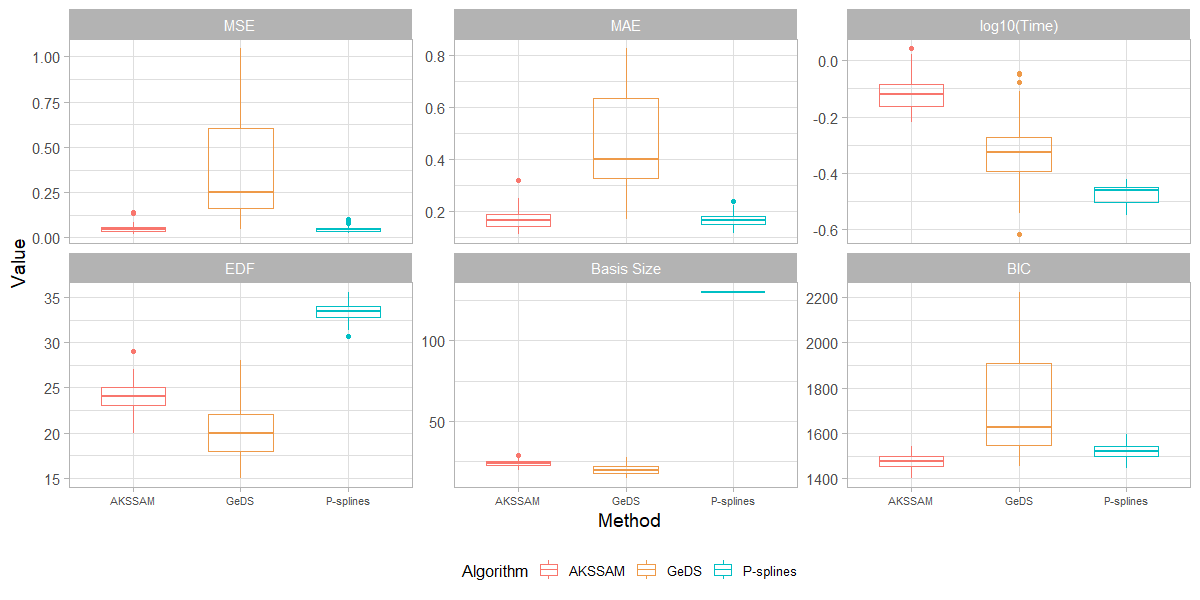}
    \caption{Boxplots of the performance metrics over 100 replicates for  Simulation 2, $n=500$.}
    \label{fig: Boxplot_Sim2_n500}
\end{figure}

\subsubsection{$n=800$}

\begin{figure}[H]
    \centering
    \includegraphics[width=0.8\textwidth]{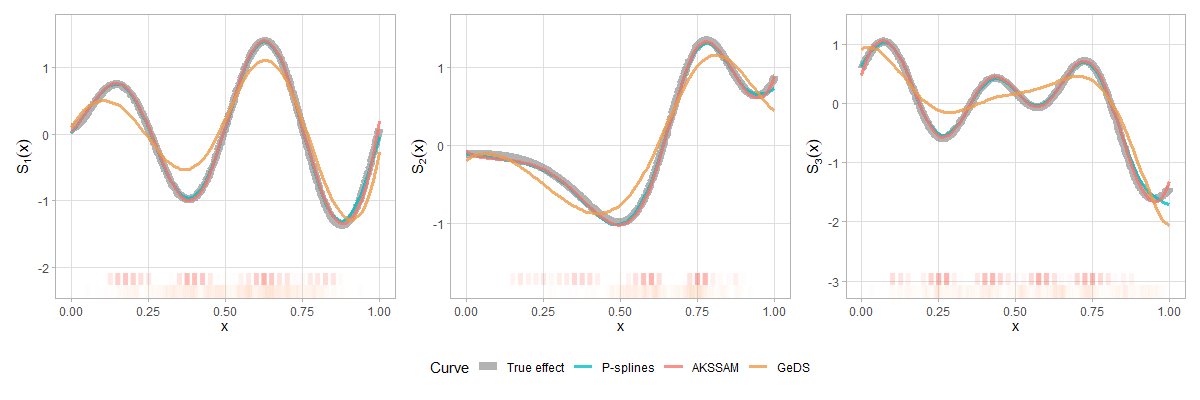}
    \caption{Mean estimated smooth functions for each covariate and distribution of the selected knots over 100 replicates in Simulation 2, $n=800$.}
    \label{fig: Covariates_Sim2_n800}
\end{figure}
\begin{figure}[H]
    \centering
    \includegraphics[width=0.8\textwidth]{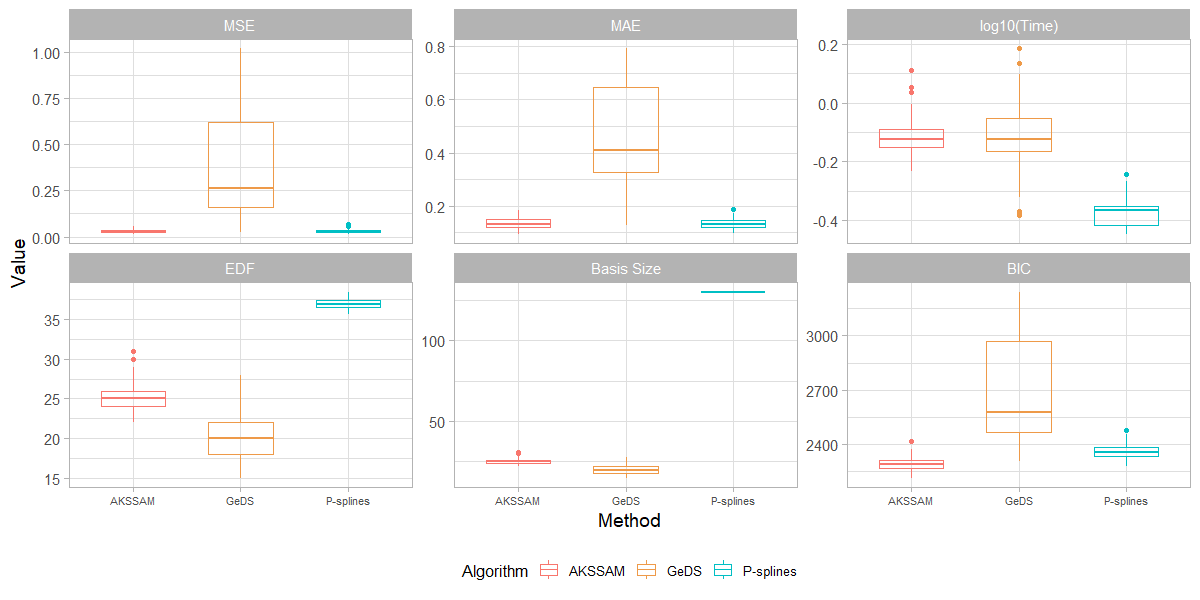}
    \caption{Boxplots of the performance metrics over 100 replicates for  Simulation 2, $n=800$.}
    \label{fig: Boxplot_Sim2_n800}
\end{figure}

\section{Additional results for Section \ref{subsec: 4.2}} \label{app: Real Setting}

This appendix presents additional results complementing the analyses of the real world datasets in Section \ref{subsec: 4.2}, namely \texttt{electric\_load} and \texttt{PimaIndians} datasets.

\subsection{\texttt{electric\_load} dataset}

Figure~\ref{fig: Boxplot_ElectricLoad} depicts the boxplots of the 
performance metrics across the five folds.

\begin{figure}[H]
    \centering
    \includegraphics[width=0.8\textwidth]{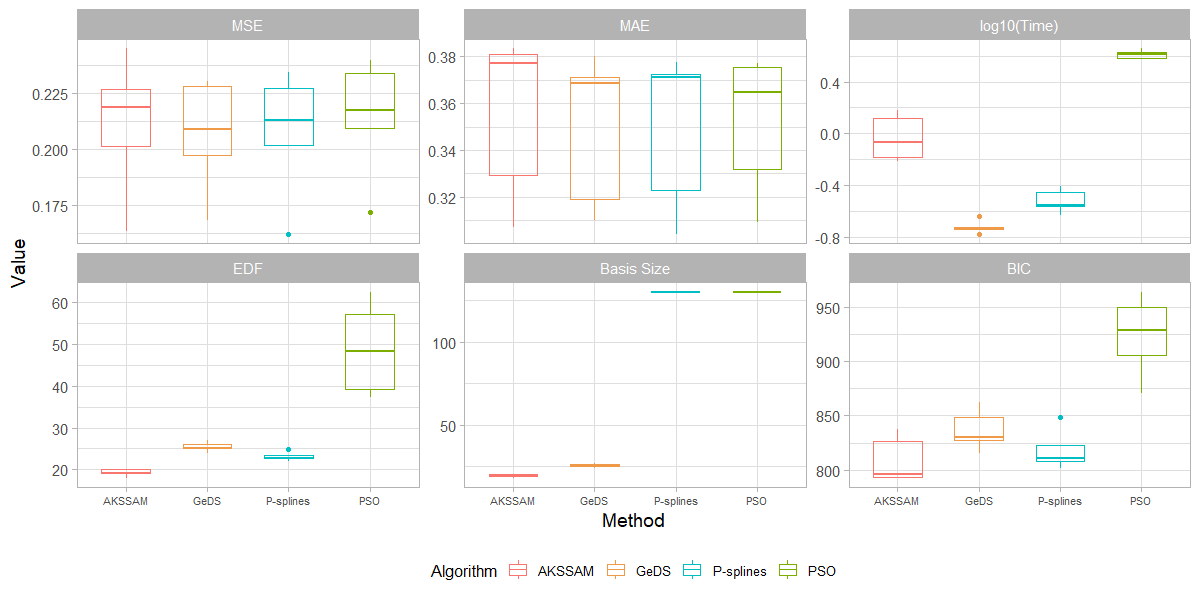}
    \caption{Boxplot of the performance metrics across the five folds in \texttt{electric\_load}.}
    \label{fig: Boxplot_ElectricLoad}
\end{figure}

\subsection{\texttt{PimaIndians} dataset}

Similarly to the \texttt{electric\_road} dataset, Figure~\ref{fig: Boxplot_PimaIndians} depict the boxplots of the performance metrics across the five folds


\begin{figure}[H]
    \centering
    \includegraphics[width=0.8\textwidth]{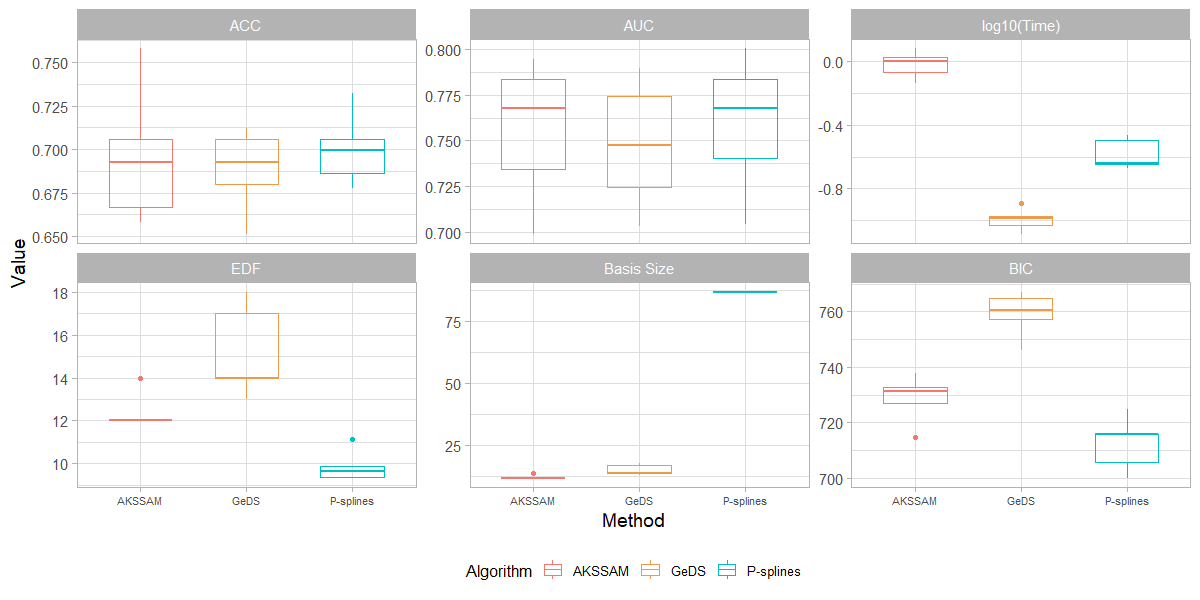}
    \caption{Boxplot of the performance metrics across the five folds in \texttt{PimaIndians}.}
    \label{fig: Boxplot_PimaIndians}
\end{figure}

\end{appendices}

\end{document}